%% file: icml2026/main_ICML.tex
\documentclass{article}

\makeatletter
\def\input@path{{icml2026/}}
\makeatother

\usepackage{microtype}
\usepackage{graphicx}
\usepackage{subcaption}
\usepackage{booktabs} 

\usepackage{hyperref}

 \usepackage[accepted]{icml2026}

\usepackage{amsmath, amsfonts, amssymb, amsthm, bbm,tikz}
\usepackage{appendix} 
\usepackage{mathtools}
\usepackage{empheq}
\usepackage{xcolor}
\usepackage{enumitem}
\usepackage{natbib}

\DeclareFontFamily{U}{rsfs}{\skewchar\font127}
\DeclareFontShape{U}{rsfs}{m}{n}{<-6>rsfs5<6-8>rsfs7<8->rsfs10}{}
\DeclareSymbolFont{rsfs}{U}{rsfs}{m}{n}
\DeclareSymbolFontAlphabet{\mathscr}{rsfs}

\makeatletter
\renewcommand\paragraph{\@startsection{paragraph}{4}{\z@}%
  {0.65\baselineskip}%
  {-0.75em}%
  {\normalfont\normalsize\bfseries}}
\makeatother

\DeclareMathOperator*{\argmax}{argmax}
\DeclareMathOperator*{\argmin}{argmin}

\newcommand{\cD}{\mathscr{D}}

\newcommand{\R}{\mathbb{R}}
\newcommand{\norm}[1]{\left\lVert#1\right\rVert}
\newcommand{\inner}[2]{\left\langle #1,#2\right\rangle}
\newcommand{\ip}[2]{\left\langle #1,#2\right\rangle}
\newcommand{\Hf}{\mathcal{H}_f}
\newcommand{\Hg}{\mathcal{H}_g}
\newcommand{\Hs}{\mathcal{H}^\star}
\newcommand{\X}{\mathcal X}

\newcommand{\W}{\mathcal W}
\newcommand{\Z}{\mathcal Z}
\newcommand*\circled[1]{\tikz[baseline=(char.base)]{
            \node[shape=circle,draw,inner sep=2pt] (char) {#1};}}
\newcommand{\E}{\mathbb{E}}
\newcommand{\Var}{\mathrm{Var}}

\usepackage[capitalize,noabbrev]{cleveref}

\theoremstyle{plain}
\newtheorem{theorem}{Theorem}[section]

\newtheorem{lemma}[theorem]{Lemma}
\newtheorem{corollary}[theorem]{Corollary}
\theoremstyle{definition}

\newtheorem{assumption}[theorem]{Assumption}
\theoremstyle{remark}
\newtheorem{remark}[theorem]{Remark}

\usepackage{authblk}

\usepackage[textsize=tiny]{todonotes}

\icmltitlerunning{Coupled Training with Privileged Information and Unlabeled Data}

\begin{document}
\raggedbottom

\twocolumn[
  \icmltitle{Coupled Training with Privileged Information and Unlabeled Data}



  \icmlsetsymbol{equal}{*}

  \begin{icmlauthorlist}
    \icmlauthor{Jiahao Shi}{one}
    \icmlauthor{Omar Hagrass}{two}
    \icmlauthor{Jason M. Klusowski}{two}
  \end{icmlauthorlist}

  \icmlaffiliation{one}{Department of Electrical and Computer Engineering, Princeton University}
  \icmlaffiliation{two}{Department of Operations Research and Financial Engineering, Princeton University}
    \icmlcorrespondingauthor{Jiahao Shi}{js6028@princeton.edu}
  \icmlkeywords{Machine Learning, ICML}

  \vskip 0.3in
]



\printAffiliationsAndNotice{}  

\begin{abstract}
In many prediction problems, we have extra information during training (for example, measurements that are expensive or slow to collect) that will not be available when the model is deployed. A common strategy is to first train a model that uses all training information, then use its predictions on unlabeled examples to train a second model that only uses the inputs available at test time. However, when the extra training-only information is weak or noisy, this Two-Stage approach can mislead the deployment model and even hurt accuracy. We propose a joint training method that learns the two models together, so the deployment model can benefit from the extra information only when it actually helps, instead of inheriting its mistakes. We provide guarantees that describe when joint training improves prediction accuracy and analyze a simple alternating training algorithm for large, high-dimensional models. Experiments on synthetic data and real-world prediction tasks show that our approach avoids these failures and robustly outperforms standard Two-Stage baselines.
\end{abstract}

\section{Introduction}\label{sec:intro}

Many modern learning settings provide extra information during training that cannot be used at deployment. Alongside the primary features $X$ (e.g., an image or routine clinical measurements) and responses $Y$ (e.g., a continuous outcome such as disease severity or a lab value), training examples may also include privileged features $W$ (e.g., expensive biomarkers, specialist assessments). Because $W$ may be unavailable, costly, or impractical to obtain at test time, the deployed predictor must depend only on $X$. At the same time, a large fraction of the training data may contain $W$ but not the true labels. The goal is therefore to use $W$ to improve training while still producing a final predictor that uses only $X$. This setting is motivated by many applications, including medical domains where reliable labels are expensive \citep{Rajkomar2019MLMedicine, Fri+19}, forecasting and longitudinal studies where intermediate or future measurements are observed only during training \citep{Hogan2004Dropout}, as well as transfer learning \citep{Zhuang2020TransferLearningSurvey} and distribution shift settings \citep{Koh2021WILDS}.

\paragraph{Problem Formulation.} We formalize this setting as follows. Let $\mathcal Z = \mathcal X \times \mathcal W$
and define $Z=(X,W)\in\mathcal Z$. We observe $n$ labeled samples
\[
\cD_L = \{(Z_i, Y_i)\}_{i=1}^n \stackrel{\mathrm{i.i.d.}}{\sim} P_{X,W,Y},
\]
and $m \coloneqq N-n \gg n$ unlabeled samples
\[
\cD_U = \{Z_j\}_{j=n+1}^{N} \stackrel{\mathrm{i.i.d.}}{\sim} P_{X,W},
\]
where $P_{X,W,Y}$ is a distribution on $\mathcal X \times \mathcal W \times \mathbb R$
and $P_{X,W}$ is its marginal over $\mathcal Z$.

Our goal is to estimate a prediction rule that minimizes the expected risk
$$
\E [\ell\left(Y,f(X)\right)],
$$
where $\ell$ is a general loss function. When $\ell$ is the square loss, i.e., $\ell(y,y')=(y-y')^2$, the regression function $$
\mu(x)\coloneqq \E[Y\mid X=x],
$$ uniquely minimizes the squared risk
$
\E\big[(Y-f(X))^2\big]
$ over measurable functions $f:\mathcal X\to\mathbb R$, assuming $\E\big[Y^2\big]<\infty$.

Given this setup, the central question is whether we can use the privileged information $W$ to achieve better performance than training on $X$ alone, while still ensuring that the deployed predictor depends only on $X$.

\paragraph{Two-Stage Procedure.} One common way to leverage privileged data is through a Two-Stage procedure. In the first stage, we estimate a pseudo-response function $\hat g$, which we refer to as the \emph{rich-view model}, using all available labeled samples $\{(Z_i, Y_i)\}_{i=1}^n$ by minimizing the empirical risk
$$
\hat{g} \in \argmin_{g \in \mathcal{G}}\frac{1}{n} \sum_{i=1}^{n} \ell\left(Y_i,g(Z_i)\right),
$$
where $\mathcal{G}$ is a collection of functions defined on $\mathcal{Z}$.
The fitted function $\hat{g}$ is then used to transfer knowledge by imputing pseudo-responses $\hat{Y}_j = \hat{g}(Z_j)$ (or $\hat{Y}_j = \mathbf{1}(\hat{g}(Z_j) \geq 1/2)$ for 0/1 binary classification) for the unlabeled samples $\{Z_j\}_{j=n+1}^{N}$.  
In the second stage, we estimate the target function $\hat{f}$, which we refer to as the \emph{deployment model}, from the combined dataset $\{(X_i, Y_i)\}_{i=1}^n \cup \{(X_j, \hat{Y}_j)\}_{j=n+1}^{N}$ by minimizing
\begin{align*}
\hat{f} \in \argmin_{f \in \mathcal{F}}\frac{1}{N} \Bigg(
&\sum_{i=1}^{n} \ell\left(Y_i,f(X_i)\right) \\
&+ \sum_{j=n+1}^{N} \ell\left(\hat Y_j,f(X_j)\right)
\Bigg),
\end{align*}
where $\mathcal{F}$ is a collection of functions defined on $\mathcal{X}$.
This general strategy was studied in \citet{hou2023surrogate} for specific parametric models and extended to more general nonparametric settings in \citet{xia2024prediction}.

Although it is appealingly simple, this Two-Stage pseudo-labeling strategy can be fragile when the privileged features $W$ provide only weak or noisy information about the response. In such regimes, the first-stage estimator $\hat g$ may produce pseudo-responses that are poorly aligned with the true regression function $\mu$, and these errors are then propagated to the second stage through the imputed labels. As a consequence, the deployment model $\hat f$ may be biased toward artifacts introduced by $\hat g$, leading to degraded performance relative to training on the labeled data alone. This phenomenon, often referred to as \emph{negative transfer}, is especially pronounced when the predictive advantage of $(X,W)$ over $X$ is marginal, or when the privileged variables contain high-dimensional nuisance components that overwhelm their useful signal. In such settings, naive Two-Stage procedures can perform strictly worse than supervised learning using only the labeled samples, despite access to abundant unlabeled data \citep{xia2024prediction}. These limitations motivate approaches that more carefully control the influence of privileged information and adaptively balance the contribution of $W$ during training.

\paragraph{Our Contributions.} We summarize our main contributions as follows.
\begin{itemize}
\item \textbf{Coupled Training Framework.} We propose a unified estimation framework that jointly learns the deployment and rich-view models. By flexibly calibrating the influence of privileged information, our approach effectively mitigates the negative transfer often observed in Two-Stage pseudo-labeling methods.
\item \textbf{Efficient Algorithms.} We develop efficient implementations of the coupled training framework and extend it to high-dimensional settings using a greedy forward selection procedure with provable optimization guarantees.
\item \textbf{Theoretical and Empirical Analysis.} We provide theoretical insights into the conditions under which unlabeled data improves estimation and demonstrate, through synthetic experiments and real-world regression and classification benchmarks, that our method consistently outperforms standard baselines when privileged signals are noisy or weak.
\end{itemize}

\subsection{Pseudo-Response Calibration with Coupled Training} \label{sec:co-training alg}

The main limitation of the Two-Stage approach is that the pseudo-responses
produced by the rich-view model are treated as fixed targets in the second stage, with no
mechanism to correct or attenuate their influence when the privileged signal is weak or noisy.
To address this issue, we propose a coupled training procedure that jointly estimates the
deployment model $f$ and the rich-view model $g$ through the updates below.
The core idea is to allow information to flow in both directions, with $g$ providing
pseudo-responses that enrich the effective sample size for learning $f$, while the current estimate of
$f$ is in turn used to recalibrate $g$ on the unlabeled data, subject to an explicit constraint that limits deviation from the labeled responses. This coupling yields a flexible
regularization of the privileged information and provides a principled mechanism for avoiding
negative transfer. 

We now describe the resulting algorithm and its associated optimization
formulation. For simplicity, we focus on regression, but the same ideas can be adapted for classification, e.g., by treating $\hat Y$ as soft labels and using logistic loss.

\paragraph{Coupled Training Algorithm.}
Initialize with any feasible $g_0 \in \mathcal{G}$ and fix a constraint level $\nu \geq 0$. For $k=1,2,\dots$, alternate until convergence.

1. \textbf{Update $f$.}
Given the current pseudo-responses $\hat Y_j=g_{k-1}(Z_j)$, update $f$ by solving
\[
\begin{aligned}
f_k \in \argmin_{f \in \mathcal{F}} \frac{1}{N}\Bigg(
&\sum_{i=1}^n \ell\left(Y_i,f(X_i)\right)\\
&+
\sum_{j=n+1}^{N} \ell\left(g_{k-1}(Z_j),f(X_j)\right)
\Bigg).
\end{aligned}
\]

2. \textbf{Recalibrate $g$.}
Given the current deployment model $f_k$, update $g$ by solving
\[
g_k \in \argmin_{g \in \mathcal{G}}
\frac{1}{m}\sum_{j=n+1}^{N}\ell\left(g(Z_j),f_k(X_j)\right)
\]
\[
\text{s.t.}\quad
\frac{1}{n}\sum_{i=1}^n \ell\left(Y_i,g(Z_i)\right) \le \nu.
\]

If $\mathcal{F} \subset L^2(\mathcal{X})$ and $\mathcal{G} \subset L^2(\mathcal{Z})$ are convex function classes, the loss $\ell(y, y')$ is jointly convex in its arguments $(y, y')$ (e.g., square loss) and the constraint set is feasible, then the coupled training algorithm gives exact updates for the joint convex optimization problem.
\begin{align}
\label{eq:empirical}
\min_{(f, g) \in \mathcal{F}\times \mathcal{G}}
\;&
\frac{1}{N}
\Bigg(
\sum_{i=1}^{n} \ell\left(Y_i,f(X_i)\right)
\nonumber\\
&\qquad+
\sum_{j=n+1}^{N} \ell\left(g(Z_j),f(X_j)\right)
\Bigg) \nonumber\\
\text{s.t.}\quad
&
\frac{1}{n}\sum_{i=1}^{n}
\ell\left(Y_i,g(Z_i)\right) \le \nu .
\end{align}
In particular, the objective value is nonincreasing along the iterates. Under standard
regularity conditions for these exact updates, every cluster point is a
stationary point, and hence globally optimal by convexity for \eqref{eq:empirical}; see \citet{GRIPPO2000127}.

The implementation of the coupled training algorithm depends on the model class. For linear models with square loss, the penalized joint objective is quadratic in the coefficients of $(f,g)$, so its first-order conditions reduce to a single block linear system. Thus, we solve this system in closed form. For differentiable nonlinear models, such closed-form equations are unavailable, so we compute the same coupled training updates using gradient-based solvers. For tree ensembles, we use the penalized Lagrangian form in \eqref{eq:empirical_dual}, so each alternating update reduces to a standard weighted tree regression. For high-dimensional dictionary models, the updates are computed using the greedy selection routine in Section~\ref{sec:high dim}. See Appendix~\ref{app:additional_experiments} for implementation details. Throughout, ``joint'' training means that $f$ and $g$ are coupled through one objective, not that every experiment uses a single simultaneous optimizer over all parameters.

\begin{figure*}[t]
  \centering
  \begin{subfigure}[t]{0.48\textwidth}
    \centering
    \includegraphics[width=\linewidth]{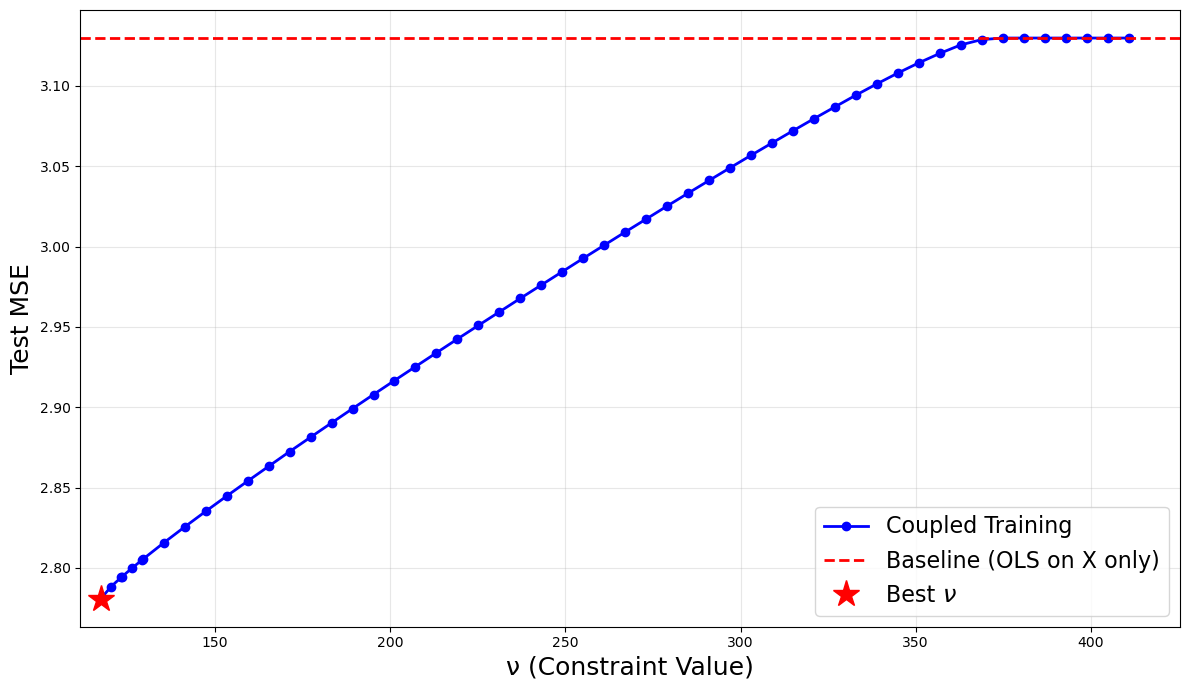}
    \caption{Strong privileged signal (large $\|\theta\|_2$).}
    \label{fig:linear_strong}
  \end{subfigure}
  \hfill
  \begin{subfigure}[t]{0.48\textwidth}
    \centering
    \includegraphics[width=\linewidth]{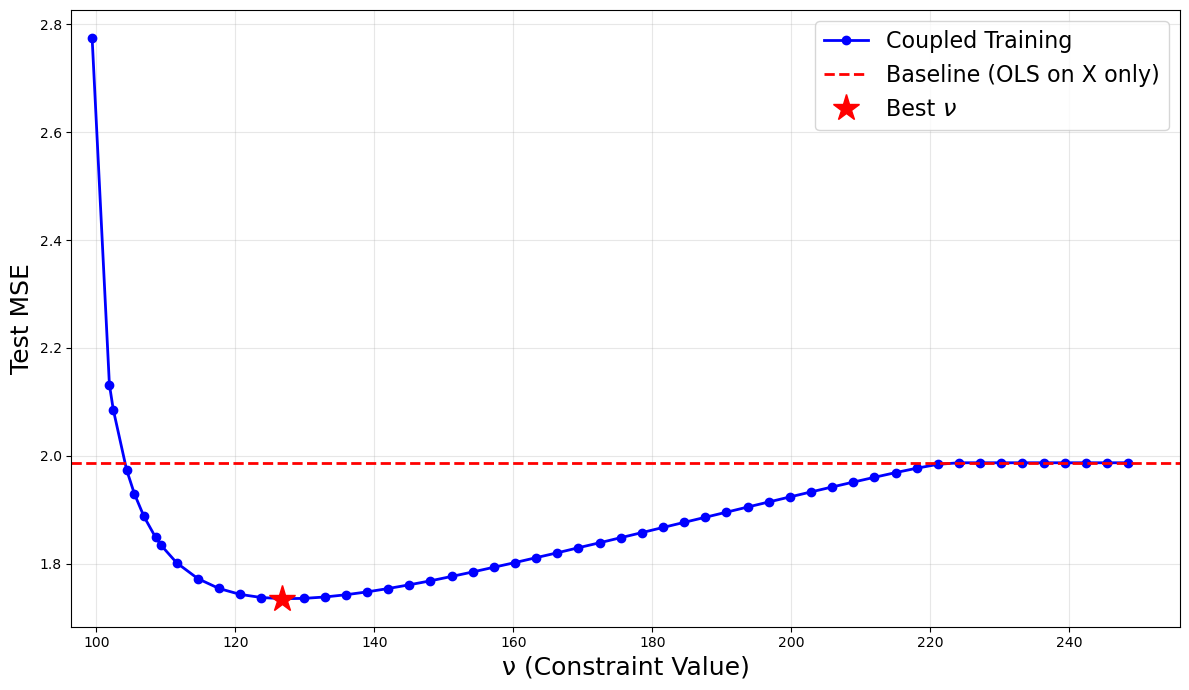}
    \caption{Weak privileged signal (small $\|\theta\|_2$).}
    \label{fig:linear_weak}
  \end{subfigure}
  \caption{\textbf{Linear Gaussian signal strength.} Performance of the proposed method under varying levels of privileged signal strength.}
  \label{fig:linear_signal_strength}
\end{figure*}


Henceforth, we restrict attention to the square loss $\ell(y,y')=(y-y')^2$, which allows for
a tractable analysis. Of course, the proposed coupled training algorithm is not restricted to this choice
of loss. 

We identify any $f:\mathcal{X}\to\mathbb{R}$ with its lift $\tilde f:\mathcal{Z}\to\mathbb{R}$,
defined by $\tilde f(x,w)=f(x)$; below, we simply write $f$ for this lift when no confusion is possible.

\paragraph{Negative Transfer.}

The constraint level $\nu$ governs how much the rich-view model $g$ is allowed to deviate from the labeled responses, and therefore how strongly privileged information can influence the deployment model $f$. In particular, feasibility requires
$$
\nu \geq \min_{g \in \mathcal{G}} \frac{1}{n}\sum_{i=1}^{n} (Y_i - g(Z_i))^2.
$$
If $\mathcal F \subseteq \mathcal G$ (identifying $f\in\mathcal F$ with its lift $\tilde f(x,w)=f(x)$), then choosing $\nu$ large enough that the labeled least-squares fit over $\mathcal F$ is feasible with $g=f$ makes the unlabeled agreement term vanish, and the solution reduces to the labeled least-squares fit over $\mathcal F$. In contrast, choosing $\nu$ close to the minimum labeled risk over $\mathcal G$ forces $g$ to be (approximately) the labeled least-squares fit over $\mathcal G$, yielding behavior akin to the aforementioned Two-Stage pseudo-labeling procedure. Thus, choosing $\nu$ too small can overemphasize noisy pseudo-responses and cause negative transfer, while choosing larger $\nu$ attenuates the influence of $g$ and interpolates back toward supervised learning that ignores $W$.

To illustrate this behavior, we consider a linear model
\[
Y=\beta^\top X+\theta^\top W+\varepsilon,
\]
where $(X,W)$ are jointly Gaussian and $\varepsilon$ is independent mean-zero Gaussian noise.
The strength of the privileged feature is governed by $\|\theta\|_2$, while all other aspects
of the data-generating process are held fixed.

Figure~\ref{fig:linear_signal_strength} illustrates the performance of the proposed method under varying levels of privileged signal strength. In panel~(\subref{fig:linear_strong}), where $\|\theta\|_2$ is large, the privileged variables are highly informative, and the optimal performance is achieved for small values of $\nu$. In this regime, the method behaves like a Two-Stage procedure, leveraging the strong privileged signal to improve estimation accuracy.

In contrast, panel~(\subref{fig:linear_weak}) corresponds to a setting in which the privileged signal is weak. Here, aggressively incorporating pseudo-responses can degrade performance, yielding worse error than the least-squares estimator (OLS) based only on the labeled pairs $\{(X_i,Y_i)\}_{i=1}^n$.
This phenomenon of negative transfer was also noted in \citet[Section~2.4.2]{xia2024prediction} as a limitation of the Two-Stage approach.

Together, these results show that the proposed coupled training framework adaptively moderates the influence of the rich-view model, interpolating between regimes in which privileged information helps or hurts prediction, and thereby yields a robust, principled approach to leveraging privileged information.

\subsection{Related Work}

We place our coupled training framework in context by reviewing connections to learning using privileged information, semi-supervised learning (especially pseudo-labeling), multi-view agreement methods, and related optimization approaches.

\paragraph{Learning Using Privileged Information.} Our setup is closely related to Learning Using Privileged Information (LUPI) \cite{VAPNIK2009544,vapnik2015learning}, where privileged variables $W$ are available during training but the goal is to learn a predictor that depends only on $X$. However, in contrast to our framework, classical LUPI formulations assume fully labeled data and do not leverage unlabeled samples. 

\paragraph{Semi-Supervised Learning.} Learning from both labeled and unlabeled data has been widely studied across statistics and machine learning under the name semi-supervised learning (SSL) \citep{CSZ09}.

Among SSL methods, pseudo-labeling is a particularly prominent approach. In classification, pseudo-labeling was explored in the context of support vector machines by \citet{Joa+99} and later extended to deep learning frameworks by \citet{Lee+13}. In regression, pseudo-responses were constructed via kernel smoothing and imputation in linear models \citep{CC18}, extended to kernel ridge regression \citep{Wan23}, and studied in general settings by \citet{xia2024prediction}. From an inferential perspective, mean estimation with SSL data was investigated by \citet{ZBC19,ZB21}, while \citet{LA17} proposed an ensemble-based method that iteratively assigns pseudo-responses using historical predictions. A parallel line of work is weakly supervised learning, which relies on weak or noisy labels rather than ground truth. The data programming paradigm of \citet{Rat+16,Rat+17} assumes access to labeling functions and was applied to MRI analysis by \citet{Fri+19}. Weak observations were also used for data augmentation by \citet{RJS20}, who established theoretical guarantees under universal central conditions. Closely related is the literature on learning with noisy labels \citep{Nat+13, Son+22}.

In contrast to these approaches, which primarily rely on pseudo-responses or weak supervision, our framework explicitly controls the influence of privileged information within an adaptive self-training scheme.

\paragraph{Multi-View Learning.} The objective in \eqref{eq:empirical} is related to the \emph{co-regularized least-squares} objective of \citet{sindhwani2005coregularization}, which encourages predictors built from different views to agree on unlabeled data. In their multi-view SSL setting, one observes labeled examples $\{(X_i,Y_i)\}_{i=1}^n$ and unlabeled examples $\{X_i\}_{i=n+1}^{N}$, where each input $X_i=(X_i^{(1)},X_i^{(2)})$ comes with two complementary representations $X_i^{(1)}\in\mathcal{X}^{(1)}$ and $X_i^{(2)}\in\mathcal{X}^{(2)}$. The goal is to learn $h=(h^{(1)},h^{(2)})$ so that each component fits the labeled data in its own view, while their predictions are pushed to match on the unlabeled inputs (see \citet[Section~3]{sindhwani2005coregularization}). Related agreement-style objectives also appear in \citet{Ding22}, though their setting is fully supervised and does not use unlabeled data.

In our case, \eqref{eq:empirical} uses a similar kind of agreement term, but for a different reason. We work in a \emph{privileged information} setting where $W$ is available during training but not at deployment, so only $f(X)$ can be used at test time. That makes the coupling inherently asymmetric, with $f$ as the deployment model and $g$ as an auxiliary rich-view model used to pass information to $f$ during training. We also take $g$ to act on the joint input $Z=(X,W)$, rather than on $W$ alone. This lets $g$ capture effects of the privileged variables that depend on $X$ (including $X$--$W$ interactions) and helps translate whatever signal is present in $W$ into improvements in $f$.

\paragraph{Greedy and Dictionary-Based Optimization.}
In Section~\ref{sec:high dim}, we restrict the deployment and rich-view models to
dictionary spans and fit them using an \emph{alternating forward selection} procedure. This connects our method to the literature on greedy approximation
in Hilbert spaces, where algorithms
construct an additive predictor by sequentially selecting atoms that best reduce a squared-loss
objective and enjoy deterministic approximation
guarantees with sublinear objective or approximation error decay \citep{barron2008approximation, DeVoreT96}. It can also be viewed as a block-coordinate variant of classical forward (stepwise) selection in regression \citep[Sec.~3.3.2]{hastie2009esl}.

Our setting differs in that we perform greedy atom selection within each block
while alternating between updating $f$ with $g$ fixed and updating $g$ with $f$ fixed, so the
greedy search is carried out in a product space with interdependent residuals.
Although support selection is combinatorial, we prove that the alternating greedy iterates achieve
a global sublinear decay of the empirical coupled objective (Theorem~\ref{thm:oga-main-1over5}),
extending classical greedy approximation analyses to the coupled privileged information setting.
This is complementary to blockwise optimization analyses for nonconvex problems such as
dictionary learning, which typically focus on reconstruction or coefficient recovery under
structural assumptions \citep{chatterji2017alternating, agarwal2013learning, ruetz2023convergence};
here the dictionaries are fixed and our guarantee controls the empirical comparison term through
an optimization error bound.

\subsection{Notation}

Let $\mathcal{F} \subset L^2(\mathcal{X})$ and $\mathcal{G} \subset L^2(\mathcal{Z})$ be closed, convex function classes. For a nonempty closed convex set $\mathcal{C}$ in a Hilbert space $(\mathcal H,\langle\cdot,\cdot\rangle_{\mathcal{H}})$, let
$\Pi_{\mathcal{C}} h \in \arg\min_{c\in {\mathcal{C}}}\|h-c\|_{\mathcal H}^2$
denote the (metric) projection. When $\mathcal{C}$ is a closed linear subspace, $\Pi_{\mathcal{C}}$ coincides with the orthogonal projector.


 For fixed points $x^N=(x_1,\dots,x_N) \in \mathcal X^N$ and
$z^N=(z_1,\dots,z_N) \in \mathcal Z^N$, where $z_i = (x_i, w_i)$, let $\hat P_N^X$ and
$\hat P_N^Z$ denote the corresponding normalized empirical measures.
For functions $f:\mathcal X\to\mathbb R$ and $g:\mathcal Z\to\mathbb R$, we write
$
\|f\|_{L^p(\hat P_N^X)}
\coloneqq
\left(\frac{1}{N}\sum_{i=1}^N |f(x_i)|^p\right)^{1/p}$, and 
$
\|g\|_{L^p(\hat P_N^Z)} \coloneqq \left(\frac{1}{N}\sum_{i=1}^N |g(z_i)|^p
\right)^{1/p}$. 
For a class $\mathcal F\times\mathcal G$, define the max norm
$
\|(f,g)\|_{1,\infty}
=
\max\{\|f\|_{L^1(\hat P_N^X)},\,\|g\|_{L^1(\hat P_N^Z)}\}.
$
An $\epsilon$-cover of $\mathcal F\times\mathcal G$ with respect to
$\|\cdot\|_{1,\infty}$ is a finite collection $\{(f_1,g_1),\dots,(f_M,g_M)\}
\subset \mathcal F\times\mathcal G$ such that every $(f,g)\in\mathcal F\times\mathcal G$
is within $\epsilon$ of some $(f_j,g_j)$ under this norm, and the minimal such
$M$ is denoted by $\mathcal N_{1, \infty}(\epsilon,\mathcal F\times\mathcal G, z^N)$.

\section{Risk Bounds}

\paragraph{Lagrangian Relaxation.} The constrained problem \eqref{eq:empirical} admits a Lagrangian relaxation, leading to the penalized objective
\begin{align}\label{eq:empirical_dual}
\hat{\mathcal{L}}(f,g;\lambda)
& \coloneqq \frac{1}{N} \Bigg(
\sum_{i\le n} (Y_i - f(X_i))^2 \nonumber\\
&
+ \sum_{j>n}\!(g(Z_j) - f(X_j))^2\!
+\!\lambda\!\sum_{i\le n}\!(Y_i - g(Z_i))^2
\Bigg).
\end{align}

Under the usual convex duality conditions for \eqref{eq:empirical}, for example
feasibility together with a suitable constraint qualification, the constrained problem admits
KKT multipliers. In that case, for a primal solution $(\hat f,\hat g)$ there exists a multiplier
$\tilde\lambda\ge0$ such that $(\hat f,\hat g)$ minimizes the penalized objective
\eqref{eq:empirical_dual} with
$\lambda=(N/n)\tilde\lambda$ and satisfies the complementary-slackness condition
\[
\tilde\lambda
\left(
\frac{1}{n}\sum_{i=1}^n (Y_i-\hat g(Z_i))^2-\nu
\right)=0.
\]
In the rest of the paper we treat the rescaled parameter $\lambda$ as a tuning parameter
and work directly with the penalized formulation.

Note that $\lambda$ plays a role analogous to $\nu$, but in the opposite direction. Smaller $\lambda$ relaxes the effective constraint on $g$ and attenuates the influence of pseudo-responses, while larger $\lambda$ forces $g$ to fit the labeled data more closely and increases the impact of pseudo-labeling. In particular, if $\mathcal{F} \subseteq \mathcal{G}$, then as $\lambda \to 0$, the unlabeled data play no role and the solution reduces to the ordinary least-squares fit based only on the labeled data. As $\lambda \to \infty$, the solution approaches the Two-Stage procedure. In this case, since $\hat{\mathcal{L}}(\hat f, \hat g;\lambda) \leq \hat{\mathcal{L}}(\hat f, \hat f;\lambda)$, we also have the bound $\frac{1}{m}\sum_{j=n+1}^N(\hat Y_j-\hat f(X_j))^2 \leq \frac{\lambda n}{m}\cdot \frac{1}{n}\sum_{i=1}^n(Y_i-\hat f(X_i))^2$, meaning that the (pseudo-) training error of $\hat f$ on the unlabeled features is at most $\lambda n/m$ times the training error of $\hat f$ on the labeled features.

\paragraph{Population Minimizers.} We now turn to a theoretical analysis of the proposed method and derive bounds on the prediction error of the learned deployment model $\hat f$. To this end, we consider the population-level counterpart of the empirical objective in \eqref{eq:empirical_dual}, given by
\begin{align}
\mathcal{L}(f,g;\lambda)
& \coloneqq \frac{n}{N}\E[(Y-f(X))^2] \nonumber \\
&\qquad
+\frac{ m }{N}\E[(g(Z)-f(X))^2] \nonumber \\
&\qquad
+\lambda\frac{n}{N}\E[(Y-g(Z))^2].
\label{eq:pop-loss}
\end{align}
Recall we identify any $f:\mathcal X\to\mathbb R$ with its lift to $\mathcal Z$ via $\tilde f(x,w)=f(x)$. We define the rich-view regression function
$$
\eta(z)\coloneqq \E\big[Y\mid Z=z\big],
$$
which is related to $\mu$ via $\mu(x) = \mathbb{E}[\eta(Z) \mid X=x]$.

\begin{theorem}[Population Minimizers] \label{thm:pop-minimizer}
Any $(f^\star,g^\star)\in \mathcal{F}\times \mathcal{G}$ minimizing \eqref{eq:pop-loss} satisfies the fixed point equations.
\begin{align*}
f^\star&=\Pi_{\mathcal{F}}\left(\frac{n}{N} \mu+\frac{m}{N}\E[g^\star(Z)\mid X]\right), \\
g^\star&=\Pi_{\mathcal{G}}\left(\frac{m}{m+n\lambda} f^\star+\frac{n\lambda}{m+n\lambda}\eta\right).
\end{align*}

Furthermore, if
$\mu \in \mathcal{F}$, $\mu \in \mathcal{G}$, and  $\eta \in \mathcal{G}$, then 
$$
f^\star = \mu, \ \ \text{and}
\quad 
g^\star = \frac{m}{m+n\lambda}\mu + \frac{n\lambda}{m+n\lambda}\eta.
$$
\end{theorem}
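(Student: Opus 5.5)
The plan is to use block-coordinate optimality. Since $(f^\star,g^\star)$ minimizes $\mathcal L$ jointly, $f^\star$ minimizes $f\mapsto\mathcal L(f,g^\star;\lambda)$ over $\mathcal F$, and $g^\star$ minimizes $g\mapsto\mathcal L(f^\star,g;\lambda)$ over $\mathcal G$. In each block I will rewrite the objective, up to an additive constant independent of the optimized variable, as a positive multiple of a squared $L^2$ distance to a target function. The minimizer over the closed convex class is then the metric projection of that target, which gives the two fixed point equations.

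For the $f$-block, I would expand the two terms involving $f$ using the tower property. We have $\E[(Y-f(X))^2]=\E[(\mu(X)-f(X))^2]+\E[\Var(Y\mid X)]$, and similarly
\[
\E[(g^\star(Z)-f(X))^2]=\E[(h(X)-f(X))^2]+\mathrm{const},
\]
with $h(X)=\E[g^\star(Z)\mid X]$; the cross term vanishes because $f(X)$ is $X$-measurable. A convex combination of squared distances satisfies
\[
\tfrac nN\|\mu-f\|^2+\tfrac mN\|h-f\|^2=\|\tfrac nN\mu+\tfrac mN h-f\|^2+\mathrm{const}
\]
in $L^2(P_X)$, by the standard completion-of-squares identity. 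Minimizing over $\mathcal F$ therefore yields $f^\star=\Pi_{\mathcal F}(\tfrac nN\mu+\tfrac mN h)$.

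For the $g$-block, I would apply the same argument in $L^2(P_Z)$. Conditioning on $Z$ gives $\E[(Y-g(Z))^2]=\|\eta-g\|^2+\mathrm{const}$. Then
\[
\tfrac mN\|f^\star-g\|^2+\tfrac{n\lambda}{N}\|\eta-g\|^2=\tfrac{m+n\lambda}{N}\Bigl\|\tfrac{m}{m+n\lambda}f^\star+\tfrac{n\lambda}{m+n\lambda}\eta-g\Bigr\|^2+\mathrm{const},
\]
and projecting onto $\mathcal G$ gives the second equation. This is the one spot needing care: the projection must be taken in $L^2(P_Z)$, with $f^\star$ lifted to $\mathcal Z$, while the $f$-step uses $L^2(P_X)$. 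I also assume finite second moments, so that every constant above is finite.

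For the second part, I would verify that the pair $(\mu,\tilde g)$ with $\tilde g=\tfrac{m}{m+n\lambda}\mu+\tfrac{n\lambda}{m+n\lambda}\eta$ attains the global lower bound. Since $\mu,\eta\in\mathcal G$ and $\mathcal G$ is convex, $\tilde g\in\mathcal G$. Write $V_X=\E\Var(Y\mid X)$ and $V_Z=\E\Var(Y\mid Z)$. For any $(f,g)$, the expansions above give $\E(Y-f)^2\ge V_X$. They also give $\E(Y-g)^2=V_Z+\|\eta-g\|^2$ and $\E(g-f)^2\ge\E(\E[g\mid X]-f)^2$.

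I expect the main obstacle to be handling the coupling cleanly rather than merely checking that the fixed point holds. Because $\mathcal L$ is jointly convex in $(f,g)$, any pair satisfying both block-optimality conditions is a global minimizer. So the plan is to check directly that $(\mu,\tilde g)$ satisfies both conditions. First, $\E[\tilde g(Z)\mid X]=\mu$ because $\E[\eta(Z)\mid X]=\mu$, so the $f$-target is $\mu\in\mathcal F$ and its projection is $\mu$. Second, the $g$-target is exactly $\tilde g\in\mathcal G$, so its projection is $\tilde g$.

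Uniqueness of $f^\star$ then follows from strict convexity. The term $\tfrac nN\E(Y-f)^2$ is strictly convex in $f$, so $f$ is unique across minimizers. With $f^\star$ fixed, the $g$-objective is strictly convex in $g$ (its coefficient is $(m+n\lambda)/N>0$ assuming $\lambda>0$ or $m>0$). Hence every minimizer equals $(\mu,\tilde g)$ in $L^2$.
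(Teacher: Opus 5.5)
Your first part is the same as the paper's. The paper completes the square blockwise, $\mathcal L(f,g;\lambda)=c\|g-s_f\|_Z^2+d$ for fixed $f$ and $\mathcal L(f,g;\lambda)=\|f-t_g\|_X^2+d$ for fixed $g$, and reads off the best responses as metric projections, exactly as you do.

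For the realizable case you take a different route. The paper sets $f_0=\mu$ and $g_0=\tfrac{b}{c}\mu+\tfrac{a\lambda}{c}\eta$, and proves by direct expansion the exact identity
\[
\mathcal L(f,g;\lambda)-\mathcal L(f_0,g_0;\lambda)=a\|u\|_X^2+b\|v-u\|_Z^2+a\lambda\|v\|_Z^2,\qquad u=f-f_0,\ v=g-g_0 .
\]
This gives global optimality and uniqueness in one line. It is also the same quadratic structure reused in Theorem~\ref{thm:excess-fgstar} and Corollary~\ref{cor:gamma-star}.

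You instead observe that both block targets at $(\mu,\tilde g)$ already lie in the classes, and use joint convexity to upgrade blockwise optimality to global optimality. You then get uniqueness from strict convexity of $f\mapsto\tfrac nN\E(Y-f(X))^2$, via the midpoint argument across minimizers, followed by strict convexity of the $g$-block. This is valid and avoids the computation. It also applies verbatim to any smooth jointly convex objective in which you can exhibit a point whose block targets lie in the classes.

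Note, however, that the upgrade step uses more than joint convexity. For nonsmooth convex functions, blockwise minimizers need not be global minimizers. Here it is fine because $\mathcal L$ is a continuous quadratic functional on $L^2(P_X)\times L^2(P_Z)$. Blockwise optimality therefore gives the variational inequality for the full gradient over the product set $\mathcal F\times\mathcal G$. In fact, at $(\mu,\tilde g)$ both partial gradients $2(f-t_g)$ and $2c(g-s_f)$ vanish, so it is an unconstrained stationary point of a convex functional. Saying this explicitly would close the argument.

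Finally, your preliminary lower bounds ($\E(Y-f)^2\ge V_X$, and so on) are never used and can be dropped.
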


\begin{remark}[Interpolation Effect]
The characterization in Theorem~\ref{thm:pop-minimizer} highlights how the rich-view model $g^\star$ interpolates between the deployment target $\mu$ and the rich-view regression function $\eta$. In particular, under the realizability assumptions in Theorem~\ref{thm:pop-minimizer}, as $\lambda \to 0$, the penalty on the rich-view model becomes inactive and the solution satisfies $g^\star \to \mu$, corresponding to a regime in which the privileged information is ignored. On the other hand, as $\lambda$ increases, $g^\star$ places increasing weight on $\eta$, recovering behavior akin to a Two-Stage pseudo-labeling procedure. Intermediate values of $\lambda$ yield an interpolation between these extremes, allowing the method to adapt to the informativeness of the privileged variables.
\end{remark}

\paragraph{Correlation Controlled Risk Bound.} We now study the statistical behavior of the proposed estimator by analyzing its prediction error. Let $(\hat f,\hat g)\in\mathcal{F}\times\mathcal{G}$ denote the solution (or an approximate solution) of the empirical problem \eqref{eq:empirical_dual} and let $(f^\star,g^\star)\in \mathcal{F}\times \mathcal{G}$ denote the minimizer of \eqref{eq:pop-loss}. Our goal is to control the excess risk
\[
\E_{\cD,\,X}\left[(\hat f(X)-\mu(X))^2\right],
\]
where the expectation is over the training data $\cD$ used to construct $\hat f$ and an independent test point $X$.

A key quantity in our analysis is a correlation coefficient that captures the
interaction between the estimation errors of the deployment and rich-view models.
Specifically, define the estimation errors
\[
\hat e_f(X) \coloneqq f^\star(X)- \hat f(X) ,
\qquad
\hat e_g(Z) \coloneqq   g^\star(Z)-\hat g(Z).
\]
We then define
\begin{equation*}
\rho_\star
=
\frac{
    \big|\mathbb{E}_{\cD,\,Z}\left[\hat e_f(X)\,\hat e_g(Z)\right]\big|
}{
    \big(\mathbb{E}_{\cD,\,X}\big[\hat e_f^2(X)\big]\big)^{1/2}
    \big(\mathbb{E}_{\cD,\,Z}\left[\hat e_g^2(Z)\right]\big)^{1/2}
} \in [0, 1],
\end{equation*}
where $Z = (X,W)$,
which measures the alignment between the residuals of the two estimators, with the convention that $\rho_\star=0$ if either denominator term is zero.

The following corollary, which is a direct consequence of the excess loss
decomposition in Theorem~\ref{thm:excess-fgstar}, shows that $\rho_\star$
plays a central role in controlling the prediction error.

\begin{corollary}[Correlation Controlled Risk Bound]\label{cor:gamma-star}
If $\mu \in \mathcal{F}$, $\mu \in \mathcal{G}$ (note that functions on $\mathcal X$ are implicitly lifted to $\mathcal Z$), and  $\eta \in \mathcal{G}$, then any estimator $(\hat f, \hat g) \in \mathcal{F}\times \mathcal{G}$ satisfies
\begin{align*}
 &  \E_{\cD,\,X}\left[(\hat f(X)-\mu(X))^2\right] \nonumber\\ & \qquad\qquad \leq
\frac{
\mathbb{E}_{\cD}\left[\mathcal L(\hat f,\hat g;\lambda)
-
\mathcal L(f^\star,g^\star;\lambda)\right]
}{
\gamma_{n,m,\lambda}(\rho_\star)
},
\end{align*}
where
\[
\gamma_{n,m,\lambda}(\rho_\star)
\coloneqq
1-\frac{m^2}{N(m+n\lambda)}\,\rho_\star^2 \geq \frac{n}{N}.
\]
\end{corollary}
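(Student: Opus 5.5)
Under the realizability assumptions, Theorem~\ref{thm:pop-minimizer} gives $f^\star=\mu$ and $g^\star=\alpha\mu+(1-\alpha)\eta$ with $\alpha\coloneqq m/(m+n\lambda)$. The plan is to expand the population loss \eqref{eq:pop-loss} exactly around $(f^\star,g^\star)$ and show that the excess loss is a quadratic form in $(\hat e_f,\hat e_g)$.

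\textbf{Step 1: exact quadratic expansion.} Write $\hat f=f^\star-\hat e_f$ and $\hat g=g^\star-\hat e_g$. Since $\mathcal L$ is quadratic, $\mathcal L(\hat f,\hat g)-\mathcal L(f^\star,g^\star)$ equals a linear term plus the quadratic term
\[
\frac{n}{N}\E[\hat e_f^2]+\frac{m}{N}\E[(\hat e_g-\hat e_f)^2]+\frac{\lambda n}{N}\E[\hat e_g^2].
\]
The linear term is the directional derivative at $(f^\star,g^\star)$ in the direction $(-\hat e_f,-\hat e_g)$.

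\textbf{Step 2: the linear term vanishes.} The $f$-part of the derivative is proportional to
\[
\frac nN\E[(\mu-f^\star)\hat e_f]+\frac mN\E[(g^\star-f^\star)\hat e_f].
\]
Since $f^\star=\mu$, the first term is zero. For the second, $\hat e_f$ depends only on $X$, so by the tower property it reduces to $\E[(\E[g^\star\mid X]-\mu)\hat e_f]$. Here $\E[g^\star\mid X]=\alpha\mu+(1-\alpha)\mu=\mu$, so it is zero as well.

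The $g$-part is proportional to
\[
m\E[(g^\star-f^\star)\hat e_g]+\lambda n\E[(g^\star-\eta)\hat e_g].
\]
We have $m(g^\star-\mu)+\lambda n(g^\star-\eta)=(m+n\lambda)g^\star-m\mu-n\lambda\eta=0$, so this vanishes identically. Hence the excess loss equals the quadratic form exactly, with no projection or optimality inequality needed. Presumably this is the decomposition of Theorem~\ref{thm:excess-fgstar}, and I would simply invoke it. Taking $\E_{\cD}$ and writing $a^2=\E_{\cD,X}[\hat e_f^2]$, $b^2=\E_{\cD,Z}[\hat e_g^2]$ and $c=\E_{\cD,Z}[\hat e_f\hat e_g]$ gives
\[
N\,\E_{\cD}[\Delta\mathcal L]=N a^2-2mc+(m+n\lambda)b^2 .
\]

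\textbf{Step 3: minimize over the rich-view error.} Since $|c|\le\rho_\star ab$, we have
\[
N\,\E_{\cD}[\Delta\mathcal L]\ge Na^2-2m\rho_\star ab+(m+n\lambda)b^2 .
\]
Minimizing the right-hand side over $b\ge0$ gives the value $Na^2-m^2\rho_\star^2a^2/(m+n\lambda)$. Dividing by $N$ yields $\E_{\cD}[\Delta\mathcal L]\ge\gamma_{n,m,\lambda}(\rho_\star)\,a^2$. Because $a^2=\E_{\cD,X}[(\hat f-\mu)^2]$, this is exactly the claimed bound.

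\textbf{Step 4: lower bound on $\gamma$.} Using $\rho_\star\le1$ and $\lambda\ge0$,
\[
\gamma_{n,m,\lambda}(\rho_\star)\ge 1-\frac{m^2}{Nm}=1-\frac mN=\frac nN>0,
\]
so the division is legitimate. When the denominator of $\rho_\star$ vanishes, the convention $\rho_\star=0$ gives $c=0$ (by Cauchy--Schwarz), and the same inequality holds.

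The main obstacle is Step 2, specifically recognizing that $\E[g^\star(Z)\mid X]=\mu(X)$ kills the cross term in the $f$-direction. This relies on $\hat e_f$ being a function of $X$ alone, combined with the tower property. The remaining steps are algebra.
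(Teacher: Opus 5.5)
Your proof is correct and shares the paper's skeleton: the excess-loss decomposition of Theorem~\ref{thm:excess-fgstar}, then taking $\E_{\cD}$, bounding the cross term via $\rho_\star$, and completing the square. The difference is in how you dispose of the linear gradient terms.

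You use the closed form of $(f^\star,g^\star)$ from Theorem~\ref{thm:pop-minimizer} to show that both partial gradients of $\mathcal L$ vanish at $(f^\star,g^\star)$. The excess loss then \emph{equals} the quadratic form $\frac{n}{N}\E[\hat e_f^2]+\frac{m}{N}\E[(\hat e_g-\hat e_f)^2]+\frac{\lambda n}{N}\E[\hat e_g^2]$. This is the same identity the paper uses inside the proof of Theorem~\ref{thm:pop-minimizer} to establish uniqueness.

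The paper instead only drops the linear terms as nonnegative. It does so via the variational inequalities for a minimizer of the convex functional $\mathcal L$ over the closed convex product $\mathcal F\times\mathcal G$, tested separately in the $f$- and $g$-directions. That step never uses realizability. So the paper's argument gives $\E_{\cD}[\mathcal L(\hat f,\hat g;\lambda)-\mathcal L(f^\star,g^\star;\lambda)]\ge\gamma_{n,m,\lambda}(\rho_\star)\,\E_{\cD,X}[(\hat f(X)-f^\star(X))^2]$ for misspecified classes as well. Realizability is needed there only to replace $f^\star$ by $\mu$.

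Your route is more self-contained and shows that the only slack is in the Cauchy--Schwarz/completing-the-square step, at the cost of relying on the explicit minimizer. You also verify $\gamma_{n,m,\lambda}(\rho_\star)\ge n/N$ and the degenerate convention $\rho_\star=0$ explicitly, which the paper leaves implicit.
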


The above bound shows that excess loss in the joint objective translates
into prediction error for $\hat f$, with a constant that degrades as the correlation
$\rho_\star$ between the estimation errors increases.

\begin{remark}[Correlation Score Interpretation] \label{rem: correlation score}
(i) The score $\rho_\star$ measures the alignment between the deployment model and rich-view estimation errors; small $\rho_\star$ yields a sharper conversion from coupled excess loss to deployment risk and corresponds to privileged information contributing variation not already captured by $X$, whereas large $\rho_\star$ limits the gain from incorporating $W$.
In this regime, leveraging the privileged information can substantially improve the accuracy of $\hat{f}$. 
In contrast, when $\rho_\star$ is large, the residuals of $\hat{f}$ and $\hat{g}$ are highly correlated, suggesting that $W$ provides little additional information beyond what is already contained in $X$, and the benefit of incorporating privileged data becomes limited. \\
(ii) By Cauchy--Schwarz and the law of total variance, it can be shown (see
Section~\ref{proof: rem: correlation score} for details) that
\[
\rho_\star^2
\le
\frac{
\mathbb{E}_{\cD,\,X}\left[
\big(\mathbb{E}[\hat e_g(Z)\mid \cD,\,X]\big)^2
\right]
}{
\mathbb{E}_{\cD,\,Z}\left[
\hat e_g^2(Z)
\right]
} \leq 1.
\]
Furthermore, if $\mu \in \mathcal{F}$, $\mu \in \mathcal{G}$, and  $\eta \in \mathcal{G}$, then
$\mathbb{E}[g^\star(Z)\mid X]=\mu(X)$, and hence
\[
\mathbb{E}[\hat e_g(Z)\mid \cD,\,X]
=
\mu(X)-\mathbb{E}[\hat g(Z)\mid \cD,\,X].
\]
Consequently, $\rho_\star$ is small when the conditional rich-view model error
$
\mathbb{E}_{\cD,\,X}\big[
\big(\mathbb{E}[\hat e_g(Z)\mid \cD,\,X]\big)^2
\big]
$
is small relative to the total rich-view model error
$
\mathbb{E}_{\cD,\,Z}\left[
\hat e_g^2(Z)
\right].
$
This corresponds to a regime in which the rich-view model $\hat g$
leverages privileged information $W$ to capture predictive signal about $Y$ that is not
well explained by $X$ alone, thereby reducing the component of the rich-view error that
is aligned with $X$. Hence, $\rho_\star$, and consequently the risk bound, are controlled by a
\emph{multiplicative} relative error, in contrast to \citet{xia2024prediction}, where the risk is
bounded in terms of an \emph{additive} absolute error
$
\mathbb{E}_{\cD,\,X}\big[
\big(\mathbb{E}[\hat e_g(Z)\mid \cD, X]\big)^2
\big].
$
This means that the risk bound degrades more gracefully as $\hat g$ worsens, because it depends on the fraction of rich-view error explainable from $X$ rather than on its absolute size.
\end{remark}

\subsection{Main Risk Bound}
\begin{assumption}
The function classes $\mathcal F$ and $\mathcal G$ are uniformly bounded.
Specifically, without loss of generality, there exists a constant $B\geq 1$ such that
$$
\sup_{(f,g)\in\mathcal F\times\mathcal G}\|(f,g)\|_{\infty}\le B\quad \text{and} \quad
|Y| \le B, \quad \text{a.s.},
$$
where $\|(f,g)\|_{\infty} \coloneqq \max\big\{\|f\|_\infty,\|g\|_\infty\big\}$. \label{assump:boundedness}
\end{assumption}

 \begin{remark}[On the boundedness assumption]
The condition $\sup_{(f,g)\in\mathcal F\times\mathcal G}\|(f,g)\|_{\infty}\le B$ is made for convenience. When $\mathcal F$ and $\mathcal G$ are unbounded but $|Y|\le B$ a.s., the same conclusions apply to the truncated estimators $T_B\hat f$ and $T_B\hat g$, with
$T_B(t) \coloneqq \operatorname{sign}(t)\,(|t|\wedge B)$
following the truncation argument of \citet{barron2008approximation}. Since $|Y|\le B$ a.s., truncation can only decrease the (squared) loss, so no rate is lost.
\end{remark}

We introduce a complexity term that captures the joint richness of the
deployment and rich-view function classes, at scales determined by the sample sizes and the regularization parameter $\lambda$. Define
\begin{align*}
    \mathfrak{E}_{\mathcal{F} \times \mathcal{G}}^{N,\lambda}
\coloneqq 1+\sup_{z^N}\log\Big(
\mathcal{N}_{1, \infty}\big(
\epsilon_N,
\mathcal{F} \times \mathcal{G},
z^N
\big)
\Big),
\end{align*}
with  $\epsilon_N=\frac{1}{1280B(\lambda+1)N}$.

\begin{theorem}[Risk Bound]\label{thm:rate-fstar}
If $\mu \in \mathcal{F}$, $\mu \in \mathcal{G}$,  $\eta \in \mathcal{G}$ and Assumption \ref{assump:boundedness} holds, then any estimator $(\hat f, \hat g) \in \mathcal{F}\times \mathcal{G}$ satisfies
\begin{equation}
\begin{aligned}
&\E_{\cD,\,X}\left[(\hat f(X)-\mu(X))^2\right] \\
&\qquad \leq
\frac{1}{\gamma_{n, m ,\lambda}(\rho_\star)}
\Bigg(
\frac{C_1B^2(\lambda+1)\mathfrak{E}_{\mathcal{F} \times \mathcal{G}}^{N,\lambda}}{N}\\
&\qquad\qquad
+ 2\E_{\cD}\big[\hat\Delta(\hat f, \hat g)\big]
\Bigg),
\end{aligned}
\label{eq:estimation-rate-fstar}
\end{equation}
where $C_1>0$ is a universal constant, and 
$$
\hat \Delta(\hat f, \hat g) \coloneqq \hat{\mathcal{L}}(\hat f,\hat g;\lambda)-\hat{ \mathcal{L}}(f^\star,g^\star; \lambda).
$$
\end{theorem}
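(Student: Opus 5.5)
By Corollary~\ref{cor:gamma-star}, it suffices to show
\[
\E_{\cD}\big[\mathcal L(\hat f,\hat g;\lambda)-\mathcal L(f^\star,g^\star;\lambda)\big]
\le \frac{C_1B^2(\lambda+1)\mathfrak{E}_{\mathcal{F} \times \mathcal{G}}^{N,\lambda}}{N}+2\E_{\cD}\big[\hat\Delta(\hat f,\hat g)\big].
\]
Dividing by $\gamma_{n,m,\lambda}(\rho_\star)$ then gives \eqref{eq:estimation-rate-fstar}.

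For each $(f,g)$, write the per-sample loss as
\[
\ell_{f,g}(Z_i,Y_i,\delta_i)=\delta_i\big[(Y_i-f(X_i))^2+\lambda(Y_i-g(Z_i))^2\big]+(1-\delta_i)(g(Z_i)-f(X_i))^2,
\]
where $\delta_i=\mathbf 1\{i\le n\}$. With this notation $\hat{\mathcal L}=\frac1N\sum_i\ell_{f,g}$, and $\E\hat{\mathcal L}(f,g)=\mathcal L(f,g)$ for fixed $(f,g)$. Set $h_{f,g}=\ell_{f,g}-\ell_{f^\star,g^\star}$. The samples are independent but not identically distributed, since labeled and unlabeled points have different loss forms. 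Then
\[
\mathcal L(\hat f,\hat g)-\mathcal L(f^\star,g^\star)=\big[\bar P h-2\hat P_N h\big]_{(\hat f,\hat g)}+2\hat\Delta(\hat f,\hat g),
\]
with $\bar P h=\frac1N\sum_i\E h_i$. The job reduces to bounding $\E\sup_{(f,g)}\big(\bar P h_{f,g}-2\hat P_N h_{f,g}\big)$ by $C B^2(\lambda+1)\mathfrak E/N$. This is the standard fast-rate ``$2\times$ empirical'' trick from Györfi et al./\citet{barron2008approximation}.

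\textbf{Key ingredient: a Bernstein-type variance bound.} Assumption~\ref{assump:boundedness} gives $|h_i|\le c B^2(\lambda+1)$. We also need $\sum_i\Var(h_i)\le cB^2(\lambda+1)\sum_i\E h_i$, which is the step I expect to be hardest. Since $f^\star=\mu$ and $g^\star=\frac{m}{m+n\lambda}\mu+\frac{n\lambda}{m+n\lambda}\eta$ by Theorem~\ref{thm:pop-minimizer}, the population excess loss is a quadratic form. It should equal
\[
\tfrac nN\|f-\mu\|^2+\tfrac mN\|(g-f)-(g^\star-\mu)\|^2+\lambda\tfrac nN\|g-\eta\|^2-(\text{the same at the optimum}).
\]
Using first-order optimality of $(f^\star,g^\star)$ over the convex set $\mathcal F\times\mathcal G$, this is at least the pure quadratic
\[
Q(f,g)=\tfrac nN\|f-f^\star\|^2+\tfrac mN\|(g-g^\star)-(f-f^\star)\|^2+\lambda\tfrac nN\|g-g^\star\|^2.
\]
Each $h_i$ is a difference of squares, i.e.\ a product of a bounded sum, at most $4B(1+\lambda)$ in size, with a difference. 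Therefore $\E h_i^2\lesssim B^2(1+\lambda)\cdot(\text{corresponding term of }Q)$, and summing gives the variance condition. The unlabeled term needs care because $g^\star-f^\star\neq0$. Its difference of squares is $\big((g-f)-(g^\star-f^\star)\big)\big((g-f)+(g^\star-f^\star)\big)$, which still factors correctly. The labeled terms factor similarly, but they have cross terms with $Y-\mu$ and $Y-\eta$. Those cross terms have mean zero against the lifted $\mu$-measurable and $\eta$-measurable pieces, but I will need to check that the $\lambda$-weighted sum still matches $Q$. If it does not match exactly, I will settle for a constant-factor comparison.

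\textbf{Uniform step.} With the variance condition in hand, I will apply the symmetrization and covering argument of Györfi et al.\ (Theorem~11.4 style) for independent non-identical summands. Discretize $\mathcal F\times\mathcal G$ by an $\epsilon_N$-cover in $\|\cdot\|_{1,\infty}$ on the sample. The map $(f,g)\mapsto\ell_{f,g}$ is Lipschitz in $L^1$ with constant $\lesssim B(1+\lambda)$, so the choice $\epsilon_N=1/(1280B(\lambda+1)N)$ makes the discretization error $O(1/N)$. Then apply Bernstein with a union bound over the cover. This gives
\[
P\big(\sup(\bar P h-2\hat P_N h)>t\big)\le 14\,\mathcal N\,e^{-Nt/(cB^2(1+\lambda))}.
\]
Integrating the tail yields $\E\sup\le C_1B^2(\lambda+1)\big(1+\log\mathcal N\big)/N$, which is the required bound.
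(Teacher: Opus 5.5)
Your proposal is correct and follows essentially the same route as the paper. You combine Corollary~\ref{cor:gamma-star} with a Gy\"orfi-type (Theorem~11.4) fast-rate bound on the excess population loss minus twice the excess empirical loss, for independent but non-identical labeled and unlabeled summands; the variance condition comes from $(f^\star,g^\star)$ minimizing $\mathcal L$ over the convex class, so the excess loss dominates $\tfrac nN\|f-f^\star\|^2+\lambda\tfrac nN\|g-g^\star\|^2+\tfrac mN\|(f-f^\star)-(g-g^\star)\|^2$, and the argument finishes with covers at scale $\epsilon_N$ and integration of the tail. One small slip, which you do not actually need: your tentative exact formula for the excess loss is off, since the agreement term should be $\tfrac mN\|g-f\|^2$ and your version drops the linear term $2\tfrac mN\tfrac{n\lambda}{m+n\lambda}\langle g-g^\star,\eta-\mu\rangle_Z$; the lower bound by $Q$ follows directly from first-order optimality, exactly as in the paper.
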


\begin{remark}[Empirical Comparison Term]
The term
\(\hat \Delta(\hat f,\hat g)\)
is an empirical comparison term, comparing the empirical coupled objective attained by the estimator $(\hat f,\hat g)$ with that of the population reference pair $(f^\star,g^\star)$.
It is not necessarily nonnegative, since $(f^\star,g^\star)$ need not minimize the empirical objective. In particular, if $(\hat f,\hat g)$ is an exact empirical minimizer of \eqref{eq:empirical_dual}, then
\[
\hat \Delta(\hat f,\hat g)\le 0,
\]
so this term can be dropped from \eqref{eq:estimation-rate-fstar}. For algorithmically constructed estimators, $\hat \Delta(\hat f,\hat g)$ records how well the computed pair competes with $(f^\star,g^\star)$ on the empirical objective; for the alternating forward selection iterates, this comparison is controlled by Theorem~\ref{thm:oga-main-1over5}.
\end{remark}

\begin{remark}[Effect of Correlation Score]
(i) Note that, for any values of $\rho_\star$ and $\lambda$, the quantity
$\gamma_{n,m,\lambda}(\rho_\star)$ satisfies the uniform lower bound
$
\gamma_{n,m,\lambda}(\rho_\star)\ge n/N.
$
As a consequence,
\[
\frac{1}{\gamma_{n,m,\lambda}(\rho_\star)}\cdot\frac{1}{N}
\le \frac{1}{n},
\]
so the $O(1/N)$ term in \eqref{eq:estimation-rate-fstar} is never worse than the
$O(1/n)$ dependence obtained from using only the labeled data.

(ii) When the estimation errors of the deployment and rich-view estimators are highly correlated (i.e., $\rho_\star$ is close to $1$), one has
$
\gamma_{n,m,\lambda}(\rho_\star)
\approx
1-m^2/(N(m+n\lambda))
$.
In particular, when $\lambda = 0$, this becomes $\gamma_{n,m,0}(\rho_\star)\approx n/N$, and hence
\[
\frac{1}{\gamma_{n,m,0}(\rho_\star)}\cdot\frac{1}{N}
\approx
\frac{1}{n},
\]
so the $O(1/N)$ term in \eqref{eq:estimation-rate-fstar} recovers the $O(1/n)$ dependence, obtained when learning solely from $n$ labeled observations. 

(iii)
More generally, $\gamma_{n,m,\lambda}(\rho_\star)$ controls the multiplicative constant in
\eqref{eq:estimation-rate-fstar} through its dependence on $\rho_\star$.
When the deployment and rich-view estimation errors are weakly aligned (small $\rho_\star$),
$\gamma_{n,m,\lambda}(\rho_\star)$ remains close to one, so the $O(1/N)$ term is not significantly inflated.

(iv)
The quantity $\rho_\star$ is introduced only for the analysis, not observed from data, and not used to tune the method. Rather, it summarizes how much of the rich-view error remains explainable from $X$ alone. Constructing a practical proxy for $\rho_\star$ that could guide the choice of $\lambda$ is an interesting direction for future work.

\end{remark}

\section{High-Dimensional Extension}\label{sec:high dim}

In Section~\ref{sec:co-training alg}, we proposed a coupled training algorithm for
jointly estimating the deployment model $\hat f$ and the rich-view model
$\hat g$. In practice, however, directly optimizing over arbitrary functions in
infinite-dimensional Hilbert spaces is computationally inefficient, especially when the
combined inputs $(X,W)$ are high-dimensional. 

To bridge this gap between theory and practice, we adopt a dictionary-based approximation
strategy in which functions are represented as sparse linear combinations of elementary
functions, or \emph{atoms}, drawn from predefined dictionaries. The resulting procedure is efficient and remains theoretically well-justified. In this
section, we introduce a greedy alternating forward selection procedure that incrementally
constructs approximations of $(\hat f,\hat g)$ using finite dictionaries.

Let $\mathcal{D}_f$ and $\mathcal{D}_g$ be finite, symmetric dictionaries such that $\|\psi\|_{L^2(\hat P_N^X)} \leq 1$ for all $\psi \in \mathcal{D}_f$ and $\|\phi\|_{L^2(\hat P_N^Z)} \leq 1$ for all $\phi \in \mathcal{D}_g$.
 For functions
$f \in \mathcal{F} = \mathrm{span}(\mathcal{D}_f)$ and $g \in \mathcal{G} = \mathrm{span}(\mathcal{D}_g)$ of the form
\[
f = \sum_{\psi \in \mathcal{D}_f} c_\psi \psi,
\qquad
g = \sum_{\phi \in \mathcal{D}_g} c_\phi \phi,
\]
we define the associated atomic norms by
\[
\|f\|_{L^1(\mathcal{D}_f)}
\coloneqq
\inf\Bigg\{
\sum_{\psi \in \mathcal{D}_f} |c_\psi|
:\;
f = \sum_{\psi \in \mathcal{D}_f} c_\psi \psi
\Bigg\},
\]
\[
\|g\|_{L^1(\mathcal{D}_g)}
\coloneqq
\inf\Bigg\{
\sum_{\phi \in \mathcal{D}_g} |c_\phi|
:\;
g = \sum_{\phi \in \mathcal{D}_g} c_\phi \phi
\Bigg\}.
\]

\subsection{Alternating Forward Selection Algorithm}
We now describe a greedy algorithm that alternates between updating $f$ and recalibrating
$g$. 

The procedure approximately minimizes $\hat{\mathcal{L}}(f,g;\lambda)$ in~\eqref{eq:empirical_dual} by alternately updating $f$ and $g$ using forward (stepwise) selection \citep[Sec.~3.3.2]{hastie2009esl}.

\paragraph{Alternating Forward Selection Algorithm.}
Initialize $g_0 = 0$. For $k=1,2,\dots$, alternate between the following updates.

1. \textbf{Update $f$.}
Given the current pseudo-responses $\hat Y_j=g_{k-1}(Z_j)$, update $f$ by solving
\begin{align*}
(\psi_k,f_k)\in
\argmin_{\substack{\psi\in \mathcal D_f\\
f\in \mathrm{span}(\psi_1,\dots,\psi_{k-1},\psi)}}
\frac{1}{N}
\Bigg(
\sum_{i=1}^n (Y_i - f(X_i))^2 \\
\qquad +
\sum_{j=n+1}^{N} (g_{k-1}(Z_j)-f(X_j))^2
\Bigg).
\end{align*}

2. \textbf{Recalibrate $g$.}
Given the current deployment model $f_k$, update $g$ by solving
\begin{align*}    
(\phi_k,g_k)\in
\argmin_{\substack{\phi\in \mathcal D_g\\
g\in \mathrm{span}\{\phi_1,\dots,\\
\phi_{k-1},\phi\}}}&
\frac{1}{N} \Bigg(
         \sum_{j=n+1}^{N}
         (g(Z_j) - f_k(X_j))^2 \\
        &\qquad
        + \lambda \sum_{i=1}^{n}
        (Y_i - g(Z_i))^2
    \Bigg).
\end{align*}
An efficient implementation of this procedure is provided in
Appendix~\ref{app:efficient_impl} (see Algorithm~\ref{alg}).
We first establish a convergence rate, then summarize the implementation.

\begin{theorem}[Alternating Forward Selection Convergence Rate]\label{thm:oga-main-1over5}
For all $k\ge1$, and all $f \in \mathrm{span}(\mathcal{D}_f)$ and $g \in \mathrm{span}(\mathcal{D}_g)$, the iterates $(\hat f,\hat g)=(f_k,g_k)$ produced by the Alternating Forward Selection Algorithm satisfy
\begin{align*}
& \hat{\mathcal L}(\hat f, \hat g;\lambda)
\leq \hat{\mathcal L}(f, g;\lambda)
+ C_2\frac{\max\{1,\lambda\}\log (k+1)}{k} \\
&\quad
\times
\big(\|f\|_{L^1(\mathcal{D}_f)}
+\|g\|_{L^1(\mathcal{D}_g)}\big)^2,
\end{align*}
for a universal constant $C_2>0$.
\end{theorem}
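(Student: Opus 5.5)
The plan is to treat $\hat{\mathcal L}(\cdot,\cdot;\lambda)$ as a convex quadratic on the product space $H = L^2(\hat P_N^X)\times L^2(\hat P_N^Z)$ and to run a relaxed-greedy / orthogonal-greedy analysis in the style of \citet{barron2008approximation} and \citet{DeVoreT96}. Fix a comparator $(f,g)$ and set $M \coloneqq \|f\|_{L^1(\mathcal D_f)}+\|g\|_{L^1(\mathcal D_g)}$. Write
\[
a_k \coloneqq \hat{\mathcal L}(f_k,g_k;\lambda)-\hat{\mathcal L}(f,g;\lambda).
\]
If $a_k\le 0$ there is nothing to prove, so I assume $a_k>0$.

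\textbf{Step 1 (smoothness).} Every atom has unit empirical norm, and the objective is quadratic with Hessian bounded by a constant times $\max\{1,\lambda\}$ in each block. Hence for any atom $\psi\in\mathcal D_f$, the one-dimensional step $f_{k-1}\mapsto f_{k-1}+t\psi$ with $g$ fixed decreases the objective by at least
\[
\frac{\langle \nabla_f \hat{\mathcal L},\psi\rangle^2}{C\max\{1,\lambda\}}.
\]
The analogous bound holds for $\phi\in\mathcal D_g$ with $f$ fixed. Since each block step refits over the enlarged span, its decrease is at least as large as that of the best single-atom step. This gives
\[
\hat{\mathcal L}(f_k,g_{k-1})\le \hat{\mathcal L}(f_{k-1},g_{k-1})-\frac{\max_{\psi}\langle\nabla_f,\psi\rangle^2}{C\max\{1,\lambda\}},
\]
and the same for the $g$-step.

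\textbf{Step 2 (duality lower bound on the gradient).} By convexity,
\[
a \le \langle\nabla\hat{\mathcal L}(f_{k-1},g_{k-1}),(f_{k-1},g_{k-1})-(f,g)\rangle .
\]
After the previous refit, $\nabla_g$ at $(f_{k-1},g_{k-1})$ is orthogonal to $\mathrm{span}(\phi_1,\dots,\phi_{k-1})\ni g_{k-1}$. Hence the $g$-part of this inner product reduces to $-\langle\nabla_g,g\rangle$. Because the dictionaries are symmetric, this is at most $\|g\|_{L^1(\mathcal D_g)}\max_\phi\langle\nabla_g,\phi\rangle$.

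The $f$-part is the obstacle. $f_{k-1}$ was fit against $g_{k-2}$, not $g_{k-1}$, so $\nabla_f$ at $(f_{k-1},g_{k-1})$ is not orthogonal to $f_{k-1}$. On the unlabeled points $\nabla_f$ is linear in $g$, so the defect equals $\langle \nabla_f(f_{k-1},g_{k-1})-\nabla_f(f_{k-1},g_{k-2}), f_{k-1}\rangle$. This is bounded by $\|g_{k-1}-g_{k-2}\|_{L^2(\hat P_N^Z)}\,\|f_{k-1}\|_{L^2(\hat P_N^X)}$. The second factor is $O(1)$: the iterates have objective at most the initial value, so they are bounded in $L^2$ by a constant times $\|Y\|$, and I normalize so this is $\lesssim$ the scale of $M$.

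\textbf{Step 3 (recursion).} Combining the previous steps gives
\[
a_{k-1}\le M\big(\max_\psi|\langle\nabla_f,\psi\rangle|+\max_\phi|\langle\nabla_g,\phi\rangle|\big)+\delta_{k-1},
\qquad
\delta_{k-1}\lesssim\|g_{k-1}-g_{k-2}\|.
\]
Squaring and inserting the Step 1 decreases yields
\[
a_k\le a_{k-1}-\frac{(a_{k-1}-\delta_{k-1})_+^2}{C\max\{1,\lambda\}M^2}.
\]
The $g$-block is strongly convex in its own variable, with curvature at least $\min\{1,\lambda\}$-type on the sample points (I would reweight if needed). So $\|g_{k-1}-g_{k-2}\|^2$ is controlled by the objective decrease in the $g$-step. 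Therefore $\sum_k\delta_k^2\lesssim a_0$ telescopes.

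\textbf{Step 4 (solving the recursion).} A standard induction on $a_k\le a_{k-1}-c\,a_{k-1}^2$ gives $a_k\lesssim 1/(ck)$. The perturbation $\delta$ has square-summable size, and at each step either $\delta_{k-1}\le a_{k-1}/2$ (the clean step applies) or $a_{k-1}\le 2\delta_{k-1}$. A counting/pigeonhole argument over the last $k/2$ iterations, using monotonicity of $a_k$, then gives $a_k\lesssim \max\{1,\lambda\}M^2\log(k+1)/k$. The harmonic sum arising from $\sum 1/j$ against the perturbation is where the $\log$ comes from. I expect this handling of the cross-block mismatch $\delta$ to be the main difficulty. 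If the square-summability route fails, my fallback is to compare each full $f$-then-$g$ sweep to a single greedy step in the product dictionary $\mathcal D_f\times\{0\}\cup\{0\}\times\mathcal D_g$.
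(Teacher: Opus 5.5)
Step~1 and the $g$-part of Step~2 are sound. Together they correspond to the paper's Lemma~\ref{lem:cross-talk} and the orthogonality $r_k\perp(0,g_{k-1})$ used in Lemma~\ref{lem:A-identity}. You should also account for the $g$-step decrease being governed by the gradient at $(f_k,g_{k-1})$ rather than at $(f_{k-1},g_{k-1})$; the paper absorbs this as $B_k\le R_\lambda(\alpha_k+\beta_k)$. You locate the obstacle correctly: $\langle r_k,(f_{k-1},0)\rangle_\star\neq0$ because $f_{k-1}$ was fit against $g_{k-2}$.

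The gap is how you bound this defect. You write it as $\langle g_{k-1}-g_{k-2},f_{k-1}\rangle_{L^2(\hat P_N^{(2)})}$, which is correct, and then need $\|f_{k-1}\|_{L^2(\hat P_N^{(2)})}\lesssim M$. Neither part of your justification works.
\begin{itemize}
\item The objective does not control $f$ on the unlabeled sample, since there it only sees $f-g$. It is a seminorm, and its sublevel sets are unbounded as soon as the spans contain a pair $(u,v)$ with $u=v$ on the unlabeled points and $u=v=0$ on the labeled points. This is why the paper passes to the quotient $\bar{\mathcal K}$. So ``objective at most the initial value'' gives no bound on $f_{k-1}$.
\item Even granting $\sup_j\|f_j\|\le K$ for some data-dependent $K$, you cannot normalize $K$ to the scale of $M$. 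The statement is uniform over comparators, and replacing $(f,g)$ by $\epsilon(f,g)$ sends $M\mapsto\epsilon M$ while leaving $K$ unchanged. At best your route gives a bound with $\max\{K^2,R_\lambda^2M^2\}$ in place of $R_\lambda^2M^2=\max\{1,\lambda\}M^2$, which is a different and weaker theorem.
\end{itemize}

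Step~4 does not follow from Step~3 either. Square-summability alone allows $a_j=\delta_j=j^{-1/2-\varepsilon}$, which satisfies your recursion (the correction term vanishes) and has $\sum_j\delta_j^2<\infty$. The pigeonhole over the last $k/2$ steps then gives only $a_k=O(k^{-1/2})$. To do better you would need the per-step relation $\delta_{k-1}\lesssim K\sqrt{a_{k-2}-a_{k-1}}$, which brings back $K$.

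The missing idea is not to bound the defect step by step at all. Since $(f_k,0)=(f_{k-1},0)+\Pi_{S_k^f}r_k$, polarization gives
\[
\langle r_k,(f_{k-1},0)\rangle_\star=\langle \Pi_{S_k^f}r_k,(f_{k-1},0)\rangle_\star=\tfrac12\bigl(\|(f_k,0)\|_\star^2-\|(f_{k-1},0)\|_\star^2-\alpha_k^2\bigr).
\]
Summing over $k$ with $f_0=0$:
\begin{itemize}
\item the norms telescope to $-\tfrac12\|(f_k,0)\|_\star^2\le0$;
\item the $\alpha_i^2$ are absorbed by the decreases $c_i^2=a_i-a_{i+1}$.
\end{itemize}
This yields $\sum_{i=2}^{k+1}a_i\le2\sqrt2\,C_hR_\lambda\sum_{i=1}^kc_i$ (Lemma~\ref{lem:Two-Stage}). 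Weighted Cauchy--Schwarz, $(\sum_{i\le k}c_i)^2\le H_k\sum_{i\le k}ic_i^2$, together with Abel summation, then produces the harmonic number $H_k$. That is the actual source of the $\log(k+1)$.

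Working with summed inequalities rather than a per-step recursion is what removes any need to bound the iterates; the logarithm is the price. The same cross term also blocks your fallback, since an $f$-then-$g$ sweep is not a joint refit.

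The paper's final line still carries a $2a_1/k$ term, where $a_1=\hat{\mathcal L}(0,0;\lambda)-\hat{\mathcal L}(f,g;\lambda)$ is not $O(R_\lambda^2M^2)$ in general. So the initial excess needs separate care in either route.
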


This result shows that the greedy alternating procedure gives sublinear control of
the empirical comparison term through its optimization error contribution. After
$k$ steps, each component lies in a span generated by at most $k$ selected atoms.
Thus, the iteration count records both the achieved optimization accuracy and the
complexity of the selected representation along the greedy path.


\subsection{Efficient Implementation} \label{sec: algorithm implementation}

We briefly describe an efficient implementation of the Alternating Forward Selection Algorithm; full derivations and pseudocode are deferred to Appendix~\ref{app:efficient_impl}.
At a high level, each block update (the $f$-step over $\mathrm{span}(\mathcal D_f)$ and the $g$-step over $\mathrm{span}(\mathcal D_g)$) is a least-squares refit over the currently selected atoms.
Rather than resolving these refits from scratch, we maintain incremental QR decompositions for the (weighted) design matrices associated with the selected atoms, enabling $O(Nk)$ updates per iteration, where $k$ is the number of selected atoms per block.

\paragraph{Time complexity.}
Assuming each candidate atom can be evaluated on all $N$ samples in $O(N)$ time, selecting the next atom at iteration $k$ amounts to scanning the remaining candidates in the relevant dictionary and computing their correlations with the current residual after orthogonalization against the selected span (via the maintained QR factors).
This yields a total cost up to step $k$ of
\(
O\big(N (|\mathcal D_f|+|\mathcal D_g|) k\big).
\)
For comparison, a naive version of forward selection over paired atoms $(\psi, \phi)$ (i.e., scanning a product dictionary) scales as
\(
O\big(N |\mathcal D_f||\mathcal D_g| k\big),
\)
which is substantially larger. As shown in Theorem \ref{thm:oga-main-1over5}, this computational improvement comes with only a $\log(k)$ factor difference in the optimization guarantee relative to classical greedy schemes \citet{barron2008approximation,DeVoreT96}.

\section{Experiments}\label{sec:experiments}

\begin{table*}[t]
\centering
\caption{\textbf{Real-data summary.} Lower is better. For Parkinson's, we report
the single-seed subject-level split with $\lambda$ selected by labeled-only CV. For
Bank Marketing, entries are mean holdout Brier scores across 26 outer seeds.}
\label{tab:cv_summary}
\small
\setlength{\tabcolsep}{5pt}
\begin{tabular}{@{}llccccc@{}}
\toprule
Dataset & Metric & Baseline & Two-Stage & Gen. Distill. & SVM+ & Ours \\
\midrule
Parkinson's & MSE   & 77.34  & 83.32  & --     & --     & \textbf{72.60}  \\
Bank Marketing & Brier & 0.0893 & 0.0928 & 0.0907 & 0.1470 & \textbf{0.0881} \\
\bottomrule
\end{tabular}
\end{table*}

\begin{figure*}[h!]
  \centering
  \begin{subfigure}[t]{0.32\textwidth}
    \centering
    \includegraphics[width=\linewidth]{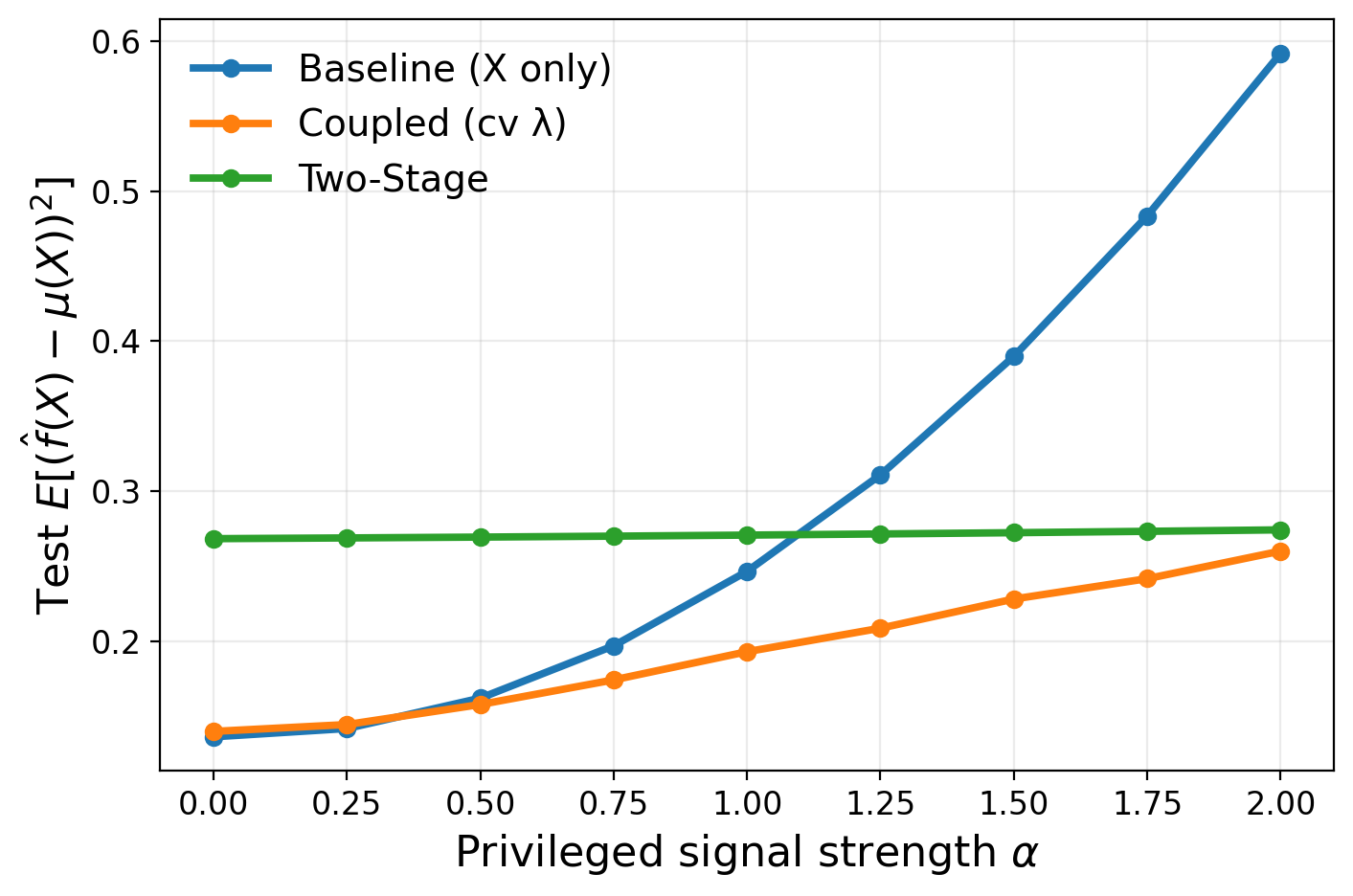}
    \caption{Privileged signal strength $\alpha$.}
    \label{fig:synthetic_signal_sweep}
  \end{subfigure}
  \hfill
  \begin{subfigure}[t]{0.32\textwidth}
    \centering
    \includegraphics[width=\linewidth]{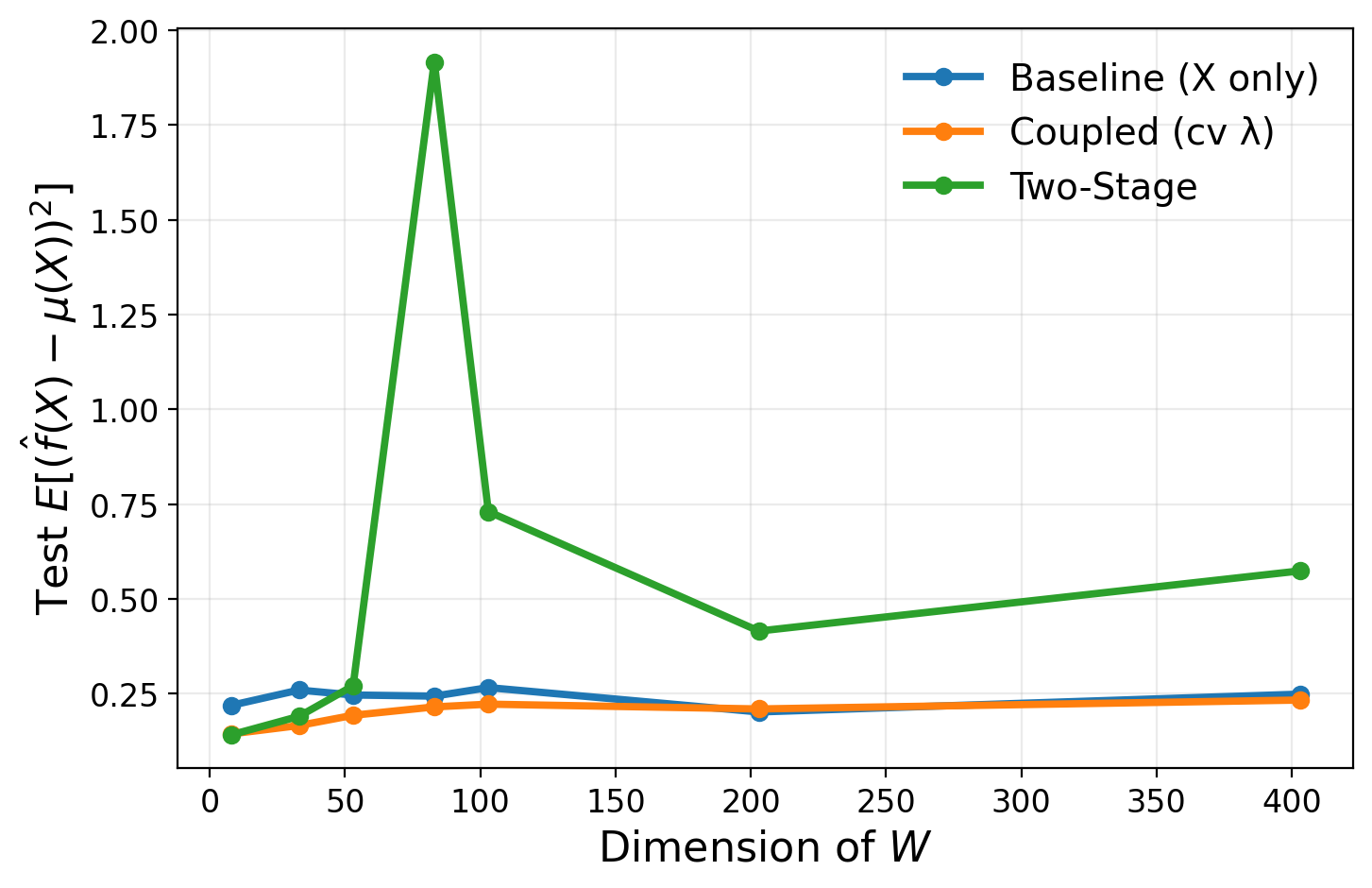}
    \caption{Dimension of $W$.}
    \label{fig:synthetic_wdim_sweep}
  \end{subfigure}
  \hfill
  \begin{subfigure}[t]{0.32\textwidth}
    \centering
    \includegraphics[width=\linewidth]{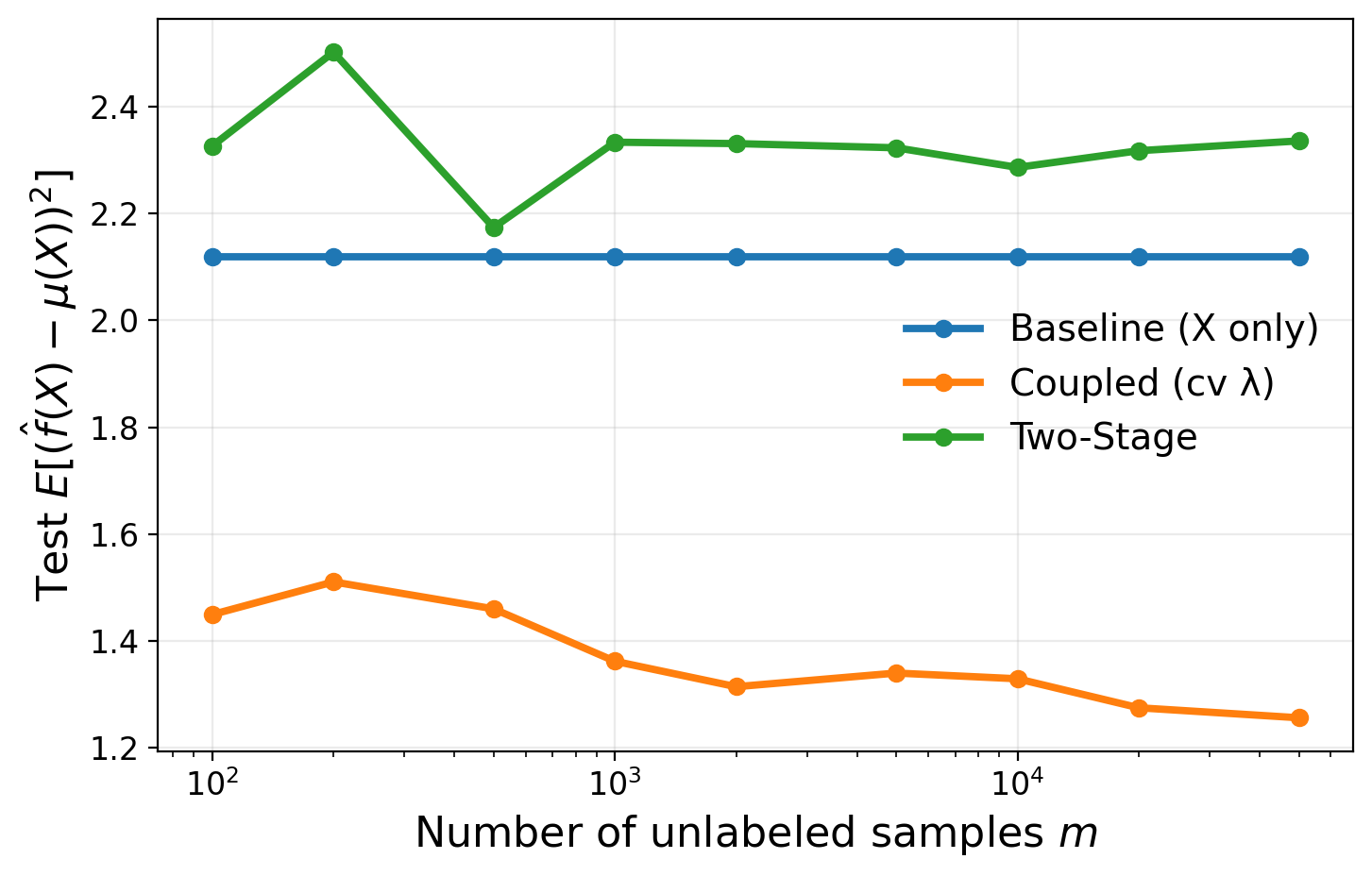}
    \caption{Number of unlabeled samples $m$.}
    \label{fig:synthetic_unlabeled_sweep}
  \end{subfigure}
  \caption{\textbf{Synthetic controls.} Test error is $\E[(\hat f(X)-\mu(X))^2]$. Coupled training adapts to useful privileged signal, is more stable than Two-Stage under nuisance privileged dimensions, and improves with additional unlabeled data.}
  \label{fig:synthetic_controls}
\end{figure*}

We evaluate the coupled objective on controlled synthetic data and two real-world tasks. 
The tuning parameter $\lambda$ interpolates between labeled-only learning on $X$ and the Two-Stage method. Unless otherwise stated, $\lambda$ is selected by cross-validation using only labeled training data. We compare against (i) supervised learning on $\{(X_i,Y_i)\}_{i=1}^n$ using only $X$ (Baseline), and (ii) the Two-Stage procedure from Section~\ref{sec:intro}. The square-loss synthetic controls match our theory, while the logistic synthetic diagnostic, random-forest regression, and Bank Marketing classification experiments are empirical demonstrations beyond the current proof regime.

\paragraph{Synthetic controls.}
We first use a linear model with known target $\mu(X)$ and privileged coordinates $W$ containing both signal and nuisance variation. Figure~\ref{fig:synthetic_controls} shows that the coupled method improves as the privileged signal strengthens, remains stable as nuisance privileged dimensions increase, and improves as the number of unlabeled samples increases. In contrast, Two-Stage pseudo-labeling can degrade when the rich-view teacher overfits nuisance variation.

\paragraph{Synthetic classification diagnostic.}
We also evaluate a binary logistic analogue of the coupled objective, replacing squared-loss term in~\eqref{eq:empirical_dual} by cross-entropy and using soft rich-view pseudo-labels. 
As shown in Figure~\ref{fig:synthetic_classification_ce} in Appendix~\ref{app:additional_experiments}, the test error is minimized at an intermediate value of $\lambda$, while the Two-Stage limit over-transfers rich-view noise.

\paragraph{Real-data benchmarks.}
For regression, we use the UCI Parkinson's Telemonitoring dataset and predict disease severity from voice measurements. Basic perturbation measures form the deployment view $X$, while more complex spectral features form the privileged view $W$. Figure~\ref{fig:parkinson_u_shape} in Appendix~\ref{app:additional_experiments} shows the expected U-shaped dependence on $\lambda$; Table~\ref{tab:cv_summary} reports the same subject-level split with the labeled-only CV choice $\hat\lambda=3$. Due to high variance from subject-level splitting, the main table reports single-seed results for Parkinson's. The larger Bank task averages 26 seeds. Cross-validation (CV) robustness checks for both are provided in Appendix~\ref{app:additional_experiments}.

For classification, we use the Bank Marketing dataset and evaluate by holdout Brier score. Static
demographic, contact, and macroeconomic variables form the deployment view $X$,
while current-call and campaign outcome variables form the privileged view $W$.
We evaluate over 26 stratified outer splits, using $n=200$ labeled examples and
$m=10{,}000$ unlabeled paired $(X,W)$ examples per split. The tuning parameter
$\lambda$ and all baselines are selected by the same fixed 4-fold
cross-validation protocol on the labeled set only. We compare against the
$X$-only Baseline, Two-Stage pseudo-labeling, a squared-loss generalized
distillation baseline, and SVM+ \citep{VAPNIK2009544}. Coupled Training beats
the $X$-only Baseline and generalized distillation in $22/26$ seeds, and beats Two-Stage
and SVM+ in $26/26$ seeds.
Figure~\ref{fig:bank_lambda_curve} in Appendix~\ref{app:additional_experiments} shows a representative split, where the
cross-validated $\hat\lambda$ lands near the interior low-Brier region of the
holdout diagnostic curve.
Table~\ref{tab:cv_summary} summarizes the real-data results. 

Overall, the experiments support the intended role of Coupled Training, exploiting privileged information when useful while attenuating it when direct pseudo-labeling would transfer nuisance signal into the final $X$-only predictor. Further implementation details and diagnostics are deferred to Appendix~\ref{app:additional_experiments}.

\section{Conclusion}

We studied learning with privileged features and unlabeled data when the deployment predictor may only use $X$ at test time. The main difficulty in this setting is negative transfer, where a rich-view predictor trained on $(X,W)$ can be useful when $W$ carries a genuine extra signal, but can also mislead the deployment model when the privileged view is weak or noisy. Our coupled objective addresses this by jointly fitting the deployment model and the rich-view model, rather than treating the rich-view predictor as a fixed teacher.

Our estimator interpolates between labeled-only learning (weak coupling) and Two-Stage pseudo-labeling (strong coupling). This interpolation is reflected both in the population characterization and in the empirical U-shaped dependence on $\lambda$ observed across our synthetic and real data experiments. In addition, for the high-dimensional setting, we developed a dictionary-based AFS procedure and established a sublinear optimization guarantee, which significantly reduces the time complexity.

\paragraph{Limitations.}
Our theory covers square-loss regression with convex function classes, so the random-forest and classification experiments are empirical evidence beyond the present proof setting. The method also assumes paired unlabeled samples with privileged features; extensions to misspecification, distribution shift, semi-paired data, and broader classification guarantees remain future work.

\section*{Acknowledgements}

We extend our special thanks to the anonymous reviewers for their insightful comments and suggestions that greatly enhanced this work. Hagrass gratefully acknowledges financial support from the Schmidt DataX Fund at Princeton University made possible through a major gift from the Schmidt Futures Foundation. Klusowski is grateful for support from the National Science Foundation through NSF CAREER DMS-2239448 and the Alfred P. Sloan Foundation through a Sloan Research Fellowship.

\section*{Impact Statement}

This paper presents work whose goal is to advance the field of machine learning. There are many potential societal consequences of our work, none of which we feel must be specifically highlighted here.

\bibliography{icml2026/refs_ICML}
\bibliographystyle{icml2026/icml2026}

\newpage
\appendix
\onecolumn
\input{icml2026/supplement}

\end{document}

%% file: icml2026/supplement.tex
\section{Efficient Implementation of Alternating Forward Selection}\label{app:efficient_impl}

Recall from \cref{sec:intro} that we observe $n$ labeled and $m$ unlabeled samples with $N=n+m$, and that $\Z=\X\times\W$. Let
$\pi_X:\Z\to\X$ be the canonical projection $\pi_X(x,w)=x$.
Define the finite measures on $\Z$ by
\[
\tilde P^{(1)} \coloneqq \sum_{i=1}^{n} \delta_{Z_i},\qquad
\tilde P^{(2)} \coloneqq \sum_{j=n+1}^{N} \delta_{Z_j},\qquad
\tilde P^{(3)} \coloneqq \lambda\,\tilde P^{(1)},
\]
and normalize by $N$.
\[
\hat P_N^{(\ell)} \coloneqq \frac{1}{N}\tilde P^{(\ell)}, \qquad \ell\in\{1,2,3\}.
\]
Note that $\hat P_N^{(1)}+\hat P_N^{(2)} = \hat P_N^Z$.
We will work exclusively with $Z$-indexed norms by viewing the deployment class as a subspace of functions on $\Z$.
Define the lifted deployment space
\[
\Hf \;\coloneqq\; \bigl\{\, f\in L^2(\hat P_N^Z)\ :\ f \text{ is }\sigma(\pi_X)\text{-measurable} \,\bigr\},
\]
i.e., $f\in\Hf$ implies $f(x,w)$ depends only on $x$ (equivalently, there exists $\bar f:\X\to\R$ such that
$f(x,w)=\bar f(x)$ $\hat P_N^Z$-a.e.).  Let $Q_g \coloneqq \hat P_N^{(2)}+\hat P_N^{(3)}$ and set
\[
\Hg \;\coloneqq\; L^2(Q_g).
\]
We assume the function classes live in these Hilbert spaces, $\mathcal{F}\subset \mathcal{H}_f,\mathcal{G}\subset \mathcal{H}_g$. To keep notation light, we identify the model class $\mathcal{F}$ with its lift to $\mathcal{Z}$, namely, we view each $f\in\mathcal{F}$ as a function $f(x,w)$ that is $\sigma(\pi_X)$-measurable (hence depends only on $x$, i.e., is constant in $w$ given $x$). We use $\mathcal{D}_f$ in the same spirit, and throughout the appendix it refers to a dictionary in the lifted space $\mathcal{H}_f$.
With this convention, for $f\in\Hf$, $g\in\Hg$, the empirical objective can be written as
\[
\hat{\mathcal{L}}(f,g;\lambda)
\;\coloneqq\;
\|Y-f\|_{L^2(\hat P_N^{(1)})}^2
\;+\;
\|f-g\|_{L^2(\hat P_N^{(2)})}^2
\;+\;
\|Y-g\|_{L^2(\hat P_N^{(3)})}^2.
\]

Let
\[
\mathcal K_1 \coloneqq L^2(\hat P_N^{(1)}+\hat P_N^{(2)}),
\qquad
\mathcal K_2 \coloneqq L^2(\hat P_N^{(2)}+\hat P_N^{(3)}),
\qquad
\mathcal K \coloneqq \mathcal K_1\times \mathcal K_2 .
\]
For \(u=(u_1,u_2)\) and \(v=(v_1,v_2)\) in \(\mathcal K\), define
\[
\langle u,v\rangle_\star
=
\langle u_1,v_1\rangle_{L^2(\hat P_N^{(1)})}
+
\langle u_1-u_2,v_1-v_2\rangle_{L^2(\hat P_N^{(2)})}
+
\langle u_2,v_2\rangle_{L^2(\hat P_N^{(3)})}.
\]
This bilinear form is positive semidefinite. Let
\[
\mathcal N_\star
\coloneqq
\{u\in\mathcal K:\|u\|_\star=0\}
\]
and define the quotient Hilbert space
\[
\bar{\mathcal K}\coloneqq \mathcal K/\mathcal N_\star.
\]
We write \([u]\) for the equivalence class of \(u\), although, when no confusion is possible, we suppress brackets and use the same notation for a representative and its quotient class.
The deployment and rich-view model spaces are embedded as
\[
{\Hs}
\coloneqq
\{[(f,g)]: f\in\Hf,\ g\in\Hg\}
\subset \bar{\mathcal K}.
\]
Below, all projections, residuals, spans, and orthogonality statements are taken in
\(\bar{\mathcal K}\) with respect to \(\langle\cdot,\cdot\rangle_\star\).

Let \(Y\) be any extension of the labeled responses to all \(N\) sample points, with
\(Y(Z_i)=Y_i\) for \(i\le n\). Define
\[
T\coloneqq (Y,Y)\in\bar{\mathcal K}.
\]
The choice of \(Y\) on unlabeled samples is irrelevant, since the unlabeled term depends on
\((Y-f)-(Y-g)=g-f\). Hence, for every \(f,g\),
\[
\|T-(f,g)\|_\star^2
=
\|Y-f\|_{L^2(\hat P_N^{(1)})}^2
+
\|f-g\|_{L^2(\hat P_N^{(2)})}^2
+
\|Y-g\|_{L^2(\hat P_N^{(3)})}^2
=
\hat{\mathcal L}(f,g;\lambda).
\]

\begin{algorithm}[t!]
  \caption{Alternating Forward Selection}
  \label{alg}
  \begin{algorithmic}
    \STATE \textbf{Input.} Dictionaries $\mathcal{D}_f^\star$, $\mathcal{D}_g^\star$; initial values $(f_0, g_0) = (0,0)$, $S_0^f=S_0^g=\{0\}$; target $(Y, Y)$.

    \FOR{$k = 1, 2, \ldots$}
        \STATE Compute the residual.
        \[
            r_k \gets (Y, Y) - (f_{k-1}, g_{k-1}).
        \]

        \STATE \textit{Step 1. $f$-selection and projection}
        \STATE Select
        \[
        \psi_k^\star = (\psi_k, 0)
\in
\argmax_{\substack{a\in\mathcal D_f^\star\\
\|\Pi_{S_{k-1}^f}^{\perp}a\|_\star>0}}
\frac{\langle r_k,\Pi_{S_{k-1}^f}^{\perp}a\rangle_\star}
{\|\Pi_{S_{k-1}^f}^{\perp}a\|_\star}.
        \]

        \STATE Update
        \[
            S_k^f \gets \text{span}(\psi_1^\star, \ldots, \psi_k^\star), \]\[
            (f_k, g_{k-1}) \gets (f_{k-1} ,\, g_{k-1})+ \Pi_{S_k^f} r_k,
        \]
        
        \STATE and set
        \[
            r_k^{(g)} \gets r_k - \Pi_{S_k^f} r_k.
        \]

        \STATE \textit{Step 2. $g$-selection and projection}
        \STATE Select
        \[
            \phi_k^\star =(0,\phi_k)\in \argmax_{\substack{b\in\mathcal D_g^\star\\
\|\Pi_{S_{k-1}^g}^{\perp}b\|_\star>0}}
            \frac{\langle r_k^{(g)}, \Pi^{\perp}_{S_{k-1}^g} b \rangle_\star}
                 {\|\Pi^{\perp}_{S_{k-1}^g} b\|_\star}.
        \]

        \STATE Update
        \[
            S_k^g \gets \text{span}(\phi_1^\star, \ldots, \phi_k^\star), \]\[
            (f_k, g_k) \gets (f_k,\, g_{k-1} )+ \Pi_{S_k^g} r_k^{(g)},
        \]
    \ENDFOR
  \end{algorithmic}
\end{algorithm}


Without loss of generality, we suppose that the dictionaries $\mathcal{D}_f$ and $\mathcal{D}_g$ are bounded by $1$ in the \emph{pooled} empirical norm, i.e., 
\[
\sup_{\psi\in\mathcal{D}_f}\|\psi\|_{L^2(\hat P_N^Z)}\le 1,\qquad\sup_{\phi\in\mathcal{D}_g}\|\phi\|_{L^2(\hat P_N^Z)}\le 1.
\]
In the following, we use canonical embeddings to construct the dictionary for the Hilbert space $\Hs$ based on $\mathcal{D}_f$ and $\mathcal{D}_g$.
Define embedded versions.
\[
\mathcal{D}_f^\star \coloneqq \{(\psi,0) : \psi \in \mathcal{D}_f\}, \qquad
\mathcal{D}_g^\star \coloneqq \{(0,\phi) : \phi \in \mathcal{D}_g\}.
\]
Assume the union $\mathcal{D}^\star \coloneqq \mathcal{D}_f^\star \cup \mathcal{D}_g^\star$ is balanced, i.e., whenever $\zeta\in\mathcal{D}^\star$, we have $-\zeta\in\mathcal{D}^\star$. Since $\hat P_N^{(1)}+\hat P_N^{(2)}=\hat P_N^Z$, we have
\[
\|(\psi,0)\|_\star^2=\|\psi\|_{L^2(\hat P_N^{(1)})}^2+\|\psi\|_{L^2(\hat P_N^{(2)})}^2=\|\psi\|_{L^2(\hat P_N^Z)}^2\le 1.
\]
Moreover,
\[
\|(0,\phi)\|_\star^2
=\|\phi\|_{L^2(\hat P_N^{(2)})}^2+\|\phi\|_{L^2(\hat P_N^{(3)})}^2
=\frac{1}{N}\sum_{j=n+1}^{N} \phi(Z_j)^2+\frac{\lambda}{N}\sum_{i=1}^{n} \phi(Z_i)^2
\le \max\{1,\lambda\}\,\|\phi\|_{L^2(\hat P_N^Z)}^2
\le \max\{1,\lambda\}.
\]
Denote $R_\lambda\coloneqq\max\{1,\sqrt{\lambda}\}$. Thus every embedded atom satisfies $\|a\|_\star\le R_\lambda$. 
Throughout this subsection, $\Pi_S$ and $\Pi_S^\perp$ denote the orthogonal projections onto a subspace $S \subset \bar{\mathcal K}$ and its orthogonal complement, respectively, with respect to the inner product $\langle\cdot,\cdot\rangle_\star$.

\section{Implementation Details and Additional Experiments} \label{app:additional_experiments}

\subsection{Parkinson's (Telemonitoring)}

\paragraph{Dataset.} We use the UCI Parkinson’s Telemonitoring dataset to evaluate Coupled Training. The target is \texttt{total\_UPDRS}, which typically requires in-person clinical assessment, so labeled data are limited. Each row is a single voice recording, and each \texttt{subject} may contribute multiple recordings. Since recording conditions at deployment are less controlled, we treat some acoustic descriptors as privileged features available only during training.

\paragraph{Feature split.}
We partition covariates into deployment features $X$ and privileged features $W$ to model the fact that some high-fidelity acoustic descriptors are only reliable in controlled clinical conditions.
\begin{itemize}
    \item \textbf{Deployment features} $X$ (10) are \{\texttt{age, sex, test\_time, Jitter(\%), Jitter(Abs), Jitter:RAP, Jitter:PPQ5, Jitter:DDP, Shimmer, Shimmer(dB)}\}.
    \item \textbf{Privileged features} $W$ (9) are \{\texttt{Shimmer:APQ3, Shimmer:APQ5, Shimmer:APQ11, Shimmer:DDA, NHR, HNR, RPDE, DFA, PPE}\}.
\end{itemize}

Here $W$ contains higher-fidelity acoustic descriptors that are reliable primarily in controlled clinical recordings,
whereas $X$ contains features assumed available at deployment.
We use $W$ only during training to improve the deployment predictor $\hat f(X)$.

\paragraph{Data split.}
To avoid leakage across repeated recordings from the same subject, we split the data at the subject level. We hold out 10\% of subjects as a test set (\texttt{seed}=202). From the remaining subjects, we label 15\% and treat the rest as unlabeled.

\paragraph{Preprocessing.}
All preprocessing is fit on the training set only.
We standardize non-demographic voice features and apply PCA with 12 components (or fewer if limited by dimension),
standardize demographic features (\texttt{age} and \texttt{test time}), and leave \texttt{sex} unchanged.
We also standardize $Y$ using statistics computed from labeled training targets.
\paragraph{Models and hyperparameters.}
We use a random forest regressor with 1400 trees and maximum depth 12 (minimum leaf size 4); remaining settings are kept fixed across methods.

\begin{figure}[t]
  \centering
  \includegraphics[width=0.6\linewidth]{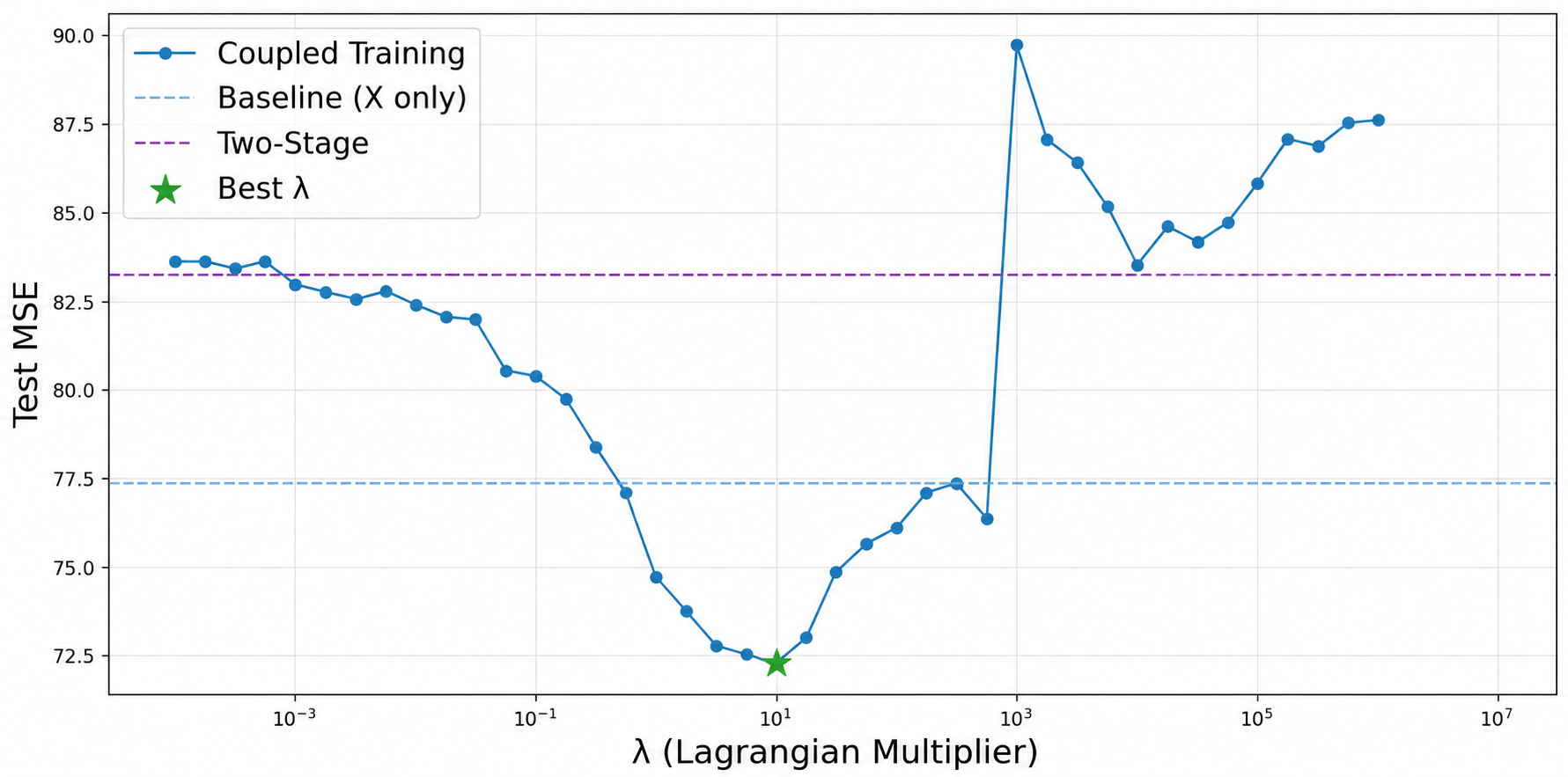}
  \caption{\textbf{Parkinson's dataset.} Test MSE versus $\lambda$.}
  \label{fig:parkinson_u_shape}
\end{figure}

\paragraph{Training procedure.}
Baseline trains $f$ on labeled data only using $X\to Y$.
Two-Stage trains a teacher $g$ on labeled data using $(X,W)\to Y$, pseudo-labels unlabeled data with $\tilde Y_U=g(X_U,W_U)$, then trains a student on $(X_L\cup X_U)\to(Y_L\cup \tilde Y_U)$.
\textbf{Coupled Training} runs the coupled training updates for up to 15 iterations. For each $\lambda$ in a log-spaced grid $\{10^{-4},\dots,10^{6}\}$ with 41 points, we initialize $f$ from the labeled baseline and initialize $g$ by fitting to a constant target $g_0\equiv 0$. In this random-forest experiment we use the penalized Lagrangian form of the square-loss objective,
\[
\sum_{i\in L}(Y_i-f(X_i))^2
+
\sum_{j\in U}(g(X_j,W_j)-f(X_j))^2
+
\lambda\sum_{i\in L}(Y_i-g(X_i,W_i))^2 ,
\]
with all targets standardized using the labeled training responses. Given the current $g$, the $f$-update is a random-forest regression of the targets $(Y_L,g(X_U,W_U))$ on the deployment features $(X_L,X_U)$. Given the current $f$, the $g$-update is a weighted random-forest regression of the targets $(Y_L,f(X_U))$ on the rich-view features $((X_L,W_L),(X_U,W_U))$, using weight $\lambda$ for labeled samples and weight $1$ for unlabeled samples. We employ early stopping based on stabilization of the disagreement on unlabeled data (patience 2).
We then evaluate the resulting $\hat f(X)$ on the held-out test set.

\paragraph{Evaluation.}
All models are trained to predict scaled $Y$; at evaluation, we inversely transform predictions back to the original UPDRS scale, clip to $[0,100]$, and report test mean squared error (MSE).

Figure~\ref{fig:parkinson_u_shape} reports test MSE as a function of $\lambda$; performance is best at an intermediate value (around $10^{1}$).

\paragraph{Cross-validation for $\lambda$.}
We select $\hat\lambda$ using 5-fold cross-validation on the labeled set only, splitting by subject to prevent leakage
(\textsc{GroupKFold}). In each fold, we train Coupled Training using the fixed unlabeled pool $\mathcal D_U$
and evaluate the validation MSE of the deployment model $\hat f_\lambda$ on the held-out labeled fold. 
We take $\hat\lambda\in\arg\min_{\lambda\in\Lambda}\mathrm{MSE}_{\mathrm{val}}(\hat f_\lambda)$,
where $\Lambda=\{10^{-4},\ldots,10^{6}\}$ is a log-spaced grid with 41 points and $(\hat f_\lambda,\hat g_\lambda)$ denotes the models returned by Coupled Training
when run with tuning parameter $\lambda$.
The main-table result uses this CV-selected $\hat\lambda$; because the dataset has few independent subjects after subject-level splitting, we treat this as the primary diagnostic split rather than averaging a high-variance mean.
As a robustness check over seeds $\{202,42,123,999,777,2023,2024,888\}$, CV-selected Coupled Training beats both Baseline and Two-Stage in $6/8$ splits, with wins at seeds $\{202,999,777,2023,2024,888\}$.

\subsection{Bank Marketing}\label{app:bank_marketing}

\begin{figure}[t]
  \centering
  \includegraphics[width=0.6\linewidth]{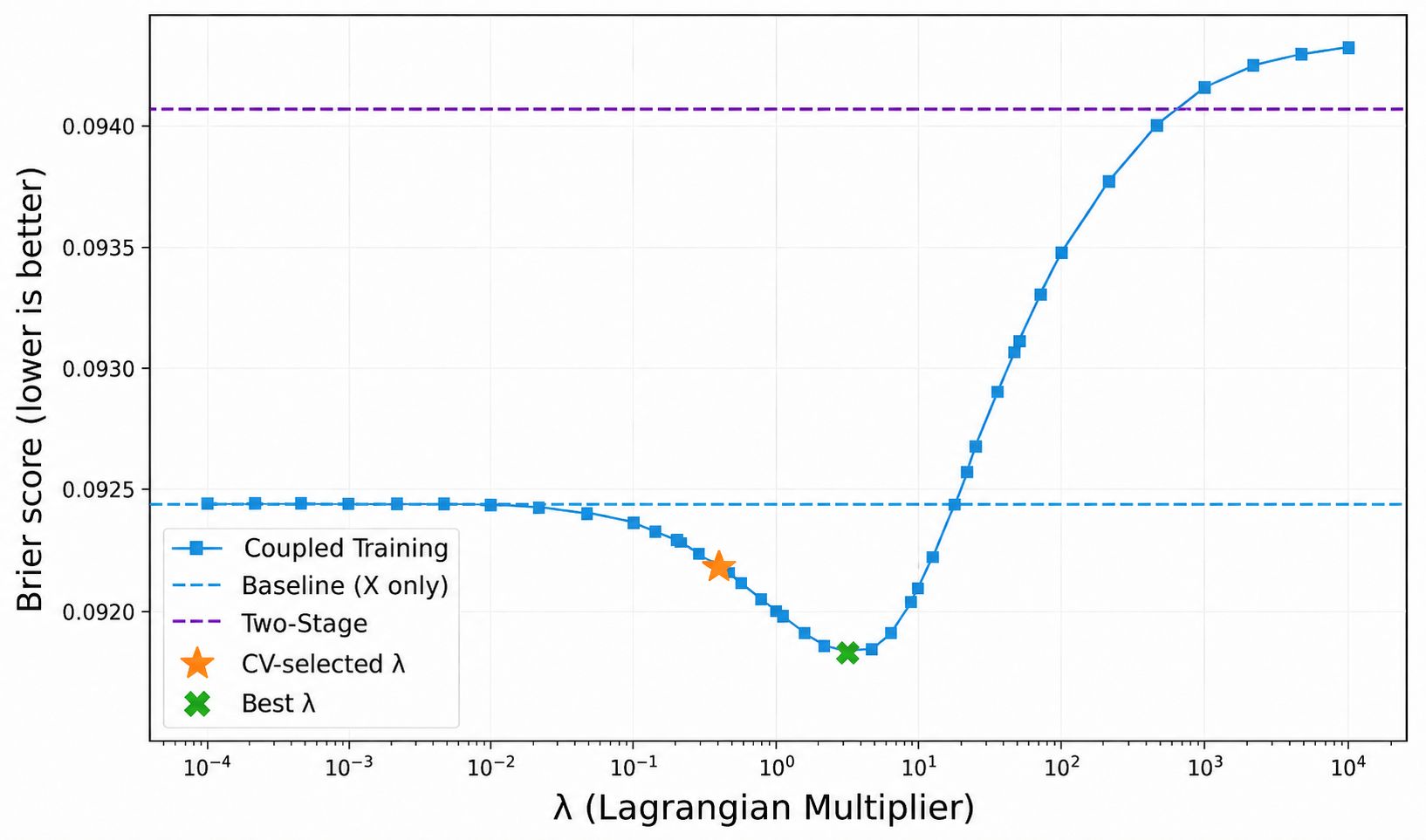}
  \caption{\textbf{Bank Marketing dataset.} Holdout Brier score versus $\lambda$.}
  \label{fig:bank_lambda_curve}
\end{figure}

\paragraph{Dataset.}
We evaluate Coupled Training on the UCI Bank Marketing dataset, a binary
classification task where the goal is to predict whether a client subscribes to
a term deposit. The dataset contains $41{,}188$ examples and has a positive class
rate of approximately $0.113$. We report Brier score, so lower values are better.

\paragraph{Feature split.}
We use a train-time privileged information split motivated by the distinction
between pre-deployment information and information that is tied to the current
campaign call or campaign outcomes.
\begin{itemize}
    \item \textbf{Deployment features} $X$ are
    \{\texttt{age}, \texttt{job}, \texttt{marital}, \texttt{education},
    \texttt{default}, \texttt{housing}, \texttt{loan}, \texttt{contact},
    \texttt{month}, \texttt{day\_of\_week}, \texttt{emp.var.rate},
    \texttt{cons.price.idx}, \texttt{cons.conf.idx}, \texttt{euribor3m},
    \texttt{nr.employed}\}.
    \item \textbf{Privileged features} $W$ are
    \{\texttt{duration}, \texttt{pdays}, \texttt{previous}, \texttt{poutcome}\}.
\end{itemize}
The final deployed predictor uses only $X$. The privileged view $W$ is used only
during training.

\paragraph{Data split.}
For each outer seed in
\[
\{2034,2035,\ldots,2059\},
\]
we use a stratified random $80/20$ train--test split. From the training pool, we
sample $n=200$ labeled examples stratified by class and $m=10{,}000$ unlabeled
examples from the remaining training examples. In each split, the labeled set
contains $23$ positives and $177$ negatives.

\paragraph{Preprocessing.}
All preprocessing is fitted on the training pool only. Numeric features are
standardized, categorical features are one-hot encoded with unknown categories
ignored at test time, and the transformed design matrices are dense. The test
set is never used for preprocessing, hyperparameter selection, or model fitting.

\paragraph{Training-time corruption of privileged features.}
To model noisy or imperfect train-time privileged information, we corrupt $W$
only on training rows. The fixed corruption preset is
\texttt{mix\_plus\_noise}, where a fraction $0.35$ of the training rows has its
privileged feature block row-mixed, then featurewise dropout with probability
$0.10$ is applied, followed by Gaussian noise with standard deviation $0.15$.
The same corruption protocol is used for Coupled Training, Two-Stage, and
squared-loss generalized distillation whenever those methods use $W$.

\paragraph{Coupled model and primary baselines.}
The $X$-only Baseline, Two-Stage, squared-loss generalized distillation, and
Coupled Training models all use linear squared-loss predictors with an
unpenalized intercept. The fixed ridge parameters for Coupled Training and
Two-Stage are $\alpha_f=300$ for the deployment model and $\alpha_g=0.01$ for
the rich-view model. The agreement weight on unlabeled examples is $1$, and the
Two-Stage pseudo-label weight is also $1$.

Let $\bar X$ denote $X$ augmented with an intercept and let
$\bar Z$ denote $(X,W)$ augmented with an intercept. For each $\lambda$, the
coupled model solves
\[
\min_{\beta,\gamma}
\sum_{i\in L} (Y_i-\bar X_i^\top\beta)^2
+
\sum_{j\in U}(\bar X_j^\top\beta-\bar Z_j^\top\gamma)^2
+
\lambda\sum_{i\in L}(Y_i-\bar Z_i^\top\gamma)^2
+
\alpha_f\|\beta_{-0}\|_2^2
+
\alpha_g\|\gamma_{-0}\|_2^2,
\]
where the subscript $-0$ excludes the intercept.

\paragraph{Cross-validation for $\lambda$.}
We select $\lambda$ by fixed 4-fold stratified cross-validation on the
$200$ labeled examples only. The unlabeled examples are used for training inside
each fold, but their labels are never used for selection. The grid is
\[
\Lambda
=
\operatorname{unique}\Big(
\operatorname{logspace}(-4,4,25)
\cup
\operatorname{logspace}(-1,2,21)
\Big).
\]
After selecting $\hat\lambda$, we refit on the full labeled set and the same
unlabeled pool, then evaluate the $X$-only deployment predictor on the held-out
test set.

\paragraph{Squared-loss generalized distillation.}
We include a same-loss generalized-distillation baseline. The teacher is a
ridge-regularized linear predictor trained on a privileged train-time view
$Z$, where $Z$ is selected from either $W$ alone or $[X,W]$. For a teacher ridge
parameter $\alpha_T$, the teacher solves
\[
\hat\gamma
\in
\arg\min_{\gamma}
\sum_{i\in L}(Y_i-\bar Z_i^\top\gamma)^2
+
\alpha_T\|\gamma_{-0}\|_2^2 .
\]
It produces soft teacher targets
\[
q_i=\bar Z_i^\top\hat\gamma .
\]
The student is an $X$-only ridge model trained by soft squared-loss
distillation.
\[
\min_{\beta}
\sum_{i\in L}(Y_i-\bar X_i^\top\beta)^2
+
a_L\sum_{i\in L}(q_i-\bar X_i^\top\beta)^2
+
a_U\sum_{j\in U}(q_j-\bar X_j^\top\beta)^2
+
\alpha_S\|\beta_{-0}\|_2^2 .
\]
There is no hard pseudo-label term. We select the teacher view, teacher ridge
parameter, student ridge parameter, and soft-distillation weights by the same
fixed 4-fold labeled-only CV protocol used for the coupled model. The narrowed
search grid is
\[
Z\in\{W,[X,W]\},\qquad
\alpha_T\in\{0.003,0.01,0.03\},\qquad
\alpha_S\in\{90,300,900\},
\]
and
\[
(a_L,a_U)
\in
\{(0,0.5),(0,1),(0.25,0.5),(0.25,1),(0.5,0.5),(0.5,1)\}.
\]
This grid contains the ordinary squared-loss Two-Stage baseline as the special
case
\[
Z=[X,W],\qquad
\alpha_T=\alpha_g=0.01,\qquad
\alpha_S=\alpha_f=300,\qquad
(a_L,a_U)=(0,1).
\]

\paragraph{Vapnik SVM+.}
We also compare against Vapnik SVM+, a labeled-only privileged information
baseline. SVM+ uses $W$ only for the labeled training examples and predicts with
$X$ only at test time. Since classical SVM+ is not a semi-supervised method, it
does not use the unlabeled pool for learning. We tune
\[
C\in\{0.1,1,10\},
\qquad
\gamma\in\{0.1,1,10\}
\]
by the same fixed 4-fold labeled-only CV protocol, then refit on all
$200$ labeled examples. We use the classical SVM+ limit $\Delta=\infty$ and
Platt calibration with calibration parameter $C=1$ to obtain probabilities for
Brier-score evaluation.

\paragraph{Evaluation.}
For all methods, the final test-time predictor is an $X$-only predictor. We
clip predicted probabilities to $[10^{-6},1-10^{-6}]$ and report holdout Brier
score. All model choices are made without using holdout labels.

\paragraph{Results.}
Table~\ref{tab:bank_marketing_details} reports the 26-seed comparison. The
CV-selected coupled model obtains mean holdout Brier score $0.0881$, improving
over the $X$-only Baseline, Two-Stage, squared-loss generalized distillation,
and Vapnik SVM+. Squared-loss generalized distillation improves over Two-Stage,
with mean Brier score $0.0907$ compared with $0.0928$, but remains worse than
Coupled Training. The mean Brier-score gain of Coupled Training is $0.0011$
against the Baseline, $0.0047$ against Two-Stage, $0.0026$ against squared-loss
generalized distillation, and $0.0589$ against Vapnik SVM+. Coupled Training
beats the Baseline in $22/26$ seeds, Two-Stage in $26/26$ seeds, squared-loss
generalized distillation in $22/26$ seeds, and Vapnik SVM+ in $26/26$ seeds.
The mean regret of the CV-selected $\hat\lambda$ relative to the per-seed
holdout-best $\lambda$ is $0.00125$ Brier score, with median regret $0.00050$.

\begin{table}[t]
\centering
\caption{\textbf{Bank Marketing detailed comparison.} Means and confidence
intervals are computed across 26 outer seeds. ``Coupled gain'' is the mean
Brier-score improvement of the CV-selected coupled model over the corresponding
method; positive values favor Coupled Training.}
\label{tab:bank_marketing_details}
\small
\setlength{\tabcolsep}{4.5pt}
\begin{tabular}{@{}lcccc@{}}
\toprule
Method & Uses unlabeled? & Mean Brier & 95\% CI & Coupled gain \\
\midrule
Baseline ($X$ only)          & No  & 0.0893 & [0.0888, 0.0897] & 0.0011 \\
Two-Stage                   & Yes & 0.0928 & [0.0918, 0.0939] & 0.0047 \\
Gen. Distill. (squared)     & Yes & 0.0907 & [0.0894, 0.0921] & 0.0026 \\
Vapnik SVM+                 & No  & 0.1470 & [0.1221, 0.1720] & 0.0589 \\
Coupled, CV-selected $\lambda$ & Yes & \textbf{0.0881} & [0.0874, 0.0889] & -- \\
\bottomrule
\end{tabular}
\end{table}

\subsection{PneumoniaMNIST Experiment with AFS-Based Dictionary Selection}\label{subsec:pneumoniamnist}
\paragraph{Dataset.}
We use Algorithm~\ref{alg} as a dictionary-selection stage on PneumoniaMNIST, a binary classification task with $28\times 28$ grayscale images.
We use the flattened image as deployment features $X\in\mathbb R^{784}$ and construct privileged features $W\in\mathbb R^{18}$ available only during training, with the first coordinate of $W$ a noisy version of $Y$ and the remaining ones random linear projections of $X$.
The label $Y\in\{0,1\}$ indicates pneumonia.
\paragraph{Feature split. }
Let $X\in\R^{N\times 784}$ be pixel features scaled to $[0,1]$. We generate $W\in\R^{N\times 18}$ by
(i) $W_{:,1}=Y+\varepsilon$ with i.i.d.\ $\varepsilon\sim\mathcal{N}(0,0.3^2)$, and
(ii) $W_{:,2:18}=XP$ where $P\in\R^{784\times 17}$ has i.i.d.\ entries $\mathcal{N}(0,0.1^2)$ (fixed given the seed).

\paragraph{Data split.}
We use a single stratified split with $n=100$ labeled points, $m=2000$ unlabeled points, and $n_{\mathrm{test}}=1500$ test points (\texttt{seed}=42).
\paragraph{Preprocessing.}
For all linear/kernel methods, we standardize both $X$ and $W$ using \texttt{StandardScaler} fit on labeled$\cup$unlabeled data only
(the test set is never used for fitting). We also feed standardized $W$ to the separately trained rich-view teachers described below.

\paragraph{AFS selection stage and dictionaries.}
For each of 10 log-spaced values of $\lambda$ from $10^{-2}$ to $10^{2}$,
we run Algorithm~\ref{alg} in $\Hs=\Hf\times\Hg$
(Section~\ref{sec: algorithm implementation}) for $K=70$ iterations with squared loss.
The internal iterates $(f_k,g_k)$ drive the successive atom selections. After iteration $K$,
we retain only the selected deployment span $S_K^f$ for the post-selection refit below;
neither $f_K$ nor $g_K$ is used as the final deployment predictor or as the refit teacher.
We consider the following dictionary constructions.
\begin{itemize}\itemsep2pt
\item \textbf{Linear AFS (random-feature dictionary).}
Embedded dictionaries use fixed Gaussian random projections with sizes
$\texttt{dict\_size\_f}=\texttt{dict\_size\_g}=2048$ for $f$ and $g$ respectively.
\item \textbf{Kernelized AFS (RBF kernel dictionary).}
Centers are all labeled points plus up to $500$ randomly chosen unlabeled points.
Kernel bandwidths $\gamma_f$ (for $X$) and $\gamma_g$ (for $(X,W)$) are initialized by the median heuristic
(computed on up to $600$ samples), and tuned by stratified $5$-fold CV on labeled data, maximizing AUROC over
$\gamma\in\gamma_0\cdot\{0.25,0.5,1,2,4\}$ and $\alpha\in\{10^{-3},10^{-2},10^{-1},1,10\}$.
\end{itemize}

\paragraph{Post-selection teacher and refit.}
Separately from the selection-stage iterate $g_K$, we fit a labeled-only rich-view teacher
$g_{\mathrm T}$ using $(X_L,W_L)\to Y_L$ (ridge in the linear case; kernel ridge in the
kernel case with tuned $\alpha_g$). We form pseudo-targets
$\tilde Y_U=g_{\mathrm T}(X_U,W_U)$ and fit the final deployment predictor
$\hat f_{\mathrm{refit}}$ by ridge regression with $\alpha_{\mathrm{refit}}=10^{-3}$ on
$(X_L\cup X_U)\to(Y_L\cup\tilde Y_U)$, restricted to $S_K^f$.
Thus, Algorithm~\ref{alg} determines the deployment dictionary span, whereas this separate
ridge refit determines the reported coefficients. Theorem~\ref{thm:oga-main-1over5}
concerns the selection-stage iterates, not the post-selection refit or its test metrics.

\paragraph{Training procedure.}
Baseline trains $f$ on labeled data only using $X\to Y$ (ridge for linear; kernel ridge with tuned $(\gamma_f,\alpha_f)$ for kernel).
Two-Stage uses the same labeled-only rich-view teacher construction to pseudo-label the
unlabeled data, but fits its deployment student in the full model class rather than restricting
it to the AFS-selected span $S_K^f$.
For reference, we also report a CNN baseline (image-only student) and a CNN Two-Stage teacher--student pipeline where the teacher fuses
image features with standardized $W$, pseudo-labels unlabeled data (threshold $0.5$), and retrains the image-only student;
we use Adam with learning rate $10^{-3}$, batch size $32$, and $30$ epochs.

\paragraph{Evaluation.}
For the AFS-based methods, the reported deployment score is $\hat f_{\mathrm{refit}}(X)$.
Models output real-valued scores; we apply a sigmoid to obtain probabilities and report test AUROC (primary),
along with accuracy at threshold $0.5$ and probability MSE against $\{0,1\}$ targets.

\paragraph{Results.}
Figure~\ref{fig:pneumoniamnist} shows the test AUROC of the post-selection refits as a function of $\lambda$. Coupled Training with AFS-based selection performs better than the Baseline and Two-Stage methods for many $\lambda$ values, with the strongest results at intermediate $\lambda$. The effect is especially clear for linear models. For a subset of $\lambda$ choices, linear AFS-based selection even outperforms both CNN Baseline and Two-Stage.

\begin{figure}[t]
  \centering
  \includegraphics[width=0.6\linewidth]{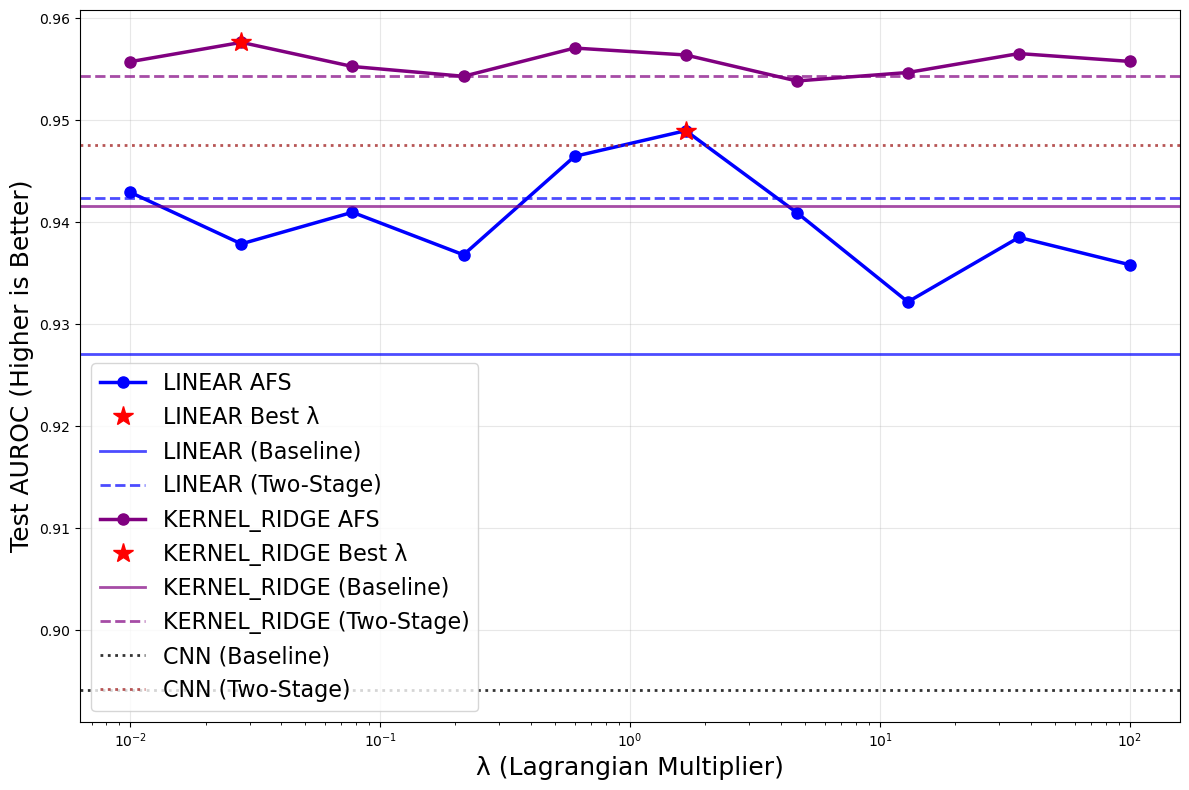}
  \caption{\textbf{PneumoniaMNIST.} Test AUROC of the post-selection deployment refit versus $\lambda$. Algorithm~\ref{alg} is used only to select the deployment dictionary span.}
  \label{fig:pneumoniamnist}
\end{figure}

\subsection{Controlled Synthetic Linear Experiments}\label{app:controlled_synthetic}

\paragraph{Dataset.}
We generate controlled synthetic data to isolate three effects that are central to the behavior of Coupled Training, namely the strength of the privileged signal, the amount of nuisance variation in the privileged view, and the number of unlabeled samples. The data-generating process separates the deployment signal from the privileged signal. Specifically, we sample
\[
X\sim \mathcal N(0,I_{d_X}),\qquad
H\sim \mathcal N(0,I_q),\qquad
V\sim \mathcal N(0,I_{d_{\mathrm{noise}}}),
\]
where $H$ contains the informative latent privileged factors and $V$ contains nuisance privileged coordinates. We then form
\[
W_{\mathrm{sig}}
=
\rho_{XW}XA+\sqrt{1-\rho_{XW}^2}\,H,
\qquad
W=(W_{\mathrm{sig}},V),
\]
where the columns of $A$ are normalized. The response is generated as
\[
Y=X^\top\beta+\alpha H^\top\theta+\varepsilon,
\qquad
\varepsilon\sim\mathcal N(0,\sigma^2).
\]
Here $\alpha$ controls the strength of the privileged signal. Since $H$ is not available at deployment, the deployment target is
\[
\mu(X)=X^\top\beta.
\]
Thus, unlike ordinary test MSE against noisy labels, this construction lets us evaluate the theory-aligned estimation error of the learned deployment predictor.

\paragraph{Feature split.}
The deployment view is $X\in\mathbb R^{d_X}$ and the privileged view is $W\in\mathbb R^{q+d_{\mathrm{noise}}}$. The first $q$ coordinates of $W$ are correlated with the latent privileged signal, while the remaining $d_{\mathrm{noise}}$ coordinates are nuisance dimensions. This design allows us to test whether the rich-view model helps when the privileged signal is informative, and whether it becomes harmful when the privileged view contains many irrelevant coordinates.

\paragraph{Data split.}
For each random seed, we independently sample labeled, unlabeled, and test sets from the same ground-truth model. Unless otherwise stated, we use $d_X=10$, $q=3$, $\rho_{XW}=0.7$, and $\sigma=1$. The signal-strength and nuisance-dimension sweeps use $n=100$ labeled samples, $m=20{,}000$ unlabeled samples, and $n_{\mathrm{test}}=10{,}000$ test samples. The unlabeled-sample-size sweep uses a smaller labeled budget, $n=40$, while varying $m$.

\paragraph{Training procedure.}
All methods use linear predictors with an intercept. Baseline fits the deployment model $f$ using only labeled pairs $(X,Y)$. Two-Stage first fits a rich-view teacher $g$ using labeled pairs $((X,W),Y)$, pseudo-labels the unlabeled examples, and then fits an $X$-only student on the union of labeled and pseudo-labeled examples. Coupled Training solves the linear penalized objective for each value of $\lambda$ in a logarithmic grid. For numerical stability, we use a small ridge penalty in all linear solves.

\paragraph{Hyperparameter selection.}
For the summary sweeps in the main text, $\lambda$ is selected by cross-validation on the labeled set only. The unlabeled samples are used during training but their labels are never used for selecting $\lambda$. We sweep $\lambda$ over a log-spaced grid from $10^{-4}$ to $10^4$.

\paragraph{Evaluation.}
For the controlled sweeps, we report the theory-aligned test estimation error
\[
\mathbb E_{\mathrm{test}}\big[(\hat f(X)-\mu(X))^2\big],
\]
where $\mu(X)=X^\top\beta$ is known from the data-generating process. This removes irreducible label noise from the evaluation and directly measures how well the deployment predictor estimates the target regression function. We average results over multiple random seeds and report mean curves.

\paragraph{Controlled sweeps.}
The signal-strength sweep varies $\alpha$ while holding the nuisance dimension fixed. The nuisance-dimension sweep varies $d_{\mathrm{noise}}$ while holding the privileged signal strength fixed. The unlabeled-sample-size sweep varies $m$ while keeping the labeled budget fixed. These three sweeps correspond respectively to the three panels in Figure~\ref{fig:synthetic_controls}, showing that Coupled Training improves as the privileged signal becomes useful, remains more stable than Two-Stage when nuisance privileged dimensions grow, and benefits from additional unlabeled paired $(X,W)$ samples.

\subsection{Synthetic Binary Classification Diagnostic}\label{app:synthetic_classification_ce}

\paragraph{Dataset.}
We also run a controlled binary classification diagnostic to test the cross-entropy analogue of Coupled Training. The synthetic generator produces deployment features
$X\in\mathbb R^{5}$ and privileged features $W\in\mathbb R^{40}$, with correlated deployment and privileged views. Labels are sampled from a Bernoulli model whose logit depends on both the deployment signal and the privileged signal, with additive logit noise. The default setting uses correlation strength $0.95$, $X$ scale $1.0$, $W$ scale $1.05$, logit-noise standard deviation $0.70$, and unlabeled mean parameter $1.0$.

\paragraph{Feature split.}
The deployment view is $X\in\mathbb R^5$ and the privileged view is $W\in\mathbb R^{40}$. The final predictor must use only $X$ at test time. The privileged block $W$ is available only during training and is intentionally higher-dimensional than $X$, so the experiment tests whether Coupled Training can benefit from privileged information without fully inheriting rich-view noise.

\paragraph{Data split.}
For each random seed, we generate independent labeled, unlabeled, and test samples. We use
\[
n=50,\qquad m=3000,\qquad n_{\mathrm{test}}=6000,
\]
and average results over seeds $\{0,1,2,3,4\}$. The labels of the unlabeled samples are discarded during training and are used only by the data generator.

\paragraph{Models and hyperparameters.}
All methods use linear logistic predictors with intercepts. Let
\[
p_f(x)=\sigma(\bar x^\top\beta),
\qquad
p_g(x,w)=\sigma(\bar z^\top\gamma),
\]
where $\bar x$ and $\bar z$ include intercepts and $\sigma(t)=(1+e^{-t})^{-1}$.
The deployment model uses ridge parameter $\alpha_f=10^{-4}$, and the rich-view model uses ridge parameter $\alpha_g=10^{-1}$. The $X$-only baseline is trained for $500$ gradient steps with learning rate $0.05$. The rich-view teacher and the Two-Stage student are each trained for $700$ gradient steps with learning rate $0.03$.

\paragraph{Training procedure.}
Baseline trains the deployment model using only labeled pairs $(X_L,Y_L)$. Two-Stage first trains a rich-view teacher using labeled pairs $((X_L,W_L),Y_L)$, pseudo-labels the unlabeled examples with teacher probabilities $p_g(X_U,W_U)$, and then trains an $X$-only student on the union of labeled and pseudo-labeled examples.

For Coupled Training, we use the cross-entropy analogue of the alternating updates. For $a\in[0,1]$ and $p\in(0,1)$, define
\[
\mathrm{CE}(a,p)=-a\log p-(1-a)\log(1-p).
\]
Given $g$, the deployment update minimizes
\[
\sum_{i\in L}\mathrm{CE}\big(Y_i,p_f(X_i)\big)
+
\sum_{j\in U}\mathrm{CE}\big(p_g(X_j,W_j),p_f(X_j)\big)
+
\frac{\alpha_f}{2}\|\beta_{-0}\|_2^2,
\]
where the subscript $-0$ excludes the intercept. Given $f$, the rich-view update minimizes
\[
\sum_{j\in U}\mathrm{CE}\big(p_f(X_j),p_g(X_j,W_j)\big)
+
\lambda\sum_{i\in L}\mathrm{CE}\big(Y_i,p_g(X_i,W_i)\big)
+
\frac{\alpha_g}{2}\|\gamma_{-0}\|_2^2.
\]
We run $5$ outer coupled iterations. Each $f$-update and each $g$-update uses $150$ gradient steps with learning rate $0.02$.

\paragraph{Hyperparameter sweep.}
For this diagnostic, $\lambda$ is swept rather than selected by cross-validation. We use $14$ log-spaced values from $10^{-2}$ to $10^{3}$. Thus, the best point in the figure should be interpreted as the best point in the displayed diagnostic sweep, not as a validation-selected tuning parameter.

\paragraph{Evaluation.}
The final test-time predictor is the deployment model $p_f(X)$, which uses only $X$. We classify by thresholding at $1/2$ and report test error
\[
\mathbb P_{\mathrm{test}}\big(\mathbf 1\{p_f(X)\ge 1/2\}\ne Y\big).
\]
All reported curves are averaged over the five random seeds.

\paragraph{Results.}
Figure~\ref{fig:synthetic_classification_ce} shows that the cross-entropy coupled model has an interior optimum in $\lambda$. The best coupled model occurs around $\lambda\approx 5$ and improves slightly over the labeled $X$-only baseline, while the Two-Stage limit performs worse. This mirrors the regression experiments, where moderate coupling can use privileged information, whereas overly strong transfer can propagate rich-view noise into the deployment model.

\begin{figure}[t]
  \centering
  \includegraphics[width=0.6\linewidth]{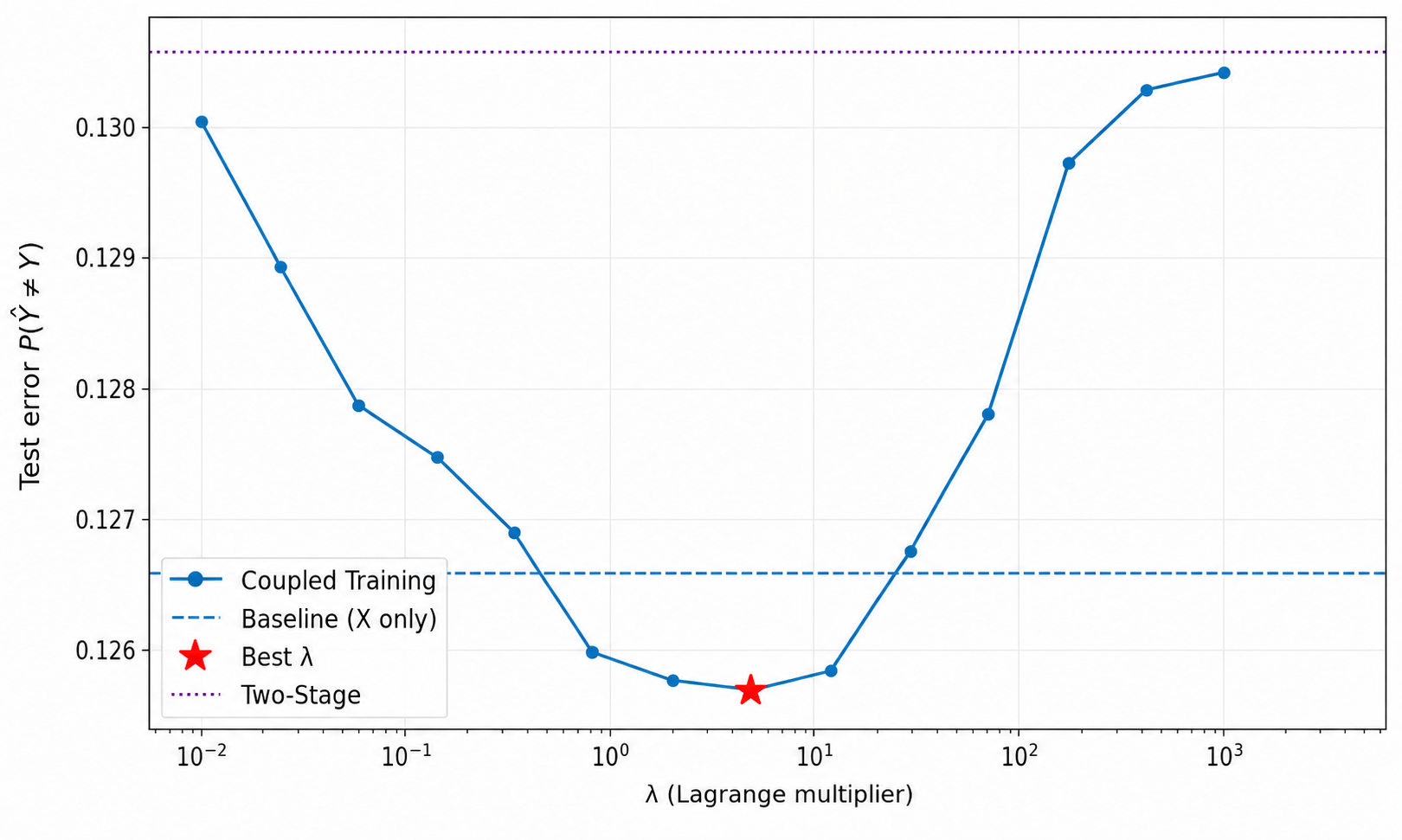}
  \caption{\textbf{Synthetic binary classification diagnostic.} Test $0$--$1$ error for the cross-entropy analogue of Coupled Training as a function of $\lambda$, averaged over seeds $\{0,1,2,3,4\}$.}
  \label{fig:synthetic_classification_ce}
\end{figure}

 \newpage

\section{Proofs}

In what follows, let 
$a = \frac{n}{N}$, $b = \frac{m}{N}$, and $c = b + a\lambda = \frac{m + n\lambda}{N}$, 
and define
\[
s_f(z) \coloneq \frac{b\,f(x) + a\lambda\,\eta(z)}{c}, 
\qquad 
t_g(x) \coloneq a\,\mu(x) + b\,\E[g(Z) \mid X = x].\]

For functions $f \in \mathcal{F}$ and $g \in \mathcal{G}$, we define the (squared) $L^2$ norms
$\|f\|_X^2 = \mathbb{E}[f^2(X)]$ and $\|g\|_Z^2 = \mathbb{E}[g^2(Z)]$. Define the inner products $\langle\cdot, \cdot\rangle_X$ and $\langle\cdot, \cdot\rangle_Z$ analogously.

For fixed points $s^n = (s_1,\dots,s_n)$ in a measurable space $\mathcal S$, let $\hat{P}_n$ denote the empirical measure with respect to these points.
 For a function $h:\mathcal{S}\to\mathbb{R}$, we write $\|h\|_{L^p(\hat{P}_n)} = \big(\frac{1}{n}\sum_{i=1}^n |h(s_i)|^p\big)^{1/p}$. An $L^p$-$\epsilon$-cover of a class $\mathcal{C}$ with respect to $\|\cdot\|_{L^p(\hat P_n)}$ is a finite collection $\{h_1,\dots,h_M\}\subset \mathcal{C}$ such that every $h\in\mathcal{C}$ is within $\varepsilon$ of some $h_j$ under this norm, and the minimal such $M$ is the empirical covering number $\mathcal{N}_p(\epsilon,\mathcal{C},s^n)$.

\subsection{Proof of Theorem \ref{thm:pop-minimizer}}

For a fixed $f \in \mathcal{F}$, let $d$ denote constants that do not depend on $g$ (and may change from line to line). Then, we can write the population loss as follows
\begin{align}
    \mathcal L(f,g;\lambda) &= b\E[(g(Z)-f(X))^2] + a \lambda \E[(Y-g(Z))^2] + d \nonumber\\
    &= b \left(\E g^2(Z) - 2\E g(Z)f(X) + \E f^2(X)\right) + a\lambda\left(\E Y^2-2\E[Yg(Z)]+ \E g^2(Z)\right) +d \nonumber\\
    & =c \E g^2(Z) -2 \E[g(Z)(b f(X)+a \lambda Y)] + d \nonumber\\
    &= c \E g^2(Z) -2 \E\left[g(Z)\E[(b f(X)+a \lambda Y) \mid Z] \right] + d \nonumber\\
    &= c \left( \E g^2(Z)-2 \E[g(Z)s_f(Z)]  \right) +d \nonumber \\
    &=
c\left(
\E[g^2(Z)]-2\E[g(Z)s_f(Z)]+\E[s_f^2(Z)]
\right)+d \\
    &= c\norm{g-s_f}_{Z}^2 + d. \label{eq:loss fixed f}
\end{align}

Thus, for fixed $f\in \mathcal{F}$, the best response over $\mathcal{G}$ is the metric projection
\begin{equation} \label{eq: best_g_fixed_f}
g^\mathcal{G}_{f,\lambda}=\argmin_{g \in \mathcal{G}}\mathcal L(f,g;\lambda) =\argmin_{g\in \mathcal{G}}\norm{g-s_f}_{Z}^2 = \Pi_{\mathcal{G}}s_f.
\end{equation}

For a fixed $g \in \mathcal{G}$,  let $d$ denote constants that do not depend on $f$ (and may change from line to line). Then using same approach to derive \eqref{eq:loss fixed f}, we get
\begin{align}
      \mathcal L(f,g;\lambda) &= a \E[(Y-f(X))^2] + b\E[(g(Z)-f(X))^2] + d \nonumber \\
      &=(a+b) \E f^2(X) - 2 \E \left[f(X) \E[(b g(Z)+a Y) \mid X] \right] + d \nonumber \\
      &= \norm{f-t_g}_X^2 + d. 
      \label{eq: loss fixed g}
\end{align}
Thus, for fixed $g\in \mathcal{G}$, the best response over $\mathcal{F}$ is
\begin{equation} \label{eq:best_f_fixed_g}
f^\mathcal{F}_{g}=\argmin_{f \in \mathcal{F}}\mathcal L(f,g;\lambda) =\argmin_{f\in \mathcal{F}}\norm{f-t_g}_{X}^2 = \Pi_{\mathcal{F}}t_g.    
\end{equation}

Hence, the desired result follows from \eqref{eq: best_g_fixed_f} and \eqref{eq:best_f_fixed_g}.

Furthermore, if $\mu \in \mathcal{F}$ and $\mu \in \mathcal{G}$ (identifying $\mu$ with its lift $\tilde{\mu}(x,w) = \mu(x)$ on $\mathcal{Z}$), and  $\eta \in \mathcal{G}$, set
\[
f_0\coloneqq\mu,
\qquad
g_0\coloneqq\frac{b}{c}\mu+\frac{a\lambda}{c}\eta.
\]
The convexity of $\mathcal G$ gives $g_0\in\mathcal G$, and
$\E[g_0(Z)\mid X]=\mu(X)$. Consequently,
$s_{f_0}=g_0$ and $t_{g_0}=f_0$. For any
$(f,g)\in\mathcal F\times\mathcal G$, write $u=f-f_0$ and $v=g-g_0$.
Using the conditional-mean identities for $\mu$ and $\eta$, together with
$b(g_0-f_0)=a\lambda(\eta-g_0)$, a direct expansion gives
\[
\mathcal L(f,g;\lambda)-\mathcal L(f_0,g_0;\lambda)
=a\|u\|_X^2+b\|v-u\|_Z^2+a\lambda\|v\|_Z^2\ge0.
\]
Thus, $(f_0,g_0)$ is a global minimizer. Since $a,b>0$, equality forces
$u=v=0$, so the minimizer is unique and
\[
f^\star=\mu,
\qquad
g^\star=\frac{m}{m+n\lambda}\mu
+\frac{n\lambda}{m+n\lambda}\eta.
\]

\medskip
Next, before proving Theorem~\ref{thm:rate-fstar}, we present a sequence of intermediate results that will be useful for its proof.

\begin{theorem}\label{thm:excess-fgstar}
The following identity decomposes the excess population loss around
$(f^\star,g^\star)$.
\begin{align}
\mathcal L(\hat f, \hat g;\lambda)-\mathcal L(f^\star,g^\star;\lambda)
&= \|u\|_X^2 + c\|v\|_Z^2 - 2b\langle u,v\rangle_Z\nonumber\\
&\quad + \inner{\nabla_f \mathcal L(f^\star,g^\star;\lambda)}{u}_X
+ \inner{\nabla_g \mathcal L(f^\star,g^\star;\lambda)}{v}_Z, \label{eq:exact-star}
\end{align}
where 
\[
u\coloneq \hat f-f^\star\in \mathcal{F}-f^\star,\qquad v\coloneq \hat g-g^\star\in \mathcal{G}-g^\star.
\]
\end{theorem}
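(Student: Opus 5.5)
The identity is an exact second-order Taylor expansion: the population loss \eqref{eq:pop-loss} is a quadratic functional of $(f,g)$, so expanding around $(f^\star,g^\star)$ produces a linear part and a quadratic part with no remainder. I will write $\mathcal L$ in the notation $a=n/N$, $b=m/N$, $c=b+a\lambda$ as
\[
\mathcal L(f,g;\lambda)=a\E[(Y-f(X))^2]+b\E[(g(Z)-f(X))^2]+a\lambda\E[(Y-g(Z))^2],
\]
substitute $f=f^\star+u$ and $g=g^\star+v$, and expand each of the three squares separately.

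\textbf{First term.} We have $(Y-f^\star-u)^2=(Y-f^\star)^2-2u(Y-f^\star)+u^2$. This gives the quadratic contribution $a\|u\|_X^2$ and the linear contribution $-2a\E[u(X)(Y-f^\star(X))]$.

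\textbf{Second term.} We have $(g^\star+v-f^\star-u)^2=(g^\star-f^\star)^2+2(g^\star-f^\star)(v-u)+(v-u)^2$. The quadratic part is
\[
b\|v-u\|_Z^2=b\|u\|_X^2+b\|v\|_Z^2-2b\langle u,v\rangle_Z ,
\]
where we use that $u$ is lifted to $\mathcal Z$, so $\|u\|_Z=\|u\|_X$.

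\textbf{Third term.} It contributes the quadratic part $a\lambda\|v\|_Z^2$ and the linear part $-2a\lambda\E[v(Z)(Y-g^\star(Z))]$.

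Collecting the quadratic parts gives $(a+b)\|u\|_X^2+(b+a\lambda)\|v\|_Z^2-2b\langle u,v\rangle_Z$. Since $a+b=1$ and $c=b+a\lambda$, this equals $\|u\|_X^2+c\|v\|_Z^2-2b\langle u,v\rangle_Z$, which is the quadratic part of \eqref{eq:exact-star}.

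It remains to identify the linear parts with the gradient inner products. I take $\nabla_f\mathcal L$ and $\nabla_g\mathcal L$ to be the Fréchet gradients in $L^2(P_X)$ and $L^2(P_Z)$ respectively. Differentiating the expressions \eqref{eq: loss fixed g} and \eqref{eq:loss fixed f} from the proof of Theorem~\ref{thm:pop-minimizer} gives
\[
\nabla_f\mathcal L(f,g;\lambda)=2(f-t_g),\qquad \nabla_g\mathcal L(f,g;\lambda)=2c(g-s_f).
\]

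\textbf{Linear part in $u$.} It equals $-2a\E[u(Y-f^\star)]-2b\E[u(g^\star-f^\star)]$. Conditioning on $X$ replaces $Y$ by $\mu(X)$ and $g^\star(Z)$ by $\E[g^\star(Z)\mid X]$. Since $a+b=1$, this becomes $-2\langle u,\,t_{g^\star}-f^\star\rangle_X$, which is $\langle 2(f^\star-t_{g^\star}),u\rangle_X$.

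\textbf{Linear part in $v$.} It equals $2b\E[v(g^\star-f^\star)]-2a\lambda\E[v(Y-g^\star)]$. Conditioning on $Z$ replaces $Y$ by $\eta(Z)$, so this becomes $2\E\big[v\,(c\,g^\star-bf^\star-a\lambda\eta)\big]=\langle 2c(g^\star-s_{f^\star}),v\rangle_Z$.

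These are exactly the two gradient inner products in \eqref{eq:exact-star}, which completes the identity.

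No realizability assumption is needed, and no first-order optimality condition is used; the identity is a pure expansion. The only step requiring care is the linear terms. One must check that the tower property legitimately converts $Y$ into $\mu$ or $\eta$, which holds because $u$ is $\sigma(X)$-measurable and $v$ is $\sigma(Z)$-measurable. One must also keep the gradient convention (a factor of $2$ included, Riesz representatives in the respective $L^2$ spaces) consistent with \eqref{eq:loss fixed f} and \eqref{eq: loss fixed g}.
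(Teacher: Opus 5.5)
Your proposal is correct and follows the paper's proof essentially line by line. Like the paper, you expand the three terms of $\mathcal L$ around $(f^\star,g^\star)$, collect the quadratic part using $a+b=1$ and $c=b+a\lambda$, and, after conditioning on $X$ or $Z$, identify the linear parts with $\langle 2(f^\star-t_{g^\star}),u\rangle_X$ and $\langle 2c(g^\star-s_{f^\star}),v\rangle_Z$ via the gradient formulas read off from \eqref{eq:loss fixed f} and \eqref{eq: loss fixed g}.
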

\begin{proof}
    We start by decomposing the loss $\mathcal L(\hat f,\hat g;\lambda)$ into three terms.
    $$\mathcal L(\hat f,\hat g;\lambda) = \circled{1} + \circled{2} + \circled{3},$$
where
    \begin{align*}
        \circled{1} = 
a\E[(Y-\hat f(X))^2], \ \ 
\circled{2}=  b\E[(\hat g(Z)-\hat f(X))^2], \ \ \text{and} \ \ 
 \circled{3} =a\lambda\E[(Y-\hat g(Z))^2].
    \end{align*}
Writing $\hat f= f^\star+u$ and $\hat g= g^\star+ v$, we can expand each term as follows.
\begin{align*}
    \circled{1} &= a\left(\E[(Y-f^\star(X))^2] + \E u^2(X) -2\E[(Y-f^\star(X))u(X)]\right) \\
    &= a\E[(Y-f^\star(X))^2]+a\norm{u}_X^2 -2\E[u(X)(\E(Y\mid X)-f^\star(X))]\\
    &= a\E[(Y-f^\star(X))^2]+a\norm{u}_X^2-2a\inner{\mu-f^\star}{u}_X.
\end{align*}
\begin{align*}
    \circled{2} &= b\left( \E[(g^\star(Z)-f^\star(X))^2]+\E[(v(Z)-u(X))^2] + 2\E[(g^\star(Z)-f^\star(X))(v(Z)-u(X))] \right) \\
    &= b \E[(g^\star(Z)-f^\star(X))^2]+ b \norm{v}_Z^2  
    +b \norm{u}_X^2-2b\inner{u}{v}_Z \\ & \quad + 2b\inner{g^\star-f^\star}{v}_Z-2b\inner{\tilde{g}^\star-f^\star}{u}_X,
\end{align*}
   where $\tilde{g}^\star(x)=\E[g^\star(Z) \mid X=x]$. 
\begin{align*}
    \circled{3} &= a\lambda\E[(Y-g^\star(Z))^2]+a\lambda \norm{v}_Z^2-2a\lambda\E[(Y-g^\star(Z))v(Z)] \\ 
    & =a\lambda\E[(Y-g^\star(Z))^2]+a\lambda \norm{v}_Z^2 -2a\lambda \inner{\eta-g^\star}{v}_Z.
\end{align*}
Using these expansions, observe that
\begin{align*}
    \mathcal L(\hat f,\hat g;\lambda)-\mathcal L(f^\star,g^\star;\lambda) & = (a+b)\norm{u}^2_X+ c\norm{v}_Z^2-2b\inner{u}{v}_Z \\ & \qquad+\inner{2(f^\star-b \tilde{g}^\star-a \mu)}{u}_X + \inner{2(cg^\star-b f^\star-a\lambda\eta)}{v}_Z \\
    &= \norm{u}^2_X+ c\norm{v}_Z^2-2b\inner{u}{v}_Z \\ & \qquad+\inner{2(f^\star-t_{g^\star})}{u}_X + \inner{2c(g^\star-s_{f^\star})}{v}_Z \\
    &= \norm{u}^2_X+ c\norm{v}_Z^2-2b\inner{u}{v}_Z \\ & \qquad+\inner{\nabla_f \mathcal L(f^\star,g^\star;\lambda)}{u}_X
+ \inner{\nabla_g \mathcal L(f^\star,g^\star;\lambda)}{v}_Z,
\end{align*}
where the last equality follows from \eqref{eq:loss fixed f} and \eqref{eq: loss fixed g}, which give
\begin{align*}
    \nabla_g \mathcal L(f,g;\lambda)=2c(g-s_f),    \qquad 
\nabla_f \mathcal L(f,g;\lambda)&=2(f-t_g). 
\end{align*}

\end{proof}


\subsection{Proof of Corollary~\ref{cor:gamma-star}}

Since $(f^\star,g^\star)$ minimizes $\mathcal L$ over the closed convex set $\mathcal{F}\times \mathcal{G}$, the variational inequalities give
\[
\inner{\nabla_f \mathcal L(f^\star,g^\star;\lambda)}{u}_X\geq 0,\qquad
\inner{\nabla_g \mathcal L(f^\star,g^\star;\lambda)}{v}_Z\geq 0
\]
for all $u\in \mathcal{F}-f^\star$, $v\in \mathcal{G}-g^\star$.

Thus, dropping the nonnegative linear terms in \eqref{eq:exact-star} yields
\begin{equation}
\mathcal L(\hat f,\hat g;\lambda)-\mathcal L(f^\star,g^\star;\lambda)
\geq \|u\|_X^2 + c\|v\|_Z^2 - 2b\langle u,v\rangle_Z.
\label{eq:quad-lb}
\end{equation}

The result follows by taking expectation with respect to $\cD$ of both sides \eqref{eq:quad-lb}, then we can further lower bound the right hand by completing the square of the last two terms as done below. 
\begin{align*}
    \E_{\cD}&\norm{u}^2_X+ c\E_{\cD}\norm{v}_Z^2-2b\E_{\cD}\inner{u}{v}_Z\\
    &\ge \E_{\cD}\norm{u}^2_X+ c\E_{\cD}\norm{v}_Z^2-2b\left|\E_{\cD}\inner{u}{v}_Z\right|\\
    &= \E_{\cD}\norm{u}^2_X+ c\E_{\cD}\norm{v}_Z^2-2b\rho_\star\sqrt{\E_{\cD}\norm{u}^2_X \E_{\cD}\norm{v}^2_Z}\\
    &=  \E_{\cD}\norm{u}^2_X +c\left(\E_{\cD}\norm{v}^2_Z-\frac{2b \rho_\star}{c}\sqrt{\E_{\cD}\norm{u}^2_X \E_{\cD}\norm{v}^2_Z}+\frac{b^2 \rho_\star^2}{c^2}\E_{\cD}\norm{u}^2_X\right) -\frac{b^2\rho_\star^2}{c}\E_{\cD}\norm{u}^2_X \\
    &= \gamma_{n, m ,\lambda}(\rho_\star)\E_{\cD}\norm{u}^2_X+c\left(\sqrt{\E_{\cD}\norm{v}^2_Z}-\frac{b \rho_\star}{c}\sqrt{\E_{\cD}\norm{u}^2_X}\right)^2 \\ 
    & \geq \gamma_{n, m ,\lambda}(\rho_\star)\E_{\cD}\norm{u}^2_X.
\end{align*}

\medskip
\bigskip
 
Throughout, we use uppercase letters (e.g., $X_i,Z_i,Y_i$) for random variables and lowercase letters (e.g., $x_i,z_i,y_i$) for their realizations.
We write
$x^{n}=(x_1,\ldots,x_n)$ and $z^{n}=(z_1,\ldots,z_n)$ for the labeled sample locations, and
$x^{m}=(x_{n+1},\ldots,x_N)$ and $z^{m}=(z_{n+1},\ldots,z_N)$ for the unlabeled ones (where $m=N-n$).
We also use $x^{N}=(x_1,\ldots,x_N)$ and $z^{N}=(z_1,\ldots,z_N)$ to denote the full collection of sample points.
Define
\begin{align*}
k_i(f,g)
&\coloneq
(f(x_i)-y_i)^2+\lambda (g(z_i)-y_i)^2
-\bigl[(f^\star(x_i)-y_i)^2+\lambda (g^\star(z_i)-y_i)^2\bigr],\\
p_j(f,g)
&\coloneq
(f(x_j)-g(z_j))^2-(f^\star(x_j)-g^\star(z_j))^2.
\end{align*}
For notational convenience, throughout the remainder of the text we write $\sum_i$ for
$\sum_{i=1}^n$ and $\sum_j$ for $\sum_{j=n+1}^N$. We now state a result adapted from \citet[Theorem~11.4]{gyorfi2002distribution}.

\medskip

\begin{theorem}\label{thm:modified_thm11.4}
Assume that \ref{assump:boundedness} holds. 

Then 
\begin{align*}
    &\mathbb{P}\Bigg\{ \exists (f,g)\in \mathcal{F}\times\mathcal{G}:\E[\hat {\mathcal L}(f,g;\lambda)]-\E[\hat{ \mathcal L}(f^{\star},g^{\star};\lambda)] - \frac{1}{N }\left(\sum\limits_i k_i+\sum\limits_j p_j\right) \\ & \qquad \qquad \ge \epsilon\left(\alpha+\beta+\E[\hat{\mathcal L}(f,g;\lambda)]-\E[\hat{\mathcal L}(f^{\star},g^{\star};\lambda)] \right) \Bigg\} \\
    &\le 10\mathfrak M_{\mathcal{F},\mathcal{G}}^ {n,m,\lambda}\exp \left(-\dfrac{\epsilon^2(1-\epsilon)\alpha N}{512B^2(\lambda+1)}\right),
\end{align*}
where $\alpha,\beta>0,$ $0<\epsilon < 1,$ and
\begin{align*}
    \mathfrak M_{\mathcal{F},\mathcal{G}}^ {n,m,\lambda}\coloneqq & \max_{z^n}\mathcal{N}_1 \left(\dfrac{\epsilon(\alpha+\beta) }{640Ba},\mathcal{F},x^n\right)\cdot\mathcal{N}_1 \left(\dfrac{\epsilon(\alpha+\beta) }{640B(\lambda+1)a},\mathcal{G},z^n\right)+\\&\max_{z^m}\mathcal{N}_1 \left(\dfrac{\epsilon(\alpha+\beta) }{640Bb},\mathcal{F},x^m\right)\cdot\mathcal{N}_1 \left(\dfrac{\epsilon(\alpha+\beta) }{640Bb},\mathcal{G},z^m\right)+\\&\max_{z^N}\mathcal{N}_1\left(\dfrac{\epsilon\beta  }{40B},\mathcal{F},x^N\right)\cdot \mathcal{N}_1\left(\dfrac{\epsilon\beta  }{40B(\lambda+1)},\mathcal{G},z^N\right).
\end{align*}

\end{theorem}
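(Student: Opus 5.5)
The plan is to follow the proof of \citet[Theorem~11.4]{gyorfi2002distribution}, adapted to the coupled loss. First, note that $\E[\hat{\mathcal L}(f,g;\lambda)]-\E[\hat{\mathcal L}(f^\star,g^\star;\lambda)]=\frac1N(\sum_i\E k_i+\sum_j\E p_j)$ for fixed $(f,g)$. The event therefore concerns the deviation of the empirical average of the excess-loss functions $k_i(f,g)$ (labeled) and $p_j(f,g)$ (unlabeled) from their means, relative to the mean itself.

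Under Assumption~\ref{assump:boundedness}, both $|k_i|$ and $|p_j|$ are at most a constant times $B^2(\lambda+1)$. The crucial variance bound is
\[
\E k_i^2 \le C B^2(\lambda+1)\,\E k_i ,\qquad \E p_j^2 \le CB^2\,\E p_j .
\]
Obtaining it is the delicate point. For the labeled part, $k_i$ is a sum of two square-loss differences. With $f^\star=\mu$ and $g^\star$ not equal to $\eta$, the individual means $\E[(g-Y)^2-(g^\star-Y)^2]$ need not be nonnegative. So I would not bound the pieces separately. Instead, I would use the joint quantity $\E[\hat{\mathcal L}(f,g)]-\E[\hat{\mathcal L}(f^\star,g^\star)]=\mathcal L(f,g)-\mathcal L(f^\star,g^\star)$. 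By Theorem~\ref{thm:excess-fgstar} and the variational inequalities, this dominates $\|u\|_X^2+c\|v\|_Z^2-2b\langle u,v\rangle_Z$, which is $\ge a\|u\|^2+a\lambda\|v\|^2$. This follows from the identity $a\|u\|^2+b\|v-u\|^2+a\lambda\|v\|^2$ shown in the proof of Theorem~\ref{thm:pop-minimizer}.

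Meanwhile, $\sum\E k_i^2+\sum \E p_j^2\lesssim B^2(\lambda+1)N(a\|u\|^2+\lambda a\|v\|^2+b\|u-v\|^2)$, since each squared difference factors as (difference)$\times$(sum), with the sum bounded by $B$. This yields the required Bernstein-type variance-to-mean ratio for the aggregated sum. This step is the main obstacle, and I would handle it through the aggregated quantity rather than term by term.

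Then I would run the standard four-step argument. The first step is ghost-sample symmetrization: introduce an independent copy of $(Z_i,Y_i)$ and $Z_j$, and use Chebyshev together with the variance bound (this is where the factor $\epsilon(1-\epsilon)$ and the requirement $N\ge$ something arise, absorbed into the constant 10). Reduce to the difference of two empirical means exceeding $\frac\epsilon2(\alpha+\beta+\text{mean})$. The second step introduces Rademacher signs separately on the labeled block and the unlabeled block. The third step conditions on the data and covers $\mathcal F\times\mathcal G$. Changing $(f,g)$ alters $k_i$ by at most $4B|f-f'|+4B\lambda|g-g'|$ and $p_j$ by at most $4B(|f-f'|+|g-g'|)$. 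Hence $L^1$ covers of $\mathcal F$ on $x^n$ and $\mathcal G$ on $z^n$ at the stated radii (with weights $a$, $b$ from the normalization $1/N$) control the labeled and unlabeled sums. Covers on the full sample $x^N,z^N$ at radius $\epsilon\beta/40B$ control the variance proxy term that is replaced by its empirical version, as in Gy\"orfi's proof. This produces the three products in $\mathfrak M$.

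Finally, Hoeffding's inequality is applied to each fixed cover element, using the bound on $\sum k_i^2+\sum p_j^2$ relative to $\alpha+$mean. A union bound over the covers then gives the exponent $\epsilon^2(1-\epsilon)\alpha N/(512B^2(\lambda+1))$ and the prefactor $10\,\mathfrak M$.
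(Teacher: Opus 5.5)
Your overall architecture matches the paper's, which adapts Gy\"orfi et al.'s Theorem~11.4. Your aggregated variance bound is also exactly the one the paper uses: $a\E[k^2]+b\E[p^2]\le 16(\lambda+1)B^2\bigl(\mathcal L(f,g;\lambda)-\mathcal L(f^\star,g^\star;\lambda)\bigr)$, obtained from the quadratic form $a\|u\|^2+b\|v-u\|^2+a\lambda\|v\|^2$ and the variational inequalities.

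The genuine problem is that your covering bookkeeping is reversed, so the sketch does not produce the stated $\mathfrak M_{\mathcal F,\mathcal G}^{n,m,\lambda}$. In the paper, the two blockwise products (on $x^n,z^n$ and on $x^m,z^m$, with radii proportional to $\epsilon(\alpha+\beta)$) come from a step you mention only in passing. That step replaces the population variance proxy $Q=a\E[k^2]+b\E[p^2]$ by $\widehat Q=\frac1N\bigl(\sum_ik_i^2+\sum_jp_j^2\bigr)$. The event $\widehat Q-Q>\epsilon(\tau+\widehat Q+Q)$, with $\tau=B^2(\lambda+1)(\alpha+\beta)$, is contained in the union of a labeled-block and an unlabeled-block relative-deviation event. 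Each block is an i.i.d.\ sample, so Gy\"orfi et al.'s Theorem~11.6 applies to the squared classes $\{k^2\}$ and $\{p^2\}$ with $\nu=\tau/(2a)$ and $\nu=\tau/(2b)$. The union bound is what produces a \emph{sum} of block products, and the slack $\tau$ is why their radii scale with $\alpha+\beta$.

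The full-sample product at radius $\epsilon\beta/(40B)$ plays a different role. It is the cover $\Gamma_{\epsilon\beta/5}$ used, conditionally on the data, to discretize the coupled Rademacher sum $\frac1N\bigl|\sum_iU_ik_i+\sum_jU_jp_j\bigr|$. A single $(f,g)$ enters both blocks and the threshold, so it must be approximated on all $N$ points simultaneously. The approximation error must then be absorbed by the $\beta$-part of the threshold alone, since the $\alpha$-part is reserved for the exponent.

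Your assignment breaks both pieces. Separate block covers for the coupled Rademacher sum would give the \emph{product} of the block covering numbers, unless you also split that event. A full-sample cover for the variance step would require a non-i.i.d.\ version of Theorem~11.6 and a radius tied to $\epsilon\tau$ rather than $\epsilon\beta$.

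You should also make the replacement step explicit, because it is indispensable. Conditionally on the data, the variance of the Rademacher sum is the empirical second moment $\widetilde Q$ of the cover element, and the cover is only empirical. So the threshold must contain a multiple $c_\epsilon\widetilde Q$ of it, rather than the population excess risk you propose to use in the last step. Once it does, Hoeffding in place of Bernstein is fine: with threshold $A_1+A_2\widetilde Q$, the sub-Gaussian bound for Rademacher sums and AM--GM give an exponent of at least $2NA_1A_2$, which already beats the constant $512$.

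Two smaller corrections:
\begin{enumerate}
\item The factor $\epsilon(1-\epsilon)$ does not arise from Chebyshev. It comes from the factor $(1-\epsilon)/(1+\epsilon)$ introduced by the variance replacement.
\item The Chebyshev step needs $N>128(\lambda+1)B^2/(\epsilon^2(\alpha+\beta))$. The complementary case is trivial because the right-hand side then exceeds $1$.
\end{enumerate}
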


\begin{proof}
\raggedbottom
Throughout this proof, for $(f,g)\in\mathcal F\times\mathcal G$, we interpret
$f$ and $g$ pointwise, writing
$
f \equiv f(X)$, $g \equiv g(Z)$, 
so that expressions such as $(f-Y)^2$, $(g-Y)^2$, and $(f-g)^2$ are understood as
$(f(X)-Y)^2$, $(g(Z)-Y)^2$, and $(f(X)-g(Z))^2$, respectively.

For a sample $(X,Z,Y)$, define the $*$-norm
\[
\|(f,g)-(Y,Y)\|_*^2
\coloneq
a(f-Y)^2 + a\lambda (g-Y)^2 + b(f-g)^2.
\]

Denote
\[
A_{f,g}
\coloneq \|(f,g)-(Y,Y)\|_*^2
  - \|(f^\star,g^\star)-(Y,Y)\|_*^2
= a\,k(f,g) + b\,p(f,g),
\]
where
$
k(f,g)
= (f-Y)^2 + \lambda(g-Y)^2
  - \bigl[(f^\star-Y)^2 + \lambda(g^\star-Y)^2\bigr]$, and 
$p(f,g)
= (f-g)^2 - (f^\star-g^\star)^2.
$

By the definition above, our goal is to bound
\begin{equation}
\mathbb{P} \left\{\exists (f,g)\in \mathcal{F}\times \mathcal{G}:\E[A_{f,g}]-\frac{1}{N}\sum_{i} k_i - \frac{1}{N }\sum_{j} p_j\ge \epsilon \left(\alpha+\beta+\E[A_{f,g}]\right)   \right\} \label{eq:goal to bound}    
\end{equation}
Observe that when $N \le \dfrac{128( \lambda+1)B^2}{\epsilon^2\left(\alpha+\beta\right)}$,
we have
$$10\mathfrak M_{\mathcal{F},\mathcal{G}}^ {n,m,\lambda}\exp \left(-\dfrac{\epsilon^2(1-\epsilon)\alpha N}{512B^2(\lambda+1)}\right)\ge 15\exp \left(-\dfrac{\epsilon^2(1-\epsilon)(\alpha+\beta)N}{512B^2(\lambda+1)}\right)\ge 15\exp \left(-\dfrac{128}{512}\right)> 1,$$ thus it is sufficient to only consider the case when $N > \dfrac{128( \lambda+1)B^2}{\epsilon^2\left(\alpha+\beta\right)}$.

To bound \eqref{eq:goal to bound}, we divide the proof into several steps.

\bigskip
\noindent\textbf{Step 1. Symmetrization}
\\
\raggedbottom
Let $\cD'_L=\{(X'_i, W'_i, Y'_i)\}_{i=1}^n \stackrel{\text{i.i.d.}}{\sim} P_{X,W,Y}$,
and
$\cD'_U=\{Z'_j = (X'_j, W'_j)\}_{j=n+1}^{N} 
  \stackrel{\text{i.i.d.}}{\sim} P_{X,W},
$
be a \emph{ghost} sample independent of $\cD_L$ and $\cD_U$, respectively. 
Denote 
\begin{align*}
k'_i
&= (f(X'_i)-Y'_i)^2 
   + \lambda\,(g(Z'_i)-Y'_i)^2
   - (f^\star(X'_i)-Y'_i)^2 
   - \lambda\,(g^\star(Z'_i)-Y'_i)^2, \\
p'_j
&= (f(X'_j)-g(Z'_j))^2
   - (f^\star(X'_j)-g^\star(Z'_j))^2,
\end{align*}
and define the original and ghost collections of labeled and unlabeled summands
\[
\cD = \{\, k_i,\, p_j : 1 \le i \le n,\; n+1 \le j \le N \,\},
\qquad
\cD' = \{\, k'_i,\, p'_j : 1 \le i \le n,\; n+1 \le j \le N \,\}.
\]
We will show that 
\begin{align}
    &\mathbb{P}\Bigl\{\exists (f,g)\in \mathcal{F}\times \mathcal{G}\colon
\mathbb{E}[A_{f,g}]
-\dfrac{1}{N}\sum_{i} k_i
-\dfrac{1}{N}\sum_{j} p_j
\ge \epsilon\bigl(\alpha+\beta+\mathbb{E}[A_{f,g}]\bigr)\Bigr\}
\nonumber\\
&\le \tfrac{8}{7}\;
\mathbb{P}\Bigl\{\exists (f,g)\in \mathcal{F}\times \mathcal{G}\colon
\dfrac{1}{N}\Bigl(
\sum_{i}\bigl(k_i'-k_i\bigr)
+\sum_{j}\bigl(p_j'-p_j\bigr)
\Bigr)
\ge \tfrac{\epsilon}{2}\bigl(\alpha+\beta+\mathbb{E}[A_{f,g}]\bigr)\Bigr\} \label{eq:symmetrization}
\end{align}
To prove this, consider
\begin{align}
&\Var \left(
  \frac{1}{N}\Bigl(\sum_{i} k_i' + \sum_{j} p_j'\Bigr)
\right) \nonumber= \frac{1}{N ^2}  
   \Var \left(\sum_{i} k_i' + \sum_{j} p_j'\right)\le \frac{1}{N^2}\Bigl(n  \Var(k_1') +  m   \Var(p_1')\Bigr) \nonumber\\[0.25em]
&\le \frac{a}{N }  
\E \left[
  \Bigl((f-y)^2 + \lambda (g-y)^2 - (f^\star-y)^2 - \lambda (g^\star-y)^2\Bigr)^2
\right] + \frac{b}{N }  
\E \left[
  \Bigl((f-g)^2 - (f^\star-g^\star)^2\Bigr)^2
\right]\nonumber\\[0.25em]
&\le \frac{1}{N }  
\E \Bigl[
  a\bigl((f+f^\star-2y)(f-f^\star) + \lambda (g+g^\star-2y)(g-g^\star)\bigr)^2 \nonumber\\
&\qquad
  +   b\bigl((f-g+f^\star-g^\star)(  f-g-f^\star+g^\star)\bigr)^2
\Bigr] \nonumber\\
&\stackrel{(*)}{\le} \frac{1}{N }  
\E \Bigl[
  a\bigl((f+f^\star-2y)^2 + \lambda (g+g^\star-2y)^2\bigr)
   \bigl((f-f^\star)^2 + \lambda (g-g^\star)^2\bigr) \nonumber\\
&\qquad
  +   b\bigl((f-g+f^\star-g^\star)(  f-g-f^\star+g^\star)\bigr)^2
\Bigr]\nonumber\\[0.25em]
&\le\frac{16( \lambda+1)B^2}{N }   
\E \Bigl[
  \bigl(a\bigl((f-f^\star)^2+\lambda(g-g^\star)^2\bigr)
     + b\bigl(f-f^\star+g^\star-g\bigr)^2\bigr)
\Bigr]\nonumber\\[0.25em]
&= \frac{16( \lambda+1)B^2}{N }  
\E \left[\bigl\|(f,g)-(f^\star,g^\star)\bigr\|_*^2\right]\nonumber\\
&\stackrel{(\dag)}\le \frac{16( \lambda+1)B^2}{N }  
\E \left[\|(f,g)-(Y,Y)\|_*^2-\|(f^\star ,g^\star)-(Y, Y)\|_*^2\right] \nonumber\\
&\le  \frac{16( \lambda+1)B^2}{N }  
\E \left[A_{f,g}\right], \label{eq:variance bound}
\end{align}
where $(*)$ follows by applying Cauchy–Schwarz, and $(\dag)$ holds since $(f^\star,g^\star)$ minimizes $\E\|(f,g)-(Y,Y)\|_*^2=\mathcal L(f,g;\lambda)$.
Thus,
\begin{equation}
\E \left[A_{f,g}\right] \ge \frac{1}{16( \lambda+1)B^2} \left(a\E[k^2(f,g)]+b\E[p^2(f,g)]\right)  \label{eq: expectation lower bound}  
\end{equation} 
Let $\mathcal E$ denote the event on the left-hand side of
\eqref{eq:symmetrization}. For every fixed
$(f,g)\in\mathcal F\times\mathcal G$, Chebyshev's inequality and
\eqref{eq:variance bound} give
\begin{align*}
&\mathbb P_{\cD'}\left\{
\E[A_{f,g}]-\frac1N\left(\sum_i k_i'+\sum_j p_j'\right)
>\frac{\epsilon}{2}\bigl(\alpha+\beta+\E[A_{f,g}]\bigr)
\right\}\\
&\quad\le
\frac{64(\lambda+1)B^2\E[A_{f,g}]}
{N\epsilon^2\bigl(\alpha+\beta+\E[A_{f,g}]\bigr)^2}\\
&\quad\le
\frac{16(\lambda+1)B^2}{N\epsilon^2(\alpha+\beta)}
<\frac18,
\end{align*}
where the last two inequalities use $\E[A_{f,g}]\ge0$ and the lower bound on
$N$ above.

Let $\mathcal H$ be the event that there exists the same
$(f,g)\in\mathcal F\times\mathcal G$ satisfying both
\[
\E[A_{f,g}]-\frac1N\left(\sum_i k_i+\sum_j p_j\right)
\ge\epsilon\bigl(\alpha+\beta+\E[A_{f,g}]\bigr)
\]
and
\[
\E[A_{f,g}]-\frac1N\left(\sum_i k_i'+\sum_j p_j'\right)
\le\frac{\epsilon}{2}
\bigl(\alpha+\beta+\E[A_{f,g}]\bigr).
\]
For every realization $\cD\in\mathcal E$, the conditional section
$\mathcal H_{\cD}$ contains the ghost-good event corresponding to any witness
of $\mathcal E$. Hence
\[
\mathbb P(\mathcal H\mid\cD)\ge\frac78
\quad\text{on }\mathcal E,
\qquad
\mathbb P(\mathcal H)\ge\frac78\mathbb P(\mathcal E).
\]
On $\mathcal H$, the same pair $(f,g)$ satisfies
\[
\frac1N\left(
\sum_i(k_i'-k_i)+\sum_j(p_j'-p_j)
\right)
\ge\frac{\epsilon}{2}
\bigl(\alpha+\beta+\E[A_{f,g}]\bigr).
\]
Therefore \eqref{eq:symmetrization} follows.

\bigskip
\noindent\textbf{Step 2. Population to empirical}

Define
\[
\widehat Q(f,g)
\coloneq \frac{1}{N}\left(\sum_i k_i^2+\sum_j p_j^2\right),
\qquad
\widehat Q'(f,g)
\coloneq \frac{1}{N}\left(\sum_i (k_i')^2+\sum_j (p_j')^2\right),
\]
and set
\[
Q(f,g)\coloneq \E[ak^2+bp^2],
\qquad
\tau\coloneq B^2(\lambda+1)(\alpha+\beta)=\tau_\alpha+\tau_\beta,
\qquad
\tau_\alpha\coloneq B^2(\lambda+1)\alpha,
\quad
\tau_\beta\coloneq B^2(\lambda+1)\beta.
\]

Next, we show that
\begin{align}
         &\mathbb{P} \left\{\exists (f,g)\in \mathcal{F}\times \mathcal{G}: \dfrac{1}{N } \left(\sum\limits_{i} (k_i' - k_i)+ \sum\limits_{j} (p_j'-p_j)\right)\ge \dfrac{\epsilon}{2} \left(\alpha+\beta+\E\Bigl[A_{f,g}\Bigr]\right)\right\} \nonumber\\
         &\le\mathbb{P} \Bigg\{\exists (f,g)\in \mathcal{F}\times \mathcal{G}:\dfrac{1}{N } \left(\sum\limits_{i} (k_i' - k_i)+ \sum\limits_{j} (p_j'-p_j)\right)\ge \dfrac{\epsilon}{2} \left(\alpha+\beta\right) \nonumber\\&\qquad \qquad + \dfrac{\epsilon}{64(\lambda+1)B^2}\dfrac{1-\epsilon}{1+\epsilon} \left(\widehat Q(f,g)+\widehat Q'(f,g)-\dfrac{2\epsilon}{1-\epsilon}\tau\right)\Bigg\}\nonumber \\& + 8\E \left[\mathcal{N}_1 \left(\dfrac{\epsilon\tau}{80aB^2(\lambda+1)},\mathcal{G}_k,z^n\right)+\mathcal{N}_1 \left(\dfrac{\epsilon\tau}{80bB^2(\lambda+1)},\mathcal{G}_p,z^m\right)\right]\exp \left(-\dfrac{\epsilon^2\tau N }{480B^4(\lambda+1)^2}\right) \label{eq: pop to empirical}.
\end{align}

To prove \eqref{eq: pop to empirical}, we first present the following lemma.

\raggedbottom

\begin{lemma}\label{lem:pop to empirical}
Let $\mathcal{G}_k=\{k(f,g): (f,g)\in \mathcal{F}\times \mathcal{G}\}, \ \mathcal{G}_p=\{p(f,g):(f,g)\in \mathcal{F}\times \mathcal{G}\}$. $(z^n,y^n),z^m$ are the samples used to generate $k_i,p_j$ respectively. Then
\begin{align*}
    &\mathbb{P} \left\{\exists (f,g)\in \mathcal{F}\times \mathcal{G}:\dfrac{\widehat Q(f,g)-Q(f,g)}{\tau+\widehat Q(f,g)+Q(f,g)} > \epsilon\right\}\\&\le 4\E \left[\mathcal{N}_1 \left(\dfrac{\epsilon\tau}{80aB^2(\lambda+1)},\mathcal{G}_k,z^n\right)+\mathcal{N}_1 \left(\dfrac{\epsilon\tau}{80bB^2(\lambda+1)},\mathcal{G}_p,z^m\right)\right]\exp \left(-\dfrac{\epsilon^2\tau N }{480B^4(\lambda+1)^2}\right).
\end{align*}  
\end{lemma} 
\begin{proof}
Write
\[
P_n k^2\coloneq\frac1n\sum_i k_i^2,
\qquad
P_m p^2\coloneq\frac1m\sum_j p_j^2.
\]
Since $\widehat Q=aP_nk^2+bP_mp^2$ and
$Q=a\E[k^2]+b\E[p^2]$, the event in the lemma is contained in the union of
\begin{align*}
&\left\{\exists(f,g):
\frac{P_nk^2-\E[k^2]}
{\tau/(2a)+P_nk^2+\E[k^2]}>\epsilon\right\},\\
&\left\{\exists(f,g):
\frac{P_mp^2-\E[p^2]}
{\tau/(2b)+P_mp^2+\E[p^2]}>\epsilon\right\}.
\end{align*}
Indeed, if neither block event occurs, multiplying the two inequalities by
$a$ and $b$ and adding them gives
$\widehat Q-Q\le\epsilon(\tau+\widehat Q+Q)$.

Introduce the nonnegative squared classes
\[
\mathcal H_k\coloneq\{k^2:k\in\mathcal G_k\},
\qquad
\mathcal H_p\coloneq\{p^2:p\in\mathcal G_p\}.
\]
Under Assumption~\ref{assump:boundedness},
\[
|k|\le 4B^2(\lambda+1),
\qquad
|p|\le 4B^2,
\]
and hence
\[
|k^2-\widetilde k^2|
\le 8B^2(\lambda+1)|k-\widetilde k|,
\qquad
|p^2-\widetilde p^2|
\le 8B^2|p-\widetilde p|.
\]
The two squared classes have envelopes $16B^4(\lambda+1)^2$ and
$16B^4$, respectively. For a nonnegative class $\mathcal H$ bounded by $L$,
an i.i.d.\ sample of size $r$, and $\nu>0$,
\citet[Theorem 11.6]{gyorfi2002distribution} gives
\begin{align*}
&\mathbb P\left\{\exists h\in\mathcal H:
\frac{P_rh-\E h}{\nu+P_rh+\E h}>\epsilon\right\}\\
&\quad\le4\E\mathcal N_1\left(
\frac{\epsilon\nu}{5},\mathcal H,S^r\right)
\exp\left(-\frac{3\epsilon^2\nu r}{40L}\right).
\end{align*}
If necessary, this form follows by rescaling the class. Apply it to
$\mathcal H_k$ with $\nu=\tau/(2a)$ and to $\mathcal H_p$ with
$\nu=\tau/(2b)$. Since $n=aN$ and $m=bN$, the lemma probability is at most
\begin{align*}
&4\E\mathcal N_1\left(\frac{\epsilon\tau}{10a},
\mathcal H_k,z^n\right)
\exp\left(-\frac{3\epsilon^2\tau N}
{1280B^4(\lambda+1)^2}\right)\\
&\quad+4\E\mathcal N_1\left(\frac{\epsilon\tau}{10b},
\mathcal H_p,z^m\right)
\exp\left(-\frac{3\epsilon^2\tau N}{1280B^4}\right).
\end{align*}
The preceding Lipschitz bounds pull these squared-class covers back to
$\mathcal G_k$ and $\mathcal G_p$. Since $\lambda+1\ge1$ and
$3/1280\ge1/480$, weakening both terms to a common radius and exponent gives
\begin{align*}
&\mathbb{P} \left\{\exists (f,g)\in \mathcal{F}\times\mathcal{G}:
\frac{\widehat Q(f,g)-Q(f,g)}
{\tau+\widehat Q(f,g)+Q(f,g)} > \epsilon\right\}\\
&\quad\le 4\E \left[
\mathcal{N}_1 \left(
\frac{\epsilon\tau}{80aB^2(\lambda+1)},\mathcal{G}_k,z^n\right)
+\mathcal{N}_1 \left(
\frac{\epsilon\tau}{80bB^2(\lambda+1)},\mathcal{G}_p,z^m\right)
\right]
\exp \left(-\frac{\epsilon^2\tau N}{480B^4(\lambda+1)^2}\right).
\end{align*}
\end{proof}

Next, define the events
\begin{align*}
\mathcal{S}_1
&\coloneq
\Biggl\{
\begin{aligned}
\exists (f,g)\in\mathcal F\times\mathcal G:
&\frac{1}{N}\Bigl(\sum_i (k_i'-k_i)+\sum_j (p_j'-p_j)\Bigr)\ge \frac{\epsilon}{2}\Bigl(\alpha+\beta+\E[A_{f,g}]\Bigr)
\end{aligned}
\Biggr\},\\[2mm]
\mathcal{S}_2
&\coloneq
\Bigl\{\forall (f,g)\in\mathcal F\times\mathcal G:
\widehat Q(f,g)-Q(f,g)
\le \epsilon\bigl(\tau+\widehat Q(f,g)+Q(f,g)\bigr)\Bigr\},\\[2mm]
\mathcal{S}_3
&\coloneq
\Bigl\{\forall (f,g)\in\mathcal F\times\mathcal G:
\widehat Q'(f,g)-Q(f,g)
\le \epsilon\bigl(\tau+\widehat Q'(f,g)+Q(f,g)\bigr)\Bigr\}.
\end{align*}

We have the set inclusion $
\mathcal{S}_1 \subset (\mathcal{S}_1\cap \mathcal{S}_2\cap \mathcal{S}_3)\ \cup\ \mathcal{S}_2^c\ \cup\ \mathcal{S}_3^c
$, hence
\begin{align*}
\mathbb{P}(\mathcal{S}_1)
&\le \mathbb{P}(\mathcal{S}_1\cap \mathcal{S}_2\cap \mathcal{S}_3)+\mathbb{P}(\mathcal{S}_2^c)+\mathbb{P}(\mathcal{S}_3^c) \\ &\le \mathbb{P}(\mathcal{S}_1\cap \mathcal{S}_2\cap \mathcal{S}_3)+2\,\mathbb{P}(\mathcal{S}_2^c) \\
&\le\mathbb{P} \Bigg\{\exists (f,g)\in \mathcal{F}\times \mathcal{G}: \dfrac{1}{N } \left(\sum\limits_{i} (k_i' - k_i)+ \sum\limits_{j} (p_j'-p_j)\right)\ge \dfrac{\epsilon}{2} \left(\alpha+\beta\right)\\& \qquad + \dfrac{\epsilon}{64(\lambda+1)B^2}\dfrac{1-\epsilon}{1+\epsilon} \left(\widehat Q(f,g)+\widehat Q'(f,g)-\dfrac{2\epsilon}{1-\epsilon}\tau\right)\Bigg\}\\ &+ 8\E \left[\mathcal{N}_1 \left(\dfrac{\epsilon\tau}{80aB^2(\lambda+1)},\mathcal{G}_k,z^n\right)+\mathcal{N}_1 \left(\dfrac{\epsilon\tau}{80bB^2(\lambda+1)},\mathcal{G}_p,z^m\right)\right]\exp \left(-\dfrac{\epsilon^2\tau N }{480B^4(\lambda+1)^2}\right),
\end{align*}

where the last inequality uses Lemma~\ref{lem:pop to empirical} and the fact that $\mathcal{S}_2$ implies
\[
\E \left[A_{f,g}\right]\stackrel{(*)}{\ge}\frac{1}{16( \lambda+1)B^2} Q(f,g) \ge \frac{1}{16( \lambda+1)B^2}\dfrac{1-\epsilon}{1+\epsilon} \left(\widehat Q(f,g) -\dfrac{\epsilon}{1-\epsilon}\tau\right)
\]
Similarly for $\cD'$, $\mathcal{S}_3$ implies 
\[
\E \left[A_{f,g}\right]\stackrel{(*)}{\ge} \frac{1}{16( \lambda+1)B^2} Q(f,g) \ge \frac{1}{16( \lambda+1)B^2}\dfrac{1-\epsilon}{1+\epsilon} \left(\widehat Q'(f,g) -\dfrac{\epsilon}{1-\epsilon}\tau\right),
\]
where $(*)$ follows by \eqref{eq: expectation lower bound}. Therefore \eqref{eq: pop to empirical} holds.

\noindent\textbf{Step 3. Rademacher variables.}
Let $\{U_\ell\}_{\ell=1}^N$ be independent Rademacher variables,
independent of both the original and ghost samples.
Set
$c_\epsilon\coloneq
\dfrac{\epsilon}{64B^2(\lambda+1)}\dfrac{1-\epsilon}{1+\epsilon}$.
Then
\begin{align}
       &\mathbb{P} \Bigg\{\exists (f,g)\in \mathcal{F}\times \mathcal{G}: \dfrac{1}{N } \left(\sum\limits_{i} (k_i' - k_i)+ \sum\limits_{j} (p_j'-p_j)\right)\ge \dfrac{\epsilon}{2} \left(\alpha+\beta\right) \nonumber \\  &\qquad+ c_\epsilon \left(\widehat Q(f,g)+\widehat Q'(f,g)-\dfrac{2\epsilon}{1-\epsilon}\tau\right)\Bigg\}
    \nonumber \\ &\le  2\mathbb{P} \Bigg\{\exists (f,g)\in \mathcal{F}\times \mathcal{G}: \dfrac{1}{N } \left| \sum\limits_{i} U_i k_i + \sum\limits_{j}U_j p_j \right| \ge \dfrac{\epsilon}{4} \left(\alpha+\beta\right) \nonumber\\&\qquad+ c_\epsilon \left(\widehat Q(f,g) -\dfrac{\epsilon}{1-\epsilon}\tau\right)\Bigg\} \label{eq: rademacher},
\end{align}
For brevity, write
\begin{align*}
R_{\cD}(f,g)
&\coloneq\frac{\epsilon}{4}(\alpha+\beta)
+c_\epsilon\left(\widehat Q(f,g)-\frac{\epsilon}{1-\epsilon}\tau\right),\\
T_{\cD}(f,g;U)
&\coloneq\frac1N\left(\sum_iU_i k_i+\sum_jU_jp_j\right),
\end{align*}
and define $R_{\cD'}$ and $T_{\cD'}$ analogously. Independently swapping the
two observations within each original--ghost pair shows that the left-hand
probability in \eqref{eq: rademacher} equals
\[
\mathbb P\left\{\exists(f,g):
T_{\cD'}(f,g;U)-T_{\cD}(f,g;U)
\ge R_{\cD}(f,g)+R_{\cD'}(f,g)\right\},
\]
because $\widehat Q+\widehat Q'$ is invariant under these swaps. By the
triangle inequality and a union bound, this is at most
\begin{align*}
&\mathbb P\left\{\exists(f,g):
|T_{\cD}(f,g;U)|\ge R_{\cD}(f,g)\right\}\\
&\quad+\mathbb P\left\{\exists(f,g):
|T_{\cD'}(f,g;U)|\ge R_{\cD'}(f,g)\right\}.
\end{align*}
The two probabilities are equal by identical distribution, which proves
\eqref{eq: rademacher}.

\bigskip
\noindent\textbf{Step 4. Covering numbers.} For each realization $\cD$,
let $\Gamma_\delta=\Gamma_\delta(\cD)$ be a smallest subset of
$\mathcal{F}\times\mathcal{G}$ such that,
for every $(f,g)\in\mathcal{F}\times\mathcal{G}$, there exists $(\tilde f,\tilde g)\in\Gamma_\delta$
satisfying
\[
\frac{1}{N}\left(
\sum_i \left|k_i(f,g)-k_i(\tilde f,\tilde g)\right|
+
\sum_j \left|p_j(f,g)-p_j(\tilde f,\tilde g)\right|
\right)
\le \delta.
\]

For notational simplicity, we write $\tilde k_i$ and $\tilde p_j$ in place of
$k_i(\tilde f,\tilde g)$ and $p_j(\tilde f,\tilde g)$, respectively. Define
\[
\widetilde Q\coloneq\frac{1}{N}\left(\sum_i\tilde k_i^2+\sum_j\tilde p_j^2\right).
\]
Conditionally on $\cD$, we show that
\begin{align}
&\mathbb{P}_U \left\{
\begin{aligned}
&\exists (f,g)\in \mathcal{F}\times \mathcal{G}: \dfrac{1}{N }
\left| \sum\limits_{i} U_i k_i + \sum\limits_{j}U_j p_j \right|\ge \dfrac{\epsilon}{4} \left(\alpha+\beta\right)
+c_\epsilon\left(\widehat Q(f,g)-\dfrac{\epsilon}{1-\epsilon}\tau\right)
\end{aligned}
\,\middle|\,\cD\right\} \nonumber\\
&\le  \left| \Gamma_{\frac{\epsilon\beta}{5}} \right|
\max _{(\tilde f,\tilde g)\in \Gamma_{\frac{\epsilon\beta}{5}}}
\mathbb{P}_U \left\{
\begin{aligned}
&\dfrac{1}{N } \left| \sum\limits_{i} U_i \tilde k_i
+ \sum\limits_{j}U_j \tilde p_j \right|
 \ge \dfrac{\epsilon}{4} \alpha
+c_\epsilon\left(\widetilde Q-\dfrac{\epsilon}{1-\epsilon}\tau_\alpha\right)
\end{aligned}
\,\middle|\,\cD\right\}. \label{eq: covering numbers}
\end{align}

Indeed, conditionally on $\cD$, we have
\begin{align*}
&\mathbb{P}_U \left\{
\begin{aligned}
&\exists (f,g)\in \mathcal{F}\times \mathcal{G}: \dfrac{1}{N }
\left| \sum\limits_{i} U_i k_i + \sum\limits_{j}U_j p_j \right|\ge \dfrac{\epsilon}{4} \left(\alpha+\beta\right)
+c_\epsilon\left(\widehat Q(f,g)-\dfrac{\epsilon}{1-\epsilon}\tau\right)
\end{aligned}
\,\middle|\,\cD\right\}\\
&\stackrel{(*)}{\le}
\left| \Gamma_\delta \right| \max _{(\tilde f,\tilde g)\in \Gamma_\delta}
\mathbb{P}_U \Bigg\{
\begin{aligned}
&\dfrac{1}{N } \left| \sum\limits_{i} U_i \tilde k_i
+ \sum\limits_{j}U_j \tilde p_j \right| + \delta\ge \dfrac{\epsilon}{4} \left(\alpha+\beta\right)
+c_\epsilon\left(\widetilde Q
-8B^2(\lambda+1)\delta
-\dfrac{\epsilon}{1-\epsilon}\tau\right)
\end{aligned}
\,\Biggm|\,\cD\Bigg\}
    \\
&\stackrel{(\dag)}{\le}
\left| \Gamma_{\frac{\epsilon\beta}{5}} \right|
\max _{(\tilde f,\tilde g)\in \Gamma_{\frac{\epsilon\beta}{5}}}
\mathbb{P}_U \left\{
\begin{aligned}
&\dfrac{1}{N } \left| \sum\limits_{i} U_i \tilde k_i
+ \sum\limits_{j}U_j \tilde p_j \right| \ge \dfrac{\epsilon}{4} \alpha
+c_\epsilon\left(\widetilde Q-\dfrac{\epsilon}{1-\epsilon}\tau_\alpha\right)
\end{aligned}
\,\middle|\,\cD\right\},
\end{align*}  
where $(*)$ follows by observing that 
\begin{align*}
    &\dfrac{1}{N } \left| \sum\limits_{i} U_i k_i + \sum\limits_{j}U_j p_j \right|- \dfrac{1}{N } \left| \sum\limits_{i} U_i \tilde k_i + \sum\limits_{j}U_j \tilde p_j \right| \le   \dfrac{1}{N } \left| \sum\limits_{i} U_i (k_i-\tilde k_i) + \sum\limits_{j}U_j (p_j-\tilde p_j) \right|\\&\le\dfrac{1}{N }\left(\sum\limits_{i} \left| k_i(f,g)-k_i(\tilde f,\tilde g) \right| +\sum\limits_{j} \left| p_j(f,g)-p_j(\tilde f,\tilde g) \right|\right)
    \le\delta,
\end{align*} 
and 
\begin{align*}
   &\dfrac{1}{N }\left(\sum\limits_i \tilde k_i^2+\sum\limits_j\tilde p_j^2\right)-\dfrac{1}{N }\left(\sum\limits_i k_i^2+\sum\limits_j p_j^2\right)\\
   &\qquad\le \dfrac{1}{N }\left(\sum\limits_i (\tilde k_i-k_i)(\tilde k_i+k_i)+\sum\limits_j (\tilde p_j-p_j)(\tilde p_j+p_j)\right) \\
   &\qquad\le\dfrac{8B^2(\lambda+1)}{N}\left(\sum\limits_{i} \left| k_i(f,g)-k_i(\tilde f,\tilde g) \right| +\sum\limits_{j} \left| p_j(f,g)-p_j(\tilde f,\tilde g) \right|\right)\le8B^2(\lambda+1)\delta.
\end{align*}
$(\dag)$ follows by setting $\delta=\dfrac{\epsilon\beta}{5}$, using $\tau=\tau_\alpha+\tau_\beta$, and observing that
\begin{align*}
    &\dfrac{\epsilon\beta}{4}-\dfrac{\epsilon\beta}{5}-c_\epsilon \left(8B^2(\lambda+1)\dfrac{\epsilon\beta}{5} +\dfrac{\epsilon}{1-\epsilon}\tau_\beta\right)  \\
    &=\dfrac{\epsilon\beta}{20}-\dfrac{\epsilon^2(1-\epsilon)}{40(1+\epsilon)} \beta -\dfrac{\epsilon^2}{64(1+\epsilon)}\beta \ge 0.
\end{align*}

\bigskip
\noindent\textbf{Step 5. Bernstein's inequality.}
Conditionally on $\cD$, for every fixed
$(\tilde f,\tilde g)\in\Gamma_{\epsilon\beta/5}(\cD)$,
Bernstein's inequality gives
\begin{align}
&\mathbb{P}_U \left\{
\begin{aligned}
&\dfrac{1}{N } \left| \sum\limits_{i} U_i \tilde k_i
+ \sum\limits_{j}U_j \tilde p_j \right|\ge \dfrac{\epsilon}{4} \alpha
+c_\epsilon\left(\widetilde Q-\dfrac{\epsilon}{1-\epsilon}\tau_\alpha\right)
\end{aligned}
\,\middle|\,\cD\right\} \nonumber\\
&\qquad\le  2\exp\left(
 -\frac{N\epsilon^{2}(1-\epsilon)\alpha}
       {512(\lambda+1)B^{2}}\right). \label{eq: bernstein bound}
\end{align}
To verify this bound, define
\begin{align*}
\sigma^2&\coloneq\dfrac{1}{N }\left(\sum\limits_i \tilde k_i^2+\sum\limits_j\tilde p_j^2\right)= \dfrac{1}{N }\Var\left( \sum\limits_{i} U_i \tilde k_i + \sum\limits_{j}U_j \tilde p_j \right), \\
A_1&\coloneq\dfrac{\epsilon}{4} \alpha- \dfrac{\epsilon^2}{64(1+\epsilon)} \alpha, \ \text{and} \quad A_2\coloneq\dfrac{\epsilon}{64(\lambda+1)B^2}\dfrac{1-\epsilon}{1+\epsilon}.
\end{align*}
By Bernstein's inequality and the independence of the Rademacher variables,
we have
\begin{align*}
    &\mathbb{P}_U \Bigg\{\dfrac{1}{N } \left| \sum\limits_{i} U_i \tilde k_i + \sum\limits_{j}U_j \tilde p_j \right|  \ge \dfrac{\epsilon}{4} \alpha+c_\epsilon\left(\widetilde Q-\dfrac{\epsilon}{1-\epsilon}\tau_\alpha\right)\,\Biggm|\,\cD\Bigg\}\\&= \mathbb{P}_U \left\{\dfrac{1}{N } \left| \sum\limits_{i} U_i \tilde k_i + \sum\limits_{j}U_j \tilde p_j \right|  \ge A_1 + A_2 \sigma^2\,\middle|\,\cD\right\}\\&\le  2\exp \left(-\dfrac{N (A_1 + A_2 \sigma^2)^2}{2\sigma^2 +\dfrac{8}{3} B^2(\lambda+1) (A_1 + A_2 \sigma^2)}\right)= 2\exp \left(-\dfrac{3NA_2}{8B^2(\lambda+1)}\cdot \dfrac{(A_1/A_2 +  \sigma^2)^2}{ \left(\dfrac{3}{4B^2A_2(\lambda+1)}+1\right) \sigma^2 +\dfrac{A_1   }{A_2}}\right)\\
  &\stackrel{(\dag)}{\le}  2\exp \left(-\dfrac{3NA_2}{8B^2(\lambda+1)}\cdot \frac{192\,\epsilon(1+\epsilon)(\lambda+1)B^2(16+15\epsilon)\alpha}
{\bigl(\epsilon(1-\epsilon)+48(1+\epsilon)\bigr)^2}\right)\\&\le  2\exp \left(-\dfrac{3N A_2}{8B^2(\lambda+1)}\cdot \frac{192\,\epsilon(1+\epsilon)\,16(\lambda+1)B^2\alpha}
{\bigl(\epsilon(1-\epsilon)+48(1+\epsilon)\bigr)^2}\right) \\&=2 
\exp\left(
 -\frac{3N}{8B^2(\lambda+1)}
 \cdot\frac{\epsilon(1-\epsilon)}{64(\lambda+1)B^2(1+\epsilon)}
\cdot\frac{192\,\epsilon(1+\epsilon)\,16(\lambda+1)B^2 \alpha}
           {\bigl(\epsilon(1-\epsilon)+48(1+\epsilon)\bigr)^2}
\right)  \\&=2\exp\left(
 -\frac{18N\,\epsilon^{2}(1-\epsilon)\alpha}
       {(\lambda+1)B^{2}\,\bigl(\epsilon(1-\epsilon)+48(1+\epsilon)\bigr)^{2}}
\right)\le 2\exp\left(
 -\frac{N\epsilon^{2}(1-\epsilon)\alpha}
       {512(\lambda+1)B^{2}}
\right),  
\end{align*}
where the last inequality follows using
$\max_{\,0\le \epsilon\le 1} \bigl(\epsilon(1-\epsilon)+48(1+\epsilon)\bigr)^2=9216.
$ $(\dag)$ follows by noting that for any $u > 0$, $b>2$,  we have $$\frac{(a+u)^2}{a + b\,u}
\;\ge\;
\frac{\bigl(a + \tfrac{b-2}{b}\,a\bigr)^2}{a + b\,\tfrac{b-2}{b}\,a}
\;=\;
4a\,\frac{b-1}{b^{2}}\,,$$
which yields that, 
\begin{align*}
\frac{\bigl(A_1/A_2+\sigma^2\bigr)^2}
{\left(\dfrac{3}{4B^2A_2(\lambda+1)}+1\right)\sigma^2+\dfrac{A_1}{A_2}}  \ge 
\frac{\dfrac{192A_1(1+\epsilon)}{\epsilon(1-\epsilon)}}
{A_2\left(1+\dfrac{48(1+\epsilon)}{\epsilon(1-\epsilon)}\right)^2}
&=
\frac{192\,\alpha\,\epsilon(1+\epsilon)(\lambda+1)B^2(16+15\epsilon)}
{\bigl(\epsilon(1-\epsilon)+48(1+\epsilon)\bigr)^2} \\&\ge \frac{192\,\epsilon(1+\epsilon)\,16(\lambda+1)B^2 \alpha}
{\bigl(\epsilon(1-\epsilon)+48(1+\epsilon)\bigr)^2}.
\end{align*} 
Therefore \eqref{eq: bernstein bound} holds.

Averaging \eqref{eq: covering numbers} over $\cD$ and using
\eqref{eq: bernstein bound}, we obtain
\begin{align}
&\mathbb P\left\{\exists(f,g)\in\mathcal F\times\mathcal G:
\frac1N\left|\sum_iU_i k_i+\sum_jU_jp_j\right|
\ge\frac\epsilon4(\alpha+\beta)
+c_\epsilon\left(\widehat Q(f,g)-\frac\epsilon{1-\epsilon}\tau\right)
\right\}\nonumber\\
&\quad\le
2\,\E_{\cD}\left|\Gamma_{\epsilon\beta/5}(\cD)\right|
\exp\left(-\frac{N\epsilon^2(1-\epsilon)\alpha}
{512(\lambda+1)B^2}\right).
\label{eq:averaged-cover-bound}
\end{align}

\bigskip
\noindent\textbf{Step 6. Final bound.}
\raggedbottom
Observe that if we have   
\begin{align*}
&\frac{1}{N}\left(\sum\limits_{i} \left| f(X_i)-\tilde f(X_i) \right|+\sum\limits_{j} \left| f(X_j)-\tilde f(X_j) \right| \right) \le \dfrac{\epsilon\beta}{40B}, \\
&\frac{1}{N}\left(\sum\limits_{i} \left| g(Z_i)-\tilde g(Z_i) \right|+\sum\limits_{j} \left| g(Z_j)-\tilde g(Z_j) \right| \right)\le \dfrac{\epsilon\beta}{40B(\lambda+1)},
\end{align*}
then we have    
\begin{align*}
    &\dfrac{1}{N }\left(\sum\limits_{i} \left| k_i(f,g)-k_i(\tilde f,\tilde g) \right| +\sum\limits_{j} \left| p_j(f,g)-p_j(\tilde f,\tilde g) \right|\right) \\&\le\dfrac{1}{N }\left(\sum\limits_{i} 4B\left| f(X_i)-\tilde f(X_i) \right| +4\lambda B \left| g(Z_i)-\tilde g(Z_i) \right| +\sum\limits_{j} 4B\left| f(X_j)-\tilde f(X_j) \right|+4B \left| g(Z_j)-\tilde g(Z_j) \right| \right) \le\dfrac{\epsilon\beta}{5},
\end{align*}
which implies that   
$
\left| \Gamma_{\frac{\epsilon\beta}{5}} \right| \le \mathcal{N}_1\left(\dfrac{\epsilon\beta}{40B},\mathcal{F},x^N\right)\cdot \mathcal{N}_1\left(\dfrac{\epsilon\beta}{40(\lambda+1) B},\mathcal{G},z^N\right)
$.
Thus $\E_{\cD}|\Gamma_{\epsilon\beta/5}(\cD)|\le\mathcal M_1$,
with $\mathcal M_1$ defined below.

Combining the steps, we have the following.
\begin{align*}
    &\mathbb{P} \left\{\exists (f,g)\in \mathcal{F}\times \mathcal{G}:\E[A_{f,g}]-\frac{1}{N }\sum_{i} k_i - \frac{1}{N }\sum_{j} p_j\ge \epsilon \left(\alpha+\beta+\E[A_{f,g}]\right)   \right\}
    \\&\le  \frac{32}{7}\mathcal{M}_1\exp\left(
 -\frac{N\epsilon^{2}(1-\epsilon)\alpha} {512(\lambda+1)B^{2}}\right)+\frac{64}{7}\mathcal{M}_2\exp \left(-\dfrac{\epsilon^2\tau N}{480B^4(\lambda+1)^2}\right)
\end{align*}
 where $\mathcal{M}_1=\max_{z^N}\mathcal{N}_1\left(\dfrac{\epsilon\beta}{40B},\mathcal{F},x^N\right)\cdot \mathcal{N}_1\left(\dfrac{\epsilon\beta}{40(\lambda+1) B},\mathcal{G},z^N\right)$, and 
 \begin{align*}
   \mathcal{M}_2={}&\max_{z^n}\mathcal{N}_1 \left(\dfrac{\epsilon\tau}{80aB^2(\lambda+1)},\mathcal{G}_k,z^n\right)+\max_{z^m}\mathcal{N}_1 \left(\dfrac{\epsilon\tau}{80bB^2(\lambda+1)},\mathcal{G}_p,z^m\right)\\
   \le{}&\max_{z^n}\mathcal{N}_1 \left(\dfrac{\epsilon(\alpha+\beta)}{640Ba},\mathcal{F},x^n\right)\cdot\mathcal{N}_1 \left(\dfrac{\epsilon(\alpha+\beta)}{640B(\lambda+1)a},\mathcal{G},z^n\right)\\
   &+\max_{z^m}\mathcal{N}_1 \left(\dfrac{\epsilon(\alpha+\beta)}{640Bb},\mathcal{F},x^m\right)\cdot\mathcal{N}_1 \left(\dfrac{\epsilon(\alpha+\beta)}{640Bb},\mathcal{G},z^m\right).
\end{align*} 
Thus,
\begin{align*}
    &\mathbb{P} \left\{\exists (f,g)\in \mathcal{F}\times \mathcal{G}:\E[A_{f,g}]-\frac{1}{N }\sum_{i} k_i - \frac{1}{N }\sum_{j} p_j\ge \epsilon \left(\alpha+\beta+\E[A_{f,g}]\right)   \right\}
    \\&\le  \frac{32}{7}\max_{z^N}\mathcal{N}_1\left(\dfrac{\epsilon\beta}{40B},\mathcal{F},x^N\right)\cdot \mathcal{N}_1\left(\dfrac{\epsilon\beta}{40(\lambda+1) B},\mathcal{G},z^N\right) \cdot \exp\left(
 -\frac{N\epsilon^{2}(1-\epsilon)\alpha}{512(\lambda+1)B^{2}}\right)\\&+\frac{64}{7}\max_{z^n}\mathcal{N}_1 \left(\dfrac{\epsilon(\alpha+\beta)}{640Ba},\mathcal{F},x^n\right)\cdot\mathcal{N}_1 \left(\dfrac{\epsilon(\alpha+\beta)}{640B(\lambda+1)a},\mathcal{G},z^n\right) \cdot \exp \left(-\dfrac{\epsilon^2(\alpha+\beta)N}{480B^2(\lambda+1)}\right) \\ &+ \frac{64}{7}\max_{z^m}\mathcal{N}_1 \left(\dfrac{\epsilon(\alpha+\beta)}{640Bb},\mathcal{F},x^m\right)\cdot\mathcal{N}_1 \left(\dfrac{\epsilon(\alpha+\beta)}{640Bb},\mathcal{G},z^m\right) \cdot \exp \left(-\dfrac{\epsilon^2(\alpha+\beta)N}{480B^2(\lambda+1)}\right)\\
    &\le10\mathfrak M_{\mathcal{F},\mathcal{G}}^ {n,m,\lambda}\exp \left(-\dfrac{\epsilon^2(1-\epsilon)\alpha N}{512B^2(\lambda+1)}\right).
\end{align*}
This completes the proof of Theorem~\ref{thm:modified_thm11.4}.
\end{proof}
\raggedbottom
\subsection{Proof of Theorem~\ref{thm:rate-fstar}}
Write
\[
D(f,g)\coloneqq\mathcal L(f,g;\lambda)-\mathcal L(f^\star,g^\star;\lambda),
\qquad
\widehat D(f,g)\coloneqq
\hat{\mathcal L}(f,g;\lambda)-\hat{\mathcal L}(f^\star,g^\star;\lambda).
\]
In Theorem~\ref{thm:modified_thm11.4}, set
$\epsilon=1/2$, $\alpha=\beta=t/2$, and $C_0=1/(8192B^2)$.
Since
\[
D-\widehat D\ge\tfrac12(t+D)
\quad\Longleftrightarrow\quad
D-2\widehat D\ge t,
\]
that theorem gives
\begin{align*}
&\mathbb P\left\{\exists(f,g)\in\mathcal F\times\mathcal G:
D(f,g)-2\widehat D(f,g)\ge t\right\}\\
&\quad\le 10\bigl(M_n(t)+M_m(t)+M_N(t)\bigr)
\exp\left(-\frac{C_0Nt}{\lambda+1}\right),
\end{align*}
where
\begin{align*}
M_n(t)&\coloneqq\max_{z^n}
\mathcal N_1\left(\frac{t}{1280Ba},\mathcal F,x^n\right)
\mathcal N_1\left(\frac{t}{1280B(\lambda+1)a},\mathcal G,z^n\right),\\
M_m(t)&\coloneqq\max_{z^m}
\mathcal N_1\left(\frac{t}{1280Bb},\mathcal F,x^m\right)
\mathcal N_1\left(\frac{t}{1280Bb},\mathcal G,z^m\right),\\
M_N(t)&\coloneqq\max_{z^N}
\mathcal N_1\left(\frac{t}{160B},\mathcal F,x^N\right)
\mathcal N_1\left(\frac{t}{160B(\lambda+1)},\mathcal G,z^N\right).
\end{align*}

Let
\[
M\coloneqq\sup_{z^N}\mathcal N_{1,\infty}
\left(\epsilon_N,\mathcal F\times\mathcal G,z^N\right),
\qquad E_0\coloneqq\log M,
\]
so that $\mathfrak E_{\mathcal F\times\mathcal G}^{N,\lambda}=1+E_0$.
Restricting a full-sample $\epsilon_N$-cover to the labeled and unlabeled
blocks gives covers of radii $\epsilon_N/a$ and $\epsilon_N/b$,
respectively. For $t\ge1/N$, the radii defining $M_n(t)$, $M_m(t)$,
and $M_N(t)$ are at least the corresponding restricted-cover radii.
Projecting a joint max-norm cover onto its two coordinates therefore gives
\[
M_n(t)\le M^2,
\qquad M_m(t)\le M^2,
\qquad M_N(t)\le M^2.
\]
It follows that, for the data-dependent pair $(\hat f,\hat g)$ and $t\ge1/N$,
\[
\mathbb P\left\{D(\hat f,\hat g)-2\widehat D(\hat f,\hat g)\ge t\right\}
\le30M^2\exp\left(-\frac{C_0Nt}{\lambda+1}\right).
\]

Set
\[
\delta\coloneqq\frac{\lambda+1}{C_0N}\bigl(\log30+2E_0\bigr).
\]
Because $B\ge1$, we have $\delta\ge1/N$. Integrating the preceding tail bound yields
\begin{align*}
&\E_{\cD}\left[D(\hat f,\hat g)-2\widehat D(\hat f,\hat g)\right]\\
&\quad\le\delta+30M^2\int_\delta^\infty
\exp\left(-\frac{C_0Nt}{\lambda+1}\right)dt\\
&\quad=\frac{\lambda+1}{C_0N}\bigl(1+\log30+2E_0\bigr)\\
&\quad\le
\frac{C_1B^2(\lambda+1)\mathfrak E_{\mathcal F\times\mathcal G}^{N,\lambda}}{N},
\end{align*}
for a universal constant $C_1>0$.
Since $\widehat D(\hat f,\hat g)=\hat\Delta(\hat f,\hat g)$, we obtain
\begin{align}
&\E_{\cD}[\mathcal L(\hat f,\hat g;\lambda)]-\mathcal L(f^\star,g^\star;\lambda)\nonumber\\
&\quad\le
\frac{C_1B^2(\lambda+1)\mathfrak E_{\mathcal F\times\mathcal G}^{N,\lambda}}{N}
+2\E_{\cD}[\hat\Delta(\hat f,\hat g)].
\label{eq:upper-bound-star}
\end{align}
Combining Corollary~\ref{cor:gamma-star} and \eqref{eq:upper-bound-star} completes the proof.

\subsection{Proof of Remark \ref{rem: correlation score} (ii)} \label{proof: rem: correlation score}

Note that for any square-integrable random functions $k_1$ and $k_2$, we have \[
    1-\frac{\E[\ip{k_1}{k_2}    ]^2 }{\E[\|k_1\|^2]\E[\|k_2\|^2]}=\frac{\E[\|k_1-ck_2\|^2]}{\E[\|k_1\|^2]},
    \] where $$c=\frac{\E[\ip{k_1}{k_2}]}{\E[\|k_2\|^2]}.$$
Applying this identity with
$k_1 = g^\star(Z)-\hat g(Z)$
and
$k_2 = f^\star(X)-\hat f(X)$,
we obtain
\begin{align*}
1-\rho_\star^2
&=
\frac{\E_{\cD,Z}\left[\big(g^\star(Z)-\hat g(Z)-c(f^\star(X)-\hat f(X))\big)^2\right]}
{\E_{\cD,Z}\left[\big(g^\star(Z)-\hat g(Z)\big)^2\right]}
\\
&\ge
\frac{\E_{\cD,X}\left[\Var\big(g^\star(Z)-\hat g(Z)-c(f^\star(X)-\hat f(X))\mid \cD,X\big)\right]}
{\E_{\cD,Z}\left[\big(g^\star(Z)-\hat g(Z)\big)^2\right]}
\\
&=
\frac{\E_{\cD,X}\left[\Var\big(g^\star(Z)-\hat g(Z)\mid \cD,X\big)\right]}
{\E_{\cD,Z}\left[\big(g^\star(Z)-\hat g(Z)\big)^2\right]},
\end{align*}

where the inequality follows from the law of total variance.

Rearranging yields
\[
\rho_\star^2
\le
\frac{\E_{\cD,X}\left[\Big(\E\big[g^\star(Z)-\hat g(Z)\mid \cD,\,X\big]\Big)^2\right]}
{\E_{\cD,Z}\left[\big(g^\star(Z)-\hat g(Z)\big)^2\right]}.
\]
\subsection{Proof of Theorem~\ref{thm:oga-main-1over5}}

For comparison element $h = (f, g)$ with atomic norms.
\[
C_f \coloneq  \|f\|_{L^1(\mathcal{D}_f)}, \quad
C_g \coloneq  \|g\|_{L^1(\mathcal{D}_g)}, \quad
C_h \coloneq  C_f + C_g.
\]
Define the excess energy.
\begin{equation*}
a_k \coloneq  \|r_k\|_\star^2 - \|(Y,Y)-h\|_\star^2
\end{equation*}
and the norms
\begin{equation*}
    \alpha_k=\|\Pi_{S_k^f} r_k\|_\star\,,\quad\beta_k=\|\Pi_{S_k^g} r_k^{(g)}\|_\star.
\end{equation*}
By orthogonal projection properties,
\begin{equation*}\label{eq:residual-recursion}
\|r_{k+1}\|_\star^2 = \|r_k^{(g)}\|_\star^2 - \beta_k^2 = \|r_k\|_\star^2 - \alpha_k^2 - \beta_k^2 = \|r_k\|_\star^2-c^2_k,
\end{equation*}
where $c^2_k = \alpha_k^2 + \beta_k^2 = a_k - a_{k+1}$. To prove Theorem~\ref{thm:oga-main-1over5}, we first prove the following lemmas.

\begin{lemma}\label{lem:cross-talk}
For every $k\ge1$,
\begin{equation*}
A_k = \sup_{a \in \mathcal{D}_f^\star} \ip{r_k}{a}_\star \leq R_\lambda\alpha_k,
\qquad
B_k = \sup_{b \in \mathcal{D}_g^\star} \ip{r_k}{b}_\star \leq R_\lambda\alpha_k+R_\lambda\beta_k.
\end{equation*}
\end{lemma}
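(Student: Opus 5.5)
The plan is to exploit the fact that each greedy step selects the atom whose component orthogonal to the current span is best aligned with the residual. Every atom then splits into a part inside the old span and a part along the newly added direction, and Cauchy--Schwarz applied to this two-piece decomposition controls $\langle r_k,a\rangle_\star$ by the norm of the projection of $r_k$ onto the enlarged span.

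\textbf{Bound on $A_k$.} Fix $a\in\mathcal D_f^\star$. If $\|\Pi^\perp_{S_{k-1}^f}\psi_k^\star\|_\star>0$, set
\[
e_k\coloneqq \Pi^\perp_{S_{k-1}^f}\psi_k^\star\big/\|\Pi^\perp_{S_{k-1}^f}\psi_k^\star\|_\star .
\]
This is a unit vector orthogonal to $S_{k-1}^f$ with $S_k^f=S_{k-1}^f\oplus\mathrm{span}(e_k)$. Hence
\[
\|\Pi_{S_k^f}r_k\|_\star^2=\|\Pi_{S_{k-1}^f}r_k\|_\star^2+\langle r_k,e_k\rangle_\star^2 ,
\]
so the left side equals $\alpha_k^2$. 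Next I write
\[
\langle r_k,a\rangle_\star=\langle \Pi_{S_{k-1}^f}r_k,\Pi_{S_{k-1}^f}a\rangle_\star+\langle r_k,\Pi^\perp_{S_{k-1}^f}a\rangle_\star .
\]
I would bound the second term by $|\langle r_k,e_k\rangle_\star|\,\|\Pi^\perp_{S_{k-1}^f}a\|_\star$. This is trivial when $\Pi^\perp a=0$. Otherwise it follows from the argmax defining $\psi_k^\star$: because $\mathcal D^\star$ is balanced and $\Pi^\perp(-a)=-\Pi^\perp a$, the maximal ratio equals the maximum of $|\langle r_k,\Pi^\perp a\rangle_\star|/\|\Pi^\perp a\|_\star$, and that maximum is $\langle r_k,e_k\rangle_\star$. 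Applying Cauchy--Schwarz in $\mathbb R^2$ then gives
\[
\langle r_k,a\rangle_\star\le \sqrt{\|\Pi_{S_{k-1}^f}r_k\|_\star^2+\langle r_k,e_k\rangle_\star^2}\,\sqrt{\|\Pi_{S_{k-1}^f}a\|_\star^2+\|\Pi^\perp_{S_{k-1}^f}a\|_\star^2}=\alpha_k\|a\|_\star\le R_\lambda\alpha_k ,
\]
where the last step uses the atom bound $\|a\|_\star\le R_\lambda$. In the degenerate case where every atom lies in $S_{k-1}^f$, no new atom can be selected and we have $S_k^f=S_{k-1}^f$. Then $\Pi^\perp a=0$ for all $a$, and the same bound follows from the first term alone.

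\textbf{Bound on $B_k$.} Fix $b\in\mathcal D_g^\star$ and split $r_k=\Pi_{S_k^f}r_k+r_k^{(g)}$. The first piece contributes at most $\|\Pi_{S_k^f}r_k\|_\star\|b\|_\star\le R_\lambda\alpha_k$. For the second piece, $\langle r_k^{(g)},b\rangle_\star$, I would repeat the argument above verbatim with $r_k^{(g)}$ in place of $r_k$, $S^g_{k-1}\subset S^g_k$ in place of the $f$-spans, and $\phi_k^\star$ as the maximizer. This yields the bound $\|\Pi_{S_k^g}r_k^{(g)}\|_\star\|b\|_\star\le R_\lambda\beta_k$, and adding the two pieces gives the claim for $B_k$.

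\textbf{Main obstacle.} The only delicate point is the step that converts the one-sided argmax into a bound on $|\langle r,\Pi^\perp a\rangle_\star|$. This is where balancedness of the dictionary is essential. One must also check that projections commute with the sign flip and that the selection is well defined, which handles the zero-norm exclusions. Everything else reduces to orthogonal decomposition in $\bar{\mathcal K}$.
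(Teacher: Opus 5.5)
Your proof is correct and is essentially the paper's argument: your two-piece Cauchy--Schwarz over $S_{k-1}^f\oplus\mathrm{span}(e_k)$ is the paper's bound $\langle r_k,a\rangle_\star\le R_\lambda\|\Pi_{S_{k-1}^f\cup\{a\}}r_k\|_\star\le R_\lambda\alpha_k$ written out in coordinates, and your split $r_k=\Pi_{S_k^f}r_k+r_k^{(g)}$ for $B_k$ is the same as the paper's. (Minor remark: because you only need an upper bound, the one-sided argmax already gives $\langle r_k,\Pi^\perp_{S_{k-1}^f}a\rangle_\star\le\langle r_k,e_k\rangle_\star\|\Pi^\perp_{S_{k-1}^f}a\|_\star\le|\langle r_k,e_k\rangle_\star|\,\|\Pi^\perp_{S_{k-1}^f}a\|_\star$, so your version does not actually need balancedness, whereas the paper's step $\sup_a\|\Pi_{S_{k-1}^f\cup\{a\}}r_k\|_\star=\alpha_k$ uses it implicitly.)
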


\begin{proof}
Write $r_k=r_k^{(g)}+\Pi_{S_k^f}r_k$ with $r_k^{(g)}\perp S_k^f$ by Step~1. For any $b\in\mathcal{D}_g^\star$,
\[
\langle r_k,b\rangle_\star
= \langle r_k^{(g)},b\rangle_\star + \langle \Pi_{S_k^f}r_k,b\rangle_\star
\leq \sup_{b\in\mathcal{D}_g^\star}\langle r_k^{(g)},b\rangle_\star + R_\lambda\|\Pi_{S_k^f}r_k\|_\star,
\]
where the last inequality uses Cauchy--Schwarz and $\|b\|_\star\le R_\lambda$. For $a\in\mathcal{D}_f^\star$, by definition of $\psi_k^\star$ we have
\[
\sup_{a\in\mathcal{D}_f^\star}\langle r_k,a\rangle_\star\leq R_\lambda\sup_{a\in\mathcal{D}_f^\star}\|\Pi_{S_{k-1}^f\cup\{a\}}r_k\|_\star=R_\lambda\|\Pi_{S_{k}^f} r_k\|_\star =R_\lambda\alpha_k.
\]
 Likewise,
\[
\sup_{b\in\mathcal{D}_g^\star}\langle r_k^{(g)},b\rangle_\star
\leq R_\lambda\|\Pi_{S_k^g}r_k^{(g)}\|_\star=R_\lambda\beta_k,
\]
hence $\sup_{b}\langle r_k,b\rangle_\star\leq R_\lambda\beta_k+R_\lambda\alpha_k$.  This gives the second bound.
\end{proof}

\begin{lemma}\label{lem:A-identity}
We have
\begin{equation}\label{eq:A}
\big\langle r_k,(Y,Y)\big\rangle_\star
=\|r_k\|_\star^2+\big\langle \Pi_{S_k^f}r_k,(f_{k-1},0)\big\rangle_\star, \quad (f_k,0)=(f_{k-1},0)+\Pi_{S_k^f}r_k,
\end{equation}
and, by polarization of the update,
\begin{equation}\label{eq:A-polar}
\big\langle \Pi_{S_k^f}r_k,(f_{k-1},0)\big\rangle_\star
=\frac{1}{2}\left(\big\|(f_k,0)\big\|_\star^2-\big\|(f_{k-1},0)\big\|_\star^2-\big\|\Pi_{S_k^f}r_k\big\|_\star^2\right).
\end{equation}
Equivalently,
\begin{equation}\label{eq:A-combined}
\big\langle r_k,(Y,Y)\big\rangle_\star
=\|r_k\|_\star^2+\frac{1}{2}\left(\big\|(f_k,0)\big\|_\star^2-\big\|(f_{k-1},0)\big\|_\star^2-\big\|\Pi_{S_k^f}r_k\big\|_\star^2\right).
\end{equation}
\end{lemma}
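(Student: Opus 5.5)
The plan is to expand $\langle r_k,(Y,Y)\rangle_\star$ using the definition $r_k=(Y,Y)-(f_{k-1},g_{k-1})$. Since $(Y,Y)=r_k+(f_{k-1},g_{k-1})$,
\[
\big\langle r_k,(Y,Y)\big\rangle_\star=\|r_k\|_\star^2+\big\langle r_k,(f_{k-1},0)\big\rangle_\star+\big\langle r_k,(0,g_{k-1})\big\rangle_\star .
\]
Identity \eqref{eq:A} then reduces to two orthogonality facts:
\[
\big\langle r_k,(0,g_{k-1})\big\rangle_\star=0,\qquad \big\langle r_k,(f_{k-1},0)\big\rangle_\star=\big\langle \Pi_{S_k^f}r_k,(f_{k-1},0)\big\rangle_\star .
\]

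First I will record a membership fact, proved by induction on $k$ from $(f_0,g_0)=(0,0)$ and the nestedness $S_{k-1}^f\subset S_k^f$, $S_{k-1}^g\subset S_k^g$. The fact is that $(f_{k-1},0)\in S_{k-1}^f$ and $(0,g_{k-1})\in S_{k-1}^g$ in $\bar{\mathcal K}$. For the $f$-component, each update adds $\Pi_{S_k^f}r_k$, which lies in $S_k^f$. Because $S_k^f$ is spanned by elements of the form $(\psi,0)$, this increment indeed has zero second coordinate, which also justifies writing the update as $(f_k,0)=(f_{k-1},0)+\Pi_{S_k^f}r_k$. The $g$-component is handled in the same way using $\Pi_{S_k^g}r_k^{(g)}\in S_k^g$.

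Second, the $f$-term follows directly. Since $(f_{k-1},0)\in S_{k-1}^f\subset S_k^f$, and $r_k-\Pi_{S_k^f}r_k\perp S_k^f$, we get $\langle r_k,(f_{k-1},0)\rangle_\star=\langle \Pi_{S_k^f}r_k,(f_{k-1},0)\rangle_\star$.

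Third, for the $g$-term, the case $k=1$ is trivial since $g_0=0$. For $k\ge2$, I will use Step~2 of iteration $k-1$:
\[
r_k=(Y,Y)-(f_{k-1},g_{k-1})=r_{k-1}^{(g)}-\Pi_{S_{k-1}^g}r_{k-1}^{(g)}=\Pi^\perp_{S_{k-1}^g}r_{k-1}^{(g)} .
\]
Thus $r_k\perp S_{k-1}^g\ni(0,g_{k-1})$. This step is where one must carefully track that the residual after the $g$-step is exactly the next $r_k$, and it is the only mildly delicate bookkeeping in the argument.

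Finally, \eqref{eq:A-polar} comes from expanding the squared norm of the update:
\[
\|(f_k,0)\|_\star^2=\|(f_{k-1},0)\|_\star^2+2\big\langle \Pi_{S_k^f}r_k,(f_{k-1},0)\big\rangle_\star+\|\Pi_{S_k^f}r_k\|_\star^2 .
\]
Solving this for the cross term gives \eqref{eq:A-polar}. Substituting it into \eqref{eq:A} yields \eqref{eq:A-combined}.
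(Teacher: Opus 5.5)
Your proposal is correct and follows the paper's proof essentially step for step. You expand $\langle r_k,(Y,Y)\rangle_\star$ via $(Y,Y)=r_k+(f_{k-1},g_{k-1})$, then use $(f_{k-1},0)\in S_{k-1}^f\subset S_k^f$ to replace $r_k$ by $\Pi_{S_k^f}r_k$ in the $f$-term, kill the $g$-term with $r_k\perp S_{k-1}^g\ni(0,g_{k-1})$, and get the polarization identity by expanding $\|(f_k,0)\|_\star^2$. Your explicit induction for the memberships and the bookkeeping $r_k=\Pi^\perp_{S_{k-1}^g}r_{k-1}^{(g)}$ just spell out what the paper asserts in one line (``the previous step ends with $r_k\perp S_{k-1}^g$'').
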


\begin{proof}
By definition $r_k=(Y,Y)-(f_{k-1},g_{k-1})$, and the Step~1 update is the orthogonal projection in the $\star$–inner product, hence
\[
(f_k,g_{k-1})=(f_{k-1},g_{k-1})+\Pi_{S_k^f}r_k,
\quad\text{so}\quad
(f_k,0)=(f_{k-1},0)+\Pi_{S_k^f}r_k .
\]
For \eqref{eq:A}, write
\[
\big\langle r_k,(Y,Y)\big\rangle_\star
=\big\langle r_k,(f_{k-1},g_{k-1})+r_k\big\rangle_\star
=\|r_k\|_\star^2+\big\langle r_k,(f_{k-1},g_{k-1})\big\rangle_\star.
\]
Since $(f_{k-1},0)\in \text{span}(S_k^f)$ and $\Pi_{S_k^f}r_k$ is the orthogonal projection onto $S_k^f$,
\[
\big\langle r_k,(f_{k-1},0)\big\rangle_\star
=\big\langle \Pi_{S_k^f}r_k,(f_{k-1},0)\big\rangle_\star.
\]
Moreover the previous step ends with $r_k\perp S_{k-1}^g$ and $(0,g_{k-1})\in S_{k-1}^g$, hence
$\big\langle r_k,(0,g_{k-1})\big\rangle_\star=0$, proving \eqref{eq:A}.

For \eqref{eq:A-polar}, expand the square using $(f_k,0)=(f_{k-1},0)+\Pi_{S_k^f}r_k$.
\[
\big\|(f_k,0)\big\|_\star^2
=\big\|(f_{k-1},0)\big\|_\star^2
+2\big\langle \Pi_{S_k^f}r_k,(f_{k-1},0)\big\rangle_\star
+\big\|\Pi_{S_k^f}r_k\big\|_\star^2,
\]
and rearrange to obtain \eqref{eq:A-polar}. Substituting \eqref{eq:A-polar} into \eqref{eq:A} yields \eqref{eq:A-combined}.
\end{proof}

\begin{lemma}\label{lem:Two-Stage}
For every $k\ge1$,
\[
\sum_{i=2}^{k+1} a_i \leq 2\sqrt{2}C_h R_\lambda\sum_{i=1}^k c_i,
\qquad C_h= C_f+C_g,\quad c_i = \sqrt{a_i - a_{i+1}}.
\]
\end{lemma}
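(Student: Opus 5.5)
The plan is to bound each excess energy by a one-step inequality, telescope the cross term, and then use the index shift $a_{i+1}=a_i-c_i^2$ to absorb a leftover quadratic term. Throughout, write $u_{k-1}\coloneqq(f_{k-1},g_{k-1})$, so that $r_k=(Y,Y)-u_{k-1}$.

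First I will obtain a per-step bound on $a_k$. Expanding the square with $(Y,Y)-h=r_k-(h-u_{k-1})$ gives
\[
a_k=\|r_k\|_\star^2-\|r_k-(h-u_{k-1})\|_\star^2
=2\langle r_k,h-u_{k-1}\rangle_\star-\|h-u_{k-1}\|_\star^2
\le 2\langle r_k,h\rangle_\star-2\langle r_k,u_{k-1}\rangle_\star .
\]

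For the first term I will write $h=\sum_\psi c_\psi(\psi,0)+\sum_\phi c_\phi(0,\phi)$ with coefficient sums arbitrarily close to $C_f$ and $C_g$. Since $\mathcal D^\star$ is balanced, the signs of the coefficients can be absorbed into the atoms. Lemma~\ref{lem:cross-talk} then gives
\[
\langle r_k,h\rangle_\star\le C_fA_k+C_gB_k\le R_\lambda(C_f+C_g)(\alpha_k+\beta_k)\le \sqrt2\,C_hR_\lambda c_k ,
\]
using $\alpha_k\le\alpha_k+\beta_k$ and $\alpha_k+\beta_k\le\sqrt2\sqrt{\alpha_k^2+\beta_k^2}=\sqrt2\,c_k$.

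For the second term I will use that $r_k\perp(0,g_{k-1})$, exactly as in the proof of Lemma~\ref{lem:A-identity}. Together with \eqref{eq:A-polar}, this gives
\[
-2\langle r_k,u_{k-1}\rangle_\star=-\bigl(\|(f_k,0)\|_\star^2-\|(f_{k-1},0)\|_\star^2\bigr)+\alpha_k^2 .
\]
Combining the two pieces yields
\[
a_k\le 2\sqrt2\,C_hR_\lambda c_k-\bigl(\|(f_k,0)\|_\star^2-\|(f_{k-1},0)\|_\star^2\bigr)+\alpha_k^2 .
\]

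Next I will shift the index to get rid of the stray $\alpha_k^2$. By the residual recursion, $a_{k+1}=a_k-c_k^2$, and $\alpha_k^2\le c_k^2$. Hence
\[
a_{k+1}\le 2\sqrt2\,C_hR_\lambda c_k-\bigl(\|(f_k,0)\|_\star^2-\|(f_{k-1},0)\|_\star^2\bigr).
\]
This is exactly why the statement sums $a_i$ from $i=2$ while summing $c_i$ from $i=1$.

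Summing this inequality over $i=1,\dots,k$ telescopes the bracketed terms to $\|(f_0,0)\|_\star^2-\|(f_k,0)\|_\star^2=-\|(f_k,0)\|_\star^2\le0$, since $f_0=0$. This gives $\sum_{i=2}^{k+1}a_i\le2\sqrt2\,C_hR_\lambda\sum_{i=1}^kc_i$. Finally, I will pass to the infimum over representations of $h$ to replace the approximate coefficient sums by $C_f$ and $C_g$.

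I expect the main obstacle to be the cross term $\langle r_k,u_{k-1}\rangle_\star$. The $f$-block of $u_{k-1}$ is not orthogonal to $r_k$, because after Step~2 the residual is orthogonal only to $S_{k-1}^g$ and not to $S_{k-1}^f$. Handling it therefore relies on the polarization identity, and on noticing that the leftover $+\alpha_k^2$ is cancelled only after the index shift. I also need to check that the $k=1$ case is consistent, which it is since $u_0=0$.
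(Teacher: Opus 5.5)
Your proposal is correct and follows essentially the same route as the paper. It uses the same one-step bound $a_k\le 2\langle r_k,h-u_{k-1}\rangle_\star$ (the paper's AM--GM step $\langle r_k,e\rangle_\star-\|e\|_\star^2\le a_k/2$ amounts to your dropping of $\|h-u_{k-1}\|_\star^2$), the cross-talk lemma, and the polarization/telescoping of the $f$-history; the only cosmetic difference is that you absorb $\alpha_k^2\le c_k^2=a_k-a_{k+1}$ step by step, whereas the paper sums first and then uses $\sum_{i\le k}\alpha_i^2\le a_1-a_{k+1}$.
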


\begin{proof}
Let $e\coloneq (Y,Y)-h$, $r_k\coloneq (Y,Y)-(f_{k-1},g_{k-1})$, and recall
\[
a_k\coloneq \|r_k\|_\star^2-\|e\|_\star^2,\qquad
c_k^2 = a_k-a_{k+1}=\alpha_k^2+\beta_k^2 .
\]

By Lemma \ref{lem:A-identity},
$$
\|r_k\|_\star^2 = \big\langle r_k,(Y,Y)\big\rangle_\star - \big\langle \Pi_{S_k^f}r_k,(f_{k-1},0)\big\rangle_\star.
$$
Then, by arithmetic--geometric mean inequality
\[
\begin{aligned}
a_k&=\|r_k\|_\star^2-\|e\|_\star^2
=\big\langle r_k,e\big\rangle_\star+\big\langle r_k,(f,0)\big\rangle_\star+\big\langle r_k,(0,g)\big\rangle_\star-\big\langle \Pi_{S_k^f}r_k,(f_{k-1},0)\big\rangle_\star-\|e\|_\star^2\\
&\leq \frac{1}{2} a_k + \big\langle r_k,(f,0)\big\rangle_\star - \big\langle \Pi_{S_k^f}r_k,(f_{k-1},0)\big\rangle_\star + \big\langle r_k,(0,g)\big\rangle_\star,
\end{aligned}
\]
and we use the bounds $\big\langle r_k,(f,0)\big\rangle_\star \leq C_f A_k$ and $\big\langle r_k,(0,g)\big\rangle_\star \leq C_g B_k$ to control the two correlation terms, 
hence
\[
\frac{a_k}{2}\leq C_fA_k + C_g B_k - \big\langle \Pi_{S_k^f}r_k,(f_{k-1},0)\big\rangle_\star.
\]

Apply the polarization identity (Lemma~\ref{lem:A-identity}) to the $f$-history:
\[
\langle \Pi_{S_k^f}r_k,(f_{k-1},0)\rangle_\star
=\frac{1}{2}\left(\|(f_k,0)\|_\star^2-\|(f_{k-1},0)\|_\star^2-\alpha_k^2\right).
\]
Therefore,
\[
\frac{a_k}{2}
\leq C_fA_k + C_g B_k
+\frac12\left(\|(f_{k-1},0)\|_\star^2-\|(f_k,0)\|_\star^2\right)
+\frac12\alpha_k^2 .
\]

Summing over $i=1,\dots,k$ (with $f_0=0$) yields
\begin{equation}\label{eq:sum-f-only}
\sum_{i=1}^k \frac{a_i}{2}
\leq C_f\sum_{i=1}^k A_i + C_g\sum_{i=1}^k B_i
-\frac12\|(f_k,0)\|_\star^2 + \frac12\sum_{i=1}^k \alpha_i^2 .
\end{equation}
Since $\sum_{i=1}^k \alpha_i^2 \leq \sum_{i=1}^k c_i^2
=\sum_{i=1}^k (a_i-a_{i+1}) = a_1-a_{k+1}$, $A_i \leq R_\lambda\alpha_i $, and  $B_i\leq R_\lambda\alpha_i+R_\lambda\beta_i\leq \sqrt{2}R_\lambda c_i$, we get
\[
C_f\sum_{i=1}^k A_i + C_g\sum_{i=1}^k B_i
\leq C_fR_\lambda\sum_{i=1}^k c_i + \sqrt{2}C_gR_\lambda\sum_{i=1}^k c_i
\leq \sqrt{2}(C_f+C_g)R_\lambda\sum_{i=1}^k c_i = \sqrt{2}C_hR_\lambda\sum_{i=1}^k c_i .
\]
Dropping the nonpositive term $-\frac{1}{2}\|(f_k,0)\|_\star^2$ in \eqref{eq:sum-f-only} gives
\[
\sum_{i=1}^k \frac{a_i}{2}
\leq \sqrt{2}C_hR_\lambda\sum_{i=1}^k c_i + \frac{a_1-a_{k+1}}{2},
\]
which implies
\[
\sum_{i=2}^{k+1} a_i \leq 2\sqrt{2}C_hR_\lambda\sum_{i=1}^k c_i.
\]
\end{proof}

We are ready to prove Theorem~\ref{thm:oga-main-1over5}.
\begin{proof}
Set
\[
S_k\coloneq \sum_{i=1}^k a_i,\qquad
H_k\coloneq \sum_{i=1}^k \frac{1}{i}.
\]
We have that for every $k\ge1$,
\begin{equation}\label{eq:oga-hyp}
\sum_{i=2}^{k+1} a_i \leq 2\sqrt{2}C_hR_\lambda \sum_{i=1}^k c_i,
\qquad c_i = \sqrt{a_i - a_{i+1}}.
\end{equation}

\medskip\noindent\textit{Summation by parts (discrete Abel transform).}
For every $k\ge1$,
\begin{equation}\label{eq:sbp}
\sum_{i=1}^k i(a_i-a_{i+1})
= \sum_{i=1}^k ic_i^2
= S_k - ka_{k+1}.
\end{equation}
Indeed,
\[
\sum_{i=1}^k i(a_i-a_{i+1})
=\sum_{i=1}^k\sum_{j=1}^i (a_i-a_{i+1})
=\sum_{j=1}^k\sum_{i=j}^k (a_i-a_{i+1})
=\sum_{j=1}^k (a_j-a_{k+1})
=S_k-ka_{k+1}.
\]

\medskip\noindent\textit{Cauchy--Schwarz with weights and \eqref{eq:sbp}.}
By Cauchy--Schwarz,
\[
\Big(\sum_{i=1}^k c_i\Big)^2
=\Big(\sum_{i=1}^k \frac{1}{\sqrt{i}}\sqrt{i}c_i\Big)^2
\leq \Big(\sum_{i=1}^k \frac{1}{i}\Big)\Big(\sum_{i=1}^k ic_i^2\Big)
= H_k\big(S_k-ka_{k+1}\big)
\leq H_kS_k.
\]
Using \eqref{eq:oga-hyp} and $S_k\leq S_{k+1}$,
\[
S_{k+1}-a_1
=\sum_{i=2}^{k+1} a_i
\leq 2\sqrt{2}C_h R_\lambda\sum_{i=1}^k c_i
\leq 2\sqrt{2}C_h R_\lambda\sqrt{H_kS_{k+1}}.
\]
Then
\[
S_{k+1} \leq a_1 + 2\sqrt{2}C_hR_\lambda\sqrt{H_k S_{k+1}}
\leq a_1 + S_{k+1}/2 + 4 C^2_hR_\lambda^2 H_k,
\]
so $S_{k+1} \leq 2a_1 + 8 C^2_hR_\lambda^2 H_k$. Hence for all $k\ge1$,
\begin{equation}\label{eq:Sk-bound}
S_k \leq 2a_1 + 8C_h^2R_\lambda^2H_k.
\end{equation}

\medskip\noindent\textit{Pointwise rate.}
Since $(a_k)$ is decreasing, $ka_k\leq S_k$. Combining with \eqref{eq:Sk-bound},
\[
a_k \leq \frac{2a_1}{k} + \frac{8C_h^2R_\lambda^2H_k}{k}
\leq C\frac{C_h^2R_\lambda^2\log(k+1)}{k},
\]
for a universal constant $C>0$, completing the proof.
\end{proof}